\theoremstyle{plain}
\newtheorem{theorem}{Theorem}
\newtheorem{proposition}{Proposition}
\newtheorem{corollary}{Corollary}
\theoremstyle{definition}
\newtheorem{definition}{Definition}
\newtheorem{remark}{Remark}
\icmltitlerunning{Towards an Explainable Comparison and Alignment of Feature Embeddings}
\begin{document}

\twocolumn[
\icmltitle{Towards an Explainable Comparison and Alignment of Feature Embeddings}



\icmlsetsymbol{equal}{*}

\begin{icmlauthorlist}
\icmlauthor{Mohammad Jalali}{yyy}
\icmlauthor{Bahar Dibaei Nia}{comp}
\icmlauthor{Farzan Farnia}{yyy}

\end{icmlauthorlist}

\icmlaffiliation{yyy}{The Chinese University of Hong Kong}
\icmlaffiliation{comp}{Sharif University of Technology}

\icmlcorrespondingauthor{Mohammad Jalali}{mjalali24@cse.cuhk.edu.hk}
\icmlcorrespondingauthor{Bahar Dibaei Nia}{bahar.dibaeinia@sharif.edu}
\icmlcorrespondingauthor{Farzan Farnia}{farnia@cse.cuhk.edu.hk}

\icmlkeywords{Machine Learning, ICML}

\vskip 0.3in
]



\printAffiliationsAndNotice{}  

\begin{abstract}
While several feature embedding models have been developed in the literature, comparisons of these embeddings have largely focused on their numerical performance in classification-related downstream applications. However, an interpretable comparison of different embeddings requires identifying and analyzing mismatches between sample groups clustered within the embedding spaces. In this work, we propose the \emph{Spectral Pairwise Embedding Comparison (SPEC)} framework to compare embeddings and identify their differences in clustering a reference dataset. Our approach examines the kernel matrices derived from two embeddings and leverages the eigendecomposition of the kernel difference matrix to detect sample clusters that are captured differently by the two embeddings. We present a scalable implementation of this kernel-based approach, with computational complexity that grows linearly with the sample size. Furthermore, we introduce an optimization problem using this framework to align two embeddings,  ensuring that clusters identified in one embedding are also captured in the other model. We provide numerical results demonstrating the SPEC's application to compare and align embeddings on large-scale datasets such as ImageNet and MS-COCO. The project page is available at \href{https://mjalali.github.io/SPEC/}{https://mjalali.github.io/SPEC/}.

\end{abstract}

\section{Introduction}
\label{sec:intro}
Several mainstream frameworks in computer vision and natural language processing rely on embedding models to map raw image and text inputs into spaces with semantically meaningful features \cite{clip,liu2023llava,Alayrac2022FlamingoAV,li2023blip2,dinov2}. The application of pre-trained embeddings has enabled scalable solutions for many downstream tasks, particularly in scenarios where the available sample size and compute resources are significantly limited. In such cases, the embedded data can be used to train simple models, such as linear or k-nearest neighbors (KNN) classifiers, to achieve satisfactory results. Additionally, features extracted by standard embedding models are widely employed for the automated evaluation of generative models \cite{heusel2017gans,hessel2021clipscore,stein2023exposing,ospanov2025scendi}, providing accurate rankings of generative modeling architectures without requiring time-intensive human assessments.

While recent advancements in the machine learning community have introduced various embedding models that achieve remarkable results on standard image, text, and video domains, comparisons of these embeddings have primarily focused on evaluating their performance in standard downstream tasks, such as classification accuracy on benchmark datasets (e.g. ImageNet). However, such comparisons often lack interpretability and do not reveal how differently the embeddings behave in recognizing various sample types. A more fine-grained comparison is necessary to disclose explainable differences between embedding models, particularly in identifying which samples are clustered differently according to the models. Understanding these differences can aid in interpreting and debugging embeddings and can also be leveraged to align multiple embeddings. Furthermore, interpreting the discrepancies of embeddings can be used to select representation models for downstream tasks.

\begin{figure*}
    \centering
    \includegraphics[width=0.95\linewidth]{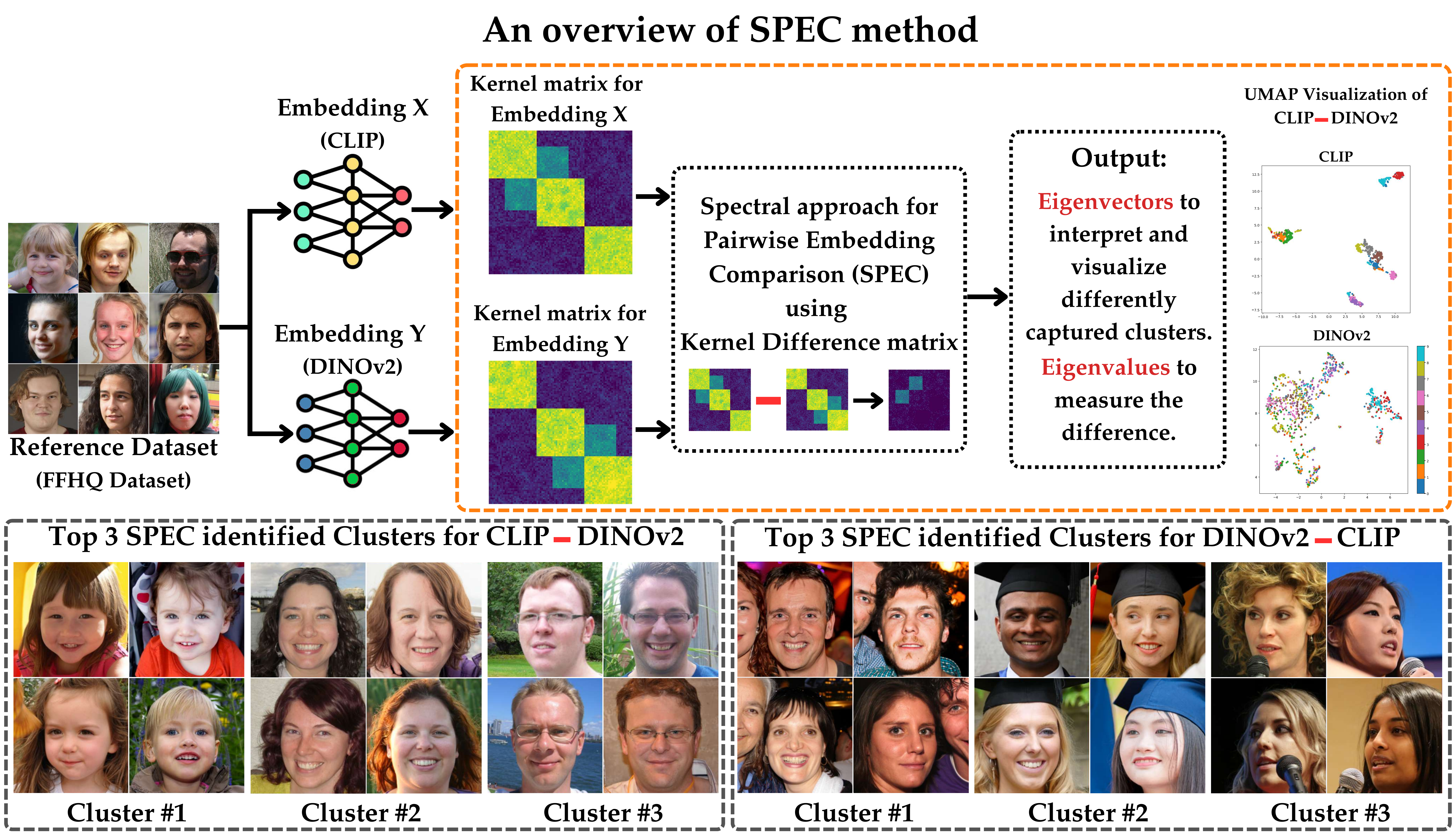}
    \caption{Overview of the Spectral Pairwise Embedding Comparison (SPEC) framework: The SPEC performs an eigendecomposition of the difference of kernel matrices following the two compared embeddings (e.g., DINOv2 and CLIP image embeddings) on a given reference dataset. Every eigenvector can be interpreted as a differently captured sample cluster by the embeddings, and the corresponding eigenvalue quantifies the difference between the cluster frequencies in the embedding spaces.
    }
    \label{fig:spec_method}
\end{figure*}

In this work, we propose a spectral approach called \emph{Spectral Pairwise Embedding Comparison (SPEC)} for the fine-grained comparison of two embeddings. The SPEC framework detects differences in sample clusters assigned by two embeddings, identifying major data groups that are clustered differently by one embedding compared to the other. To achieve this, we extend the well-established kernel principal component analysis (Kernel-PCA) approach \cite{scholkopf_nonlinear_1998}, that leverages the eigendecomposition of the kernel similarity matrix, and propose analyzing the principal eigenvectors of the difference of embeddings' kernel matrices to interpret the differences in cluster assignments. Our analysis suggests that the SPEC framework could effectively detect cluster differences between two embeddings, particularly when the clusters can be separated using the principal eigendirections of the kernel matrices.

To address the computational challenges of performing eigendecomposition on large-scale datasets, we develop a scalable implementation of SPEC. A direct eigendecomposition of the $n\times n$  kernel difference matrix requires $O(n^3)$ computations for a dataset with $n$ samples, which is computationally expensive for large datasets. Assuming a bounded feature dimension $d$ for the applied kernel function, we prove that the eigenspace of the kernel difference  matrix can be computed using $O\bigl(d^2n+d^3\bigr)$ operations, resulting in a scalable algorithm under a moderate dimension $d$ value. Furthermore, we extend this scalable computation method to shift-invariant kernel functions, e.g. the Gaussian (RBF) kernel, by employing the framework of random Fourier features (RFF) \cite{rahimi2007random}, where the size of RFF proxy-feature map can be controlled for a more efficient application of SPEC.

We also explore the application of the SPEC framework to define a distance measure between two embeddings. We define the \emph{$\mathrm{SPEC}$-$\mathrm{diff}$} distance  as the spectral radius of the kernel difference matrix, which aims to quantify the weight of the most differently captured cluster in one embedding that is not strongly clustered by the other model. We discuss scalable computations of this distance and its gradient with respect to the embedding parameters. Using the power method and the calculated left and right eigenvectors of the differential covariance matrix, we enable gradient-based optimization of the distance measure for aligning embedding models. This gradient-based approach leads to a method we call \emph{SPEC-align}, aligning embedding models by minimizing their differences in clustering a reference dataset. SPEC-align is particularly useful for aligning cross-modality embeddings, such as CLIP \cite{clip}, with  a state-of-the-art single-modality embedding. The alignment can improve the performance of cross-modality embeddings in capturing concepts specific to individual modalities.

Finally, we present numerical experiments on several standard image and text embeddings using benchmark datasets. Our results demonstrate the scalability of the SPEC framework in revealing differences in sample clusters across embeddings over large-scale datasets. In our experiments, we tested the SPEC algorithm's application with both cosine similarity and shift-invariant Gaussian kernels, where we leverage random Fourier features for the latter case. Additionally, we discuss the application of SPEC-align to align the CLIP model with single-modality embeddings. The empirical results highlight the effectiveness of SPEC-align in reducing the differences between CLIP’s image embeddings and specialized image-domain embeddings. The following is a summary of our work's main contributions:
\begin{itemize}[leftmargin=*]
    \item Proposing the SPEC framework for explainable comparison of two embeddings,
    \item Providing a scalable SPEC implementation with linearly growing computational cost to the sample size,
    \item Developing the gradient-based SPEC-align method to align two embeddings and matching their sample clusters,
    \item Demonstrating the successful application of SPEC in comparing and aligning embeddings on benchmark datasets.
\end{itemize}

\section{Related Work}
\label{sec:related}
\textbf{Spectral Clustering, Kernel PCA, and Random Fourier Features}. Kernel PCA \cite{scholkopf_nonlinear_1998} is a widely established method for dimensionality reduction that relies on the eigendecomposition of the kernel matrix. Several studies \cite{bengio2003learning,bengio2003spectral} have explored the relationship between kernel PCA and spectral clustering. Also, the analysis of random Fourier features \cite{rahimi2007random} for performing scalable kernel PCA has been studied by
\citet{chitta2012efficient,ghashami2016streaming,ullah2018streaming,sriperumbudur2022approximate,gedon2023invertible}. In our work, we introduce a spectral approach for comparing two embeddings, leveraging the random Fourier features framework to address computational challenges.

\textbf{Evaluation and Comparison of Embeddings.}
Embedding evaluation is typically conducted using a limited set of downstream tasks \citep{chen2013expressive, santos2020word, Perone2018EvaluationOS, choi2021evaluationbert}.
Existing NLP benchmarks \citep{gao2021simcse,gurevych2019sentence} focus on limited tasks. \citet{muennighof2023mteb} introduces MTEB, standardizing text embedder evaluation across diverse NLP tasks.
In Image embeddings, \citet{kynkanniemi2023the, stein2023exposing} compared different image embeddings and showed how they can influence different tasks, specifically the evaluation of generative models.
Another line of research is probing methods \citep{Belinkov2022probing, pimentel2020information, adi2017finegrained, Rogers2021}, which analyze model embeddings by training small models on them to understand what information is encoded. These methods help assess how well embeddings capture specific features, although they are not focused on embedding comparison.

\citet{darrin2024when} propose a new metric for comparing embeddings without labeled data and propose the concept of information sufficiency (IS) to quantify the required information to simulate one embedding from another. Our work offers a complementary, explainable method for comparing embeddings by detecting different sample clusters assigned by embeddings and providing a method for aligning them. Also, \citet{osti_2481996} discuss a spectral approach to the comparison of embeddings, where they consider the PCA-principal components of the collective embedded vectors of two representations. On the other hand, our work considers the kernel difference matrix, which better suits the comparison of embeddings in different dimensions and leads to a distance metric for their alignment.   

A different yet related line of work is the evaluation of generative models. \cite{binkowski2018demystifying,jalali2023information, ospanov2024towards, jalali2024cond-vendi,ospanov2024statvendi,wang2023distributed,hu2025multiarmedbanditapproachonline,rezaei2025diversediverseoptimalmixtures,hu2025online,hu2025promptwiseonlinelearningcostaware} leverage the eigenspectrum of kernel matrices to quantify diversity. The papers \cite{jiralerspong2023feature, zhang2024interpretable} explore novelty evaluation, analyzing how generated samples differ from those of a reference model. In particular, \cite{zhang2024interpretable, zhang2025finc} propose a spectral method for measuring the entropy of the novel modes of a generative model with respect to a reference model.

\textbf{Embeddings Alignment.}
There are many works on embedding alignment for multimodal models \cite{Bellagente_multifusion, lu2024ovis, han2023onellm, wang2023connecting, girdhar2023imagebind, grave19a}. \citet{unaligning, eslami2025mitigate} introduced a method, which demonstrates that adversarial perturbations can force text embeddings to align with any image in multimodal models, exposing security vulnerabilities in vision-language learning. \citet{chorus} proposed ModalChorus, an interactive system that visualizes and corrects misalignments in multi-modal embeddings, improving interpretability and optimization. Focusing on fine-grained alignment, \citet{tokenvlm} introduced a method for explicitly aligning individual word embeddings with corresponding visual features, leveraging cross-modal attention to refine token-image associations. In contrast, our work focuses on aligning embeddings in a kernel setting to match their sample clusters.

\section{Preliminaries}
\label{sec:prelim}

\subsection{Embedding maps and spaces}
Consider a data vector $x\in\mathcal{X}$ in the space $\mathcal{X}$. An embedding map $\psi:\mathcal{X} \rightarrow \mathcal{S}$ maps an input $x$ to the embedding space $\mathcal{S}$, which is supposed to provide a more meaningful representation of the input data vector. Throughout this work, we focus on the problem of characterizing and interpreting the differences of two embedding maps $\psi_1:\mathcal{X} \rightarrow \mathcal{S}_1$ and $\psi_2:\mathcal{X} \rightarrow \mathcal{S}_2$, which can map the input $x\in\mathcal{X}$ to different embedding spaces $\mathcal{S}_1,\mathcal{S}_2$. 

\subsection{Kernel Functions and Covariance Matrix}
A kernel function $k:\mathcal{X}\times\mathcal{X}\rightarrow \mathbb{R}$ maps two inputs $x,x'$ to a similarity score $k(x,x')=\langle \phi(x) ,\phi(x')\rangle \in [0, 1]$ that is the inner product of the representation of $x,x'$ characterized by $\phi:\mathcal{X} \rightarrow \mathbb{R}^d$. This definition implies that for every sequence of data points $x_1,\ldots ,x_n\in\mathcal{X}$, the following kernel matrix is positive semi-definite (PSD):
\begin{equation}\label{Eq: Kernel Matrix}
    K = \begin{bmatrix} k(x_1,x_1) & \cdots & k(x_1,x_n) \\ \vdots & \ddots & \vdots \\ k(x_n,x_1) & \cdots & k(x_n,x_n)
    \end{bmatrix} \succeq \mathbf{0}
\end{equation}
Well-known examples of kernel functions include the cosine-similarity kernel $k_{\text{cosine}}(x,y)= \frac{x^\top y}{\Vert x\Vert_2\Vert y\Vert_2}$ and the Gaussian (RBF) kernel defined for bandwidth $\sigma$ as $$
    k_{\text{Gaussian}(\sigma^2)}(x,y) = \exp\Bigl(\frac{-\Vert x-y\Vert^2_2}{2\sigma^2}\Bigr) $$
Both these examples are normalized kernels where $k(x,x)=1$ holds for every $x\in\mathcal{X}$. Note that the kernel matrix in \eqref{Eq: Kernel Matrix} can be written as $K=\Phi\Phi^\top$ where $\Phi\in\mathbb{R}^{n\times d}$ contains $\phi(x_i)$ as its $i$th row for $i\in\{1,\ldots ,n\}$. Then, the kernel covariance matrix $C_X \in\mathbb{R}^{d\times d}$ can be defined by reversing the matrix multiplication order as:
\begin{equation}
    C_X\, := \, \frac{1}{n}\Phi^\top \Phi \, =\, \frac{1}{n}\sum_{i=1}^n \phi(x_i)\phi(x_i)^\top
\end{equation}
Therefore, $C_X= \frac{1}{n}\Phi^\top \Phi$ and $\frac{1}{n}K=\frac{1}{n}\Phi \Phi^\top$ share the same non-zero eigenvalues, since they represent the products of matrices with flipped multiplication orders.

\section{SPEC: A Spectral Identification of Embeddings' Mismatches}
Consider a set of $n$ data points $x_1,\ldots ,x_n\in\mathcal{X}$ and two embedding maps $\psi_1: \mathcal{X}\rightarrow \mathcal{S}_1$ and $\psi_2: \mathcal{X}\rightarrow \mathcal{S}_2$. Also, suppose $k_1:\mathcal{S}_1\times \mathcal{S}_1\rightarrow \mathbb{R}$ and $k_2:\mathcal{S}_2\times \mathcal{S}_2\rightarrow \mathbb{R}$ are kernel functions to be applied to the embedding spaces for $\psi_1,\, \psi_2$, respectively.

To compare the two embeddings, note that their corresponding spaces $\mathcal{S}_1$ and $\mathcal{S}_2$ may have different dimensions, and therefore a sample-specific comparison of the embedded vectors $\psi_1(x),\, \psi_2(x)$ for each individual data point $x$ will not provide a meaningful comparison of the embedding maps. Therefore, a more relevant approach is to compare the embeddings' outputs over the entire set of data $\{x_1,\ldots , x_n\}$ and investigate which structures are dissimilar between the sets of embedded data following the embeddings. Here, we consider a spectral approach and particularly focus on the difference of kernel matrices between the two embeddings. In the following, we discuss how the eigenspace of the kernel difference matrix can help identify the differently clustered points by the two embeddings. In the following, we denote the kernel matrix of the first embedding with $K_{\psi_1} = \bigl[k_1(\psi_1(x_i), \psi_1(x_j))\bigr]_{(i,j)=(1,1)}^{(n,n)}$ and the kernel matrix of the second embedding with $K_{\psi_2} = \bigl[k_2(\psi_2(x_i), \psi_2(x_j))\bigr]_{(i,j)=(1,1)}^{(n,n)}$. 
\begin{definition}
We define the normalized kernel difference matrix $\Lambda_{\psi_1,\psi_2}\in\mathbb{R}^n$ as follows:
\begin{equation}\label{Eq: Kernel difference definition}
    \Lambda_{\psi_1,\psi_2} \, :=\, \frac{1}{n}\Bigl(K_{\psi_1} - K_{\psi_2}\Bigr) 
\end{equation}
\end{definition}
We propose the framework of \emph{Spectral Pairwise Embedding Comparison (SPEC)} where the two embeddings $\psi_1$ and $\psi_2$ are compared using the eigendirections of the kernel difference matrix $\Lambda_{\psi_1,\psi_2}$. As we will show, the principal eigenvectors can be interpreted as the clusters of samples assigned by embedding $\psi_1$ that are less strongly grouped by the second embedding $\psi_2$. In what follows, we first show a theoretical result supporting the mentioned property of  $\Lambda_{\psi_1,\psi_2}$'s eigenvectors. Next, we provide a scalable computation method for computing the eigenspace of $\Lambda_{\psi_1,\psi_2}$ that linearly scales with the sample size $n$.

Theorem~\ref{Thm: 1} proves that under the following two conditions on the sample index set $\mathcal{I}\subset \{1,\ldots,n\}$, the eigndirections of $\Lambda_{\psi_1,\psi_2}$ can separate the clustered sample indices from the rest of samples. Note that the notation $\mathcal{I}^c$ denotes the complement index set of $\mathcal{I}$, and $K[\mathcal{I},\mathcal{J}]$ denotes the sub-matrix of $K$ with rows in $\mathcal{I}$ and columns in $\mathcal{J}$.\vspace{-2mm}
\begin{itemize}[leftmargin=*]
    \item \textbf{Condition 1}: Suppose the sample set $X_\mathcal{I}$ characterized by index set $\mathcal{I}$ are separated from the rest of samples by embedding $\psi_1$, where the normalized block kernel matrix $\frac{1}{n}K_{\psi_1}[\mathcal{I}, \mathcal{I}^c]$ has an $\epsilon_1$-bounded Frobenius norm:\vspace{-2mm} $$\Bigl\Vert \frac{1}{n} K_{\psi_1}\bigl[\mathcal{I}, \mathcal{I}^c\bigr] \Bigr\Vert_F \le \epsilon_1$$
    \item \textbf{Condition 2}: Suppose the sample set $X_\mathcal{I}$ characterized by index set $\mathcal{I}$ are weakly grouped by embedding $\psi_2$, where the normalized block kernel matrix $\frac{1}{n}K_{\psi_2}[\mathcal{I}, \mathcal{I}]$ has an $\epsilon_2$-bounded $\ell_2$-operator norm (which is the maximum eigenvalue for this PSD matrix)\vspace{-2mm} $$\Bigl\Vert \frac{1}{n} K_{\psi_2}\bigl[\mathcal{I}, \mathcal{I}\bigr] \Bigr\Vert_{2} \le \epsilon_2$$ 
\end{itemize}
\begin{theorem}\label{Thm: 1}
Consider the kernel difference  matrix $\Lambda_{\psi_1,\psi_2}$ in \eqref{Eq: Kernel difference definition}. Suppose Conditions 1 and 2 hold. Let $\mathbf{v}_1,\ldots ,\mathbf{v}_n$ be the unit-norm eigenvectors of $\Lambda_{\psi_1,\psi_2}$ corresponding to eigenvalues $\lambda_1 , \ldots ,\lambda_n$. For every $i\in\{1,\ldots , n\}$, we define $\lambda^{\mathcal{I}}_i$ and $\lambda^{\mathcal{I}^c}_i$ to be the closest eigenvalue of $\Lambda_{\psi_1,\psi_2}[\mathcal{I},\mathcal{I}]$ and $\Lambda_{\psi_1,\psi_2}[\mathcal{I}^c,\mathcal{I}^c]$ to $\lambda_i$. Then, the following holds for $\xi = 4(\epsilon_1^2+\epsilon_2)$:
\begin{align*}
    &\sum_{i=1}^n \bigl(\lambda_i - \lambda^{\mathcal{I}}_i\bigr)^2 \bigl\Vert \mathbf{v}_i[\mathcal{I}]\bigr\Vert^2_2 + \bigl(\lambda_i - \lambda^{\mathcal{I}^c}_i\bigr)^2\bigl\Vert \mathbf{v}_i[\mathcal{I}^c]\bigr\Vert^2_2 
    \le  \xi
\end{align*}
\end{theorem}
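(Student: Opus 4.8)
The plan is to exploit that $\Lambda := \Lambda_{\psi_1,\psi_2}$ is block-diagonal up to its off-diagonal blocks with respect to the partition $(\mathcal{I},\mathcal{I}^c)$, so its eigenvectors must approximately localize on one block unless the corresponding eigenvalue lies near that block's spectrum. First I would write the eigen-equation $\Lambda\mathbf{v}_i=\lambda_i\mathbf{v}_i$ and restrict it to the rows indexed by $\mathcal{I}$, which gives
\[
\bigl(\lambda_i I - \Lambda[\mathcal{I},\mathcal{I}]\bigr)\,\mathbf{v}_i[\mathcal{I}] \,=\, \Lambda[\mathcal{I},\mathcal{I}^c]\,\mathbf{v}_i[\mathcal{I}^c].
\]
Since $\Lambda[\mathcal{I},\mathcal{I}]$ is symmetric, the smallest singular value of $\lambda_i I - \Lambda[\mathcal{I},\mathcal{I}]$ equals $\min_j|\lambda_i-\mu_j| = |\lambda_i-\lambda^{\mathcal{I}}_i|$, where $\mu_j$ range over the eigenvalues of $\Lambda[\mathcal{I},\mathcal{I}]$ and $\lambda^{\mathcal{I}}_i$ is the closest one to $\lambda_i$. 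This yields the key per-eigenvector inequality $(\lambda_i-\lambda^{\mathcal{I}}_i)^2\Vert\mathbf{v}_i[\mathcal{I}]\Vert_2^2 \le \Vert\Lambda[\mathcal{I},\mathcal{I}^c]\,\mathbf{v}_i[\mathcal{I}^c]\Vert_2^2$, and symmetrically $(\lambda_i-\lambda^{\mathcal{I}^c}_i)^2\Vert\mathbf{v}_i[\mathcal{I}^c]\Vert_2^2 \le \Vert\Lambda[\mathcal{I}^c,\mathcal{I}]\,\mathbf{v}_i[\mathcal{I}]\Vert_2^2$ by restricting instead to the rows in $\mathcal{I}^c$.

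Next I would sum these over $i$ and collapse the right-hand sides into a single Frobenius norm. Collecting the vectors $\mathbf{v}_i[\mathcal{I}^c]$ as the columns of the row-submatrix $V[\mathcal{I}^c,:]$ of the orthogonal eigenvector matrix $V$, the identity $VV^\top=I$ forces $V[\mathcal{I}^c,:]\,V[\mathcal{I}^c,:]^\top = I$, so $\Vert V[\mathcal{I}^c,:]\Vert_2 = 1$. Applying $\Vert AB\Vert_F \le \Vert A\Vert_F\Vert B\Vert_2$ then gives $\sum_i\Vert\Lambda[\mathcal{I},\mathcal{I}^c]\mathbf{v}_i[\mathcal{I}^c]\Vert_2^2 = \Vert\Lambda[\mathcal{I},\mathcal{I}^c]\,V[\mathcal{I}^c,:]\Vert_F^2 \le \Vert\Lambda[\mathcal{I},\mathcal{I}^c]\Vert_F^2$, and the same bound holds for the $\mathcal{I}^c$ sum using $\Lambda[\mathcal{I}^c,\mathcal{I}]=\Lambda[\mathcal{I},\mathcal{I}^c]^\top$. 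This reduces the entire target quantity to $2\Vert\Lambda[\mathcal{I},\mathcal{I}^c]\Vert_F^2$.

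It then remains to bound the off-diagonal block $\Lambda[\mathcal{I},\mathcal{I}^c] = \tfrac{1}{n}\bigl(K_{\psi_1}[\mathcal{I},\mathcal{I}^c]-K_{\psi_2}[\mathcal{I},\mathcal{I}^c]\bigr)$. The triangle inequality splits this into a $\psi_1$ term bounded by $\epsilon_1$ directly from Condition 1, and a $\psi_2$ term. The hard part is the $\psi_2$ term, since Condition 2 controls the diagonal block $K_{\psi_2}[\mathcal{I},\mathcal{I}]$ rather than the off-diagonal block that actually appears in $\Lambda$. To transfer between them I would invoke the PSD factorization $K_{\psi_2}=\Phi_2\Phi_2^\top$ guaranteed by \eqref{Eq: Kernel Matrix}, so that $K_{\psi_2}[\mathcal{I},\mathcal{I}^c]=\Phi_2[\mathcal{I},:]\,\Phi_2[\mathcal{I}^c,:]^\top$; the bound $\Vert AB\Vert_F\le\Vert A\Vert_2\Vert B\Vert_F$ gives $\Vert K_{\psi_2}[\mathcal{I},\mathcal{I}^c]\Vert_F \le \Vert\Phi_2[\mathcal{I},:]\Vert_2\,\Vert\Phi_2[\mathcal{I}^c,:]\Vert_F$. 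The first factor equals $\sqrt{\Vert K_{\psi_2}[\mathcal{I},\mathcal{I}]\Vert_2}\le\sqrt{n\epsilon_2}$ by Condition 2, while the normalized-kernel property (unit diagonal) gives $\Vert\Phi_2[\mathcal{I}^c,:]\Vert_F^2=\sum_{i\in\mathcal{I}^c}k_2\bigl(\psi_2(x_i),\psi_2(x_i)\bigr)=|\mathcal{I}^c|\le n$, so $\tfrac{1}{n}\Vert K_{\psi_2}[\mathcal{I},\mathcal{I}^c]\Vert_F\le\sqrt{\epsilon_2}$.

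Combining the two terms yields $\Vert\Lambda[\mathcal{I},\mathcal{I}^c]\Vert_F\le\epsilon_1+\sqrt{\epsilon_2}$, and the target is at most $2(\epsilon_1+\sqrt{\epsilon_2})^2\le 4(\epsilon_1^2+\epsilon_2)=\xi$ via $(a+b)^2\le 2(a^2+b^2)$, which closes the argument. The main obstacle is precisely the step converting the diagonal-block assumption of Condition 2 into a bound on the off-diagonal block appearing in $\Lambda$; this is where the PSD structure and the unit-norm feature vectors of a normalized kernel are essential, and without them Condition 2 alone would not control $\Vert\Lambda[\mathcal{I},\mathcal{I}^c]\Vert_F$.
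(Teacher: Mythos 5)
Your proof is correct, and its overall skeleton matches the paper's: both arguments reduce the claimed sum to $2\Vert\Lambda_{\psi_1,\psi_2}[\mathcal{I},\mathcal{I}^c]\Vert_F^2$ and then show this off-diagonal block has squared Frobenius norm at most $2(\epsilon_1^2+\epsilon_2)$. Your spectral half (restricting the eigen-equation to each block, lower-bounding via the smallest singular value of $\lambda_i I - \Lambda[\mathcal{I},\mathcal{I}]$, and summing with $\Vert AB\Vert_F\le\Vert A\Vert_F\Vert B\Vert_2$ applied to the row-submatrix of the orthogonal eigenvector matrix) is algebraically the same computation the paper performs by comparing $\Lambda_{\psi_1,\psi_2}\mathbf{v}_i$ with $\widetilde{\Lambda}_{\psi_1,\psi_2}\mathbf{v}_i$ for the block-diagonal truncation $\widetilde{\Lambda}_{\psi_1,\psi_2}$ and collapsing the sum via $\sum_i\mathbf{v}_i\mathbf{v}_i^\top=I$. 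Where you genuinely diverge is the transfer of Condition 2 from the diagonal block to the off-diagonal block of $K_{\psi_2}$: the paper argues through the Schur complement, using that $K^{(2)}_{22}-{K^{(2)}_{12}}^\top{K^{(2)}_{11}}^{-1}K^{(2)}_{12}\succeq\mathbf{0}$ has non-negative trace together with $\mathrm{Tr}(\frac{1}{n}K_{\psi_2})=1$, whereas you use the Gram factorization $K_{\psi_2}=\Phi_2\Phi_2^\top$ with $\Vert AB\Vert_F\le\Vert A\Vert_2\Vert B\Vert_F$, $\Vert\Phi_2[\mathcal{I},:]\Vert_2^2=\Vert K_{\psi_2}[\mathcal{I},\mathcal{I}]\Vert_2$, and the unit-diagonal normalization $\Vert\Phi_2[\mathcal{I}^c,:]\Vert_F^2=|\mathcal{I}^c|$. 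Both routes prove exactly the bound $\Vert K^{(2)}_{12}\Vert_F^2\le\Vert K^{(2)}_{11}\Vert_2\,\mathrm{Tr}(K^{(2)}_{22})\le\epsilon_2$ (in the paper's normalized notation), but your factorization argument has a small advantage: the paper's Schur-complement step as written presumes $K^{(2)}_{11}$ is invertible (otherwise one must pass to a generalized Schur complement with a pseudo-inverse), while your argument needs no such assumption, since every PSD matrix admits a Gram factorization. The final combination also differs cosmetically — triangle inequality then $(a+b)^2\le 2(a^2+b^2)$ for you, versus Young's inequality directly on the squares in the paper — yielding the identical constant $\xi=4(\epsilon_1^2+\epsilon_2)$.
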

\begin{proof}
We defer the proof of the theoretical statements to the Appendix~\ref{Thm_1_proof}.
\end{proof}
\begin{corollary}\label{Corollary:1}
In the setting of Theorem~\ref{Thm: 1}, suppose $\mathbf{v}$ is an eigenvector of $\Lambda_{\psi_1,\psi_2}$ for eigenvalue $\lambda$ whose gap with the maximum eigenvalue of the sub-matrix $\Lambda_{\psi_1,\psi_2}[\mathcal{I}^c,\mathcal{I}^c]$ satisfies $\lambda - \lambda_{\max}(\Lambda_{\psi_1,\psi_2}[\mathcal{I}^c,\mathcal{I}^c]) \ge \gamma >0 $. Then,
\begin{align*}
    & \Bigl\Vert \mathbf{v}[\mathcal{I}^c]\Bigr\Vert_2 \le \frac{2\sqrt{\epsilon^2_1 + \epsilon_2}}{\gamma}.
\end{align*}
\end{corollary}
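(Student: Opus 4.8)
The plan is to obtain the corollary as a direct specialization of Theorem~\ref{Thm: 1} to the single eigenvector $\mathbf{v}$ under consideration. First I would observe that the left-hand side of the Theorem~\ref{Thm: 1} bound is a sum of nonnegative terms over all eigenvectors $\mathbf{v}_1,\ldots,\mathbf{v}_n$. Writing $\mathbf{v}=\mathbf{v}_j$ for the index $j$ with $\lambda_j=\lambda$, I would keep only the $j$th summand and, within it, discard the nonnegative $\mathcal{I}$-block contribution $(\lambda-\lambda_j^{\mathcal{I}})^2\Vert\mathbf{v}[\mathcal{I}]\Vert_2^2$. This leaves the single inequality $(\lambda-\lambda^{\mathcal{I}^c})^2\,\Vert\mathbf{v}[\mathcal{I}^c]\Vert_2^2\le \xi = 4(\epsilon_1^2+\epsilon_2)$, where $\lambda^{\mathcal{I}^c}$ is the eigenvalue of $\Lambda_{\psi_1,\psi_2}[\mathcal{I}^c,\mathcal{I}^c]$ closest to $\lambda$.

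The key step is to convert the gap hypothesis into a lower bound on $|\lambda-\lambda^{\mathcal{I}^c}|$. Since by assumption $\lambda\ge \lambda_{\max}(\Lambda_{\psi_1,\psi_2}[\mathcal{I}^c,\mathcal{I}^c])+\gamma$, the value $\lambda$ strictly exceeds every eigenvalue of the sub-matrix $\Lambda_{\psi_1,\psi_2}[\mathcal{I}^c,\mathcal{I}^c]$. Hence the eigenvalue of that sub-matrix nearest to $\lambda$ is precisely its largest one, i.e.\ $\lambda^{\mathcal{I}^c}=\lambda_{\max}(\Lambda_{\psi_1,\psi_2}[\mathcal{I}^c,\mathcal{I}^c])$, and consequently $\lambda-\lambda^{\mathcal{I}^c}=\lambda-\lambda_{\max}(\Lambda_{\psi_1,\psi_2}[\mathcal{I}^c,\mathcal{I}^c])\ge\gamma$.

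Finally I would substitute this into the isolated inequality to get $\gamma^2\,\Vert\mathbf{v}[\mathcal{I}^c]\Vert_2^2\le(\lambda-\lambda^{\mathcal{I}^c})^2\,\Vert\mathbf{v}[\mathcal{I}^c]\Vert_2^2\le 4(\epsilon_1^2+\epsilon_2)$, and take square roots to obtain $\Vert\mathbf{v}[\mathcal{I}^c]\Vert_2\le \frac{2\sqrt{\epsilon_1^2+\epsilon_2}}{\gamma}$, which is the claimed bound. I do not anticipate a genuine obstacle here, since the argument is essentially a one-line deduction from Theorem~\ref{Thm: 1}; the only point requiring care is the identification that, because $\lambda$ lies above the entire spectrum of the $\mathcal{I}^c$-block, the closest eigenvalue $\lambda^{\mathcal{I}^c}$ coincides with $\lambda_{\max}$. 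This is exactly what links the generic quantity $\lambda^{\mathcal{I}^c}$ appearing in Theorem~\ref{Thm: 1} to the explicit spectral gap $\gamma$ stated in the corollary.
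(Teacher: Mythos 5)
Your proposal is correct and coincides with the paper's intended derivation: the paper gives no separate proof of Corollary~\ref{Corollary:1}, treating it as an immediate consequence of Theorem~\ref{Thm: 1}, and your argument—isolating the single summand for $\mathbf{v}$, discarding the nonnegative $\mathcal{I}$-block term, and noting that the gap hypothesis forces $|\lambda-\lambda^{\mathcal{I}^c}|=\lambda-\lambda_{\max}(\Lambda_{\psi_1,\psi_2}[\mathcal{I}^c,\mathcal{I}^c])\ge\gamma$—is exactly that specialization, with the constant $\sqrt{\xi}=2\sqrt{\epsilon_1^2+\epsilon_2}$ matching the stated bound.
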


The above corollary proves that if an eigenvalue $\lambda$ of the kernel difference matrix $\Lambda_{\psi_1,\psi_2}$ is sufficiently large, such that its gap with the maximum eigenvalue of the block $\Lambda_{\psi_1,\psi_2}[\mathcal{I}^c,\mathcal{I}^c]$ (with the complement of samples in $\mathcal{I}$ clustered by $\psi_1$ yet not by $\psi_2$) is higher than the threshold $\lambda$, then the $\mathcal{I}^c$-entries of the corresponding unit-norm eigenvector $\mathbf{v}$ will be bounded, reflecting the lack of $\mathcal{I}^c$ samples in the differentially clustered samples by embedding $\psi_1$ and $\psi_2$. Based on the above theoretical results, we propose considering the principal eigenvectors of the kernel difference matrix, and using their significant-value entries to find the subset of samples clustered by embedding $\psi_1$ but not grouped by $\psi_2$.

Since the  kernel difference matrix is of size $n\times n$, a standard eigendecomposition will cost $O(n^3)$ computations. Proposition~\ref{Proposition: compute eignevctors} shows that the computation cost will be lower for embeddings with bounded feature maps. In fact, this result shows the computation of the eigenspace can be performed using linearly growing computation cost $O(n)$.  
\begin{proposition}\label{Proposition: compute eignevctors}
Consider the  kernel difference matrix $\Lambda_{\psi_1,\psi_2}$ in \eqref{Eq: Kernel difference definition}. This matrix shares the same non-zero eigenvalues with the following matrix:
\begin{equation}\label{Eq: Differential Covarainace Matrix}
    \Gamma_{\psi_1,\psi_2} = \begin{bmatrix} C\strut^{}_{\hspace{-0.5mm}\psi_1} & C\strut^{}_{\hspace{-0.5mm}\psi_1,\psi_2}\vspace{2mm} \\  -C\strut^{\top}_{\hspace{-0.5mm}\psi_1,\psi_2} & -C\strut^{}_{\hspace{-0.5mm}\psi_2}\end{bmatrix} \in\mathbb{R}^{(d_1+d_2)\times (d_1+d_2)}
\end{equation}
where $C_{\psi_1}\in\mathbb{R}^{d_1\times d_1}, C_{\psi_2}\in\mathbb{R}^{d_2\times d_2}$ are the kernel covariance matrices of $\psi_1,\psi_2$, respectively, and $C_{\psi_1,\psi_2} \in\mathbb{R}^{d_1\times d_2}$ is the cross-covariance matrix, defined as: 
\begin{align*}
    C_{\psi_1}\, :=&\, \frac{1}{n}\sum_{i=1}^n \phi_1\bigl(\psi_1(\mathbf{x}_i)\bigr)\phi_1\bigl(\psi_1(\mathbf{x}_i)\bigr)^\top, \\
    C_{\psi_2}\, :=&\, \frac{1}{n}\sum_{i=1}^n \phi_2\bigl(\psi_2(\mathbf{x}_i)\bigr)\phi_2\bigl(\psi_2(\mathbf{x}_i)\bigr)^\top,\\
     C_{\psi_1,\psi_2} \, :=& \, \frac{1}{n}\sum_{i=1}^n \phi_1\bigl(\psi_1(\mathbf{x}_i)\bigr)\phi_2\bigl(\psi_2(\mathbf{x}_i)\bigr)^\top.
\end{align*}
\end{proposition}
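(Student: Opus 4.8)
The plan is to reduce the claim to the standard linear-algebra fact that for conformable matrices $A$ and $B$, the two products $AB$ and $BA$ share the same nonzero eigenvalues with matching multiplicities. First I would collect the feature maps into two matrices: let $\Phi_1\in\mathbb{R}^{n\times d_1}$ have $i$th row $\phi_1(\psi_1(x_i))^\top$ and $\Phi_2\in\mathbb{R}^{n\times d_2}$ have $i$th row $\phi_2(\psi_2(x_i))^\top$. By the definitions of the kernels and the covariance matrices this immediately gives $K_{\psi_1}=\Phi_1\Phi_1^\top$ and $K_{\psi_2}=\Phi_2\Phi_2^\top$, together with $C_{\psi_1}=\frac{1}{n}\Phi_1^\top\Phi_1$, $C_{\psi_2}=\frac{1}{n}\Phi_2^\top\Phi_2$, and $C_{\psi_1,\psi_2}=\frac{1}{n}\Phi_1^\top\Phi_2$, so that $\Lambda_{\psi_1,\psi_2}=\frac{1}{n}\bigl(\Phi_1\Phi_1^\top-\Phi_2\Phi_2^\top\bigr)$.

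The key step is to exhibit a single factorization that yields both $\Lambda_{\psi_1,\psi_2}$ and $\Gamma_{\psi_1,\psi_2}$ as the two multiplication orders of the same product. I would set
\[
A := \begin{bmatrix} \Phi_1 & \Phi_2 \end{bmatrix} \in \mathbb{R}^{n\times(d_1+d_2)}, \qquad B := \begin{bmatrix} \Phi_1^\top \\ -\Phi_2^\top \end{bmatrix} \in \mathbb{R}^{(d_1+d_2)\times n}.
\]
A direct block multiplication gives $AB=\Phi_1\Phi_1^\top-\Phi_2\Phi_2^\top=n\,\Lambda_{\psi_1,\psi_2}$, while the reversed product yields
\[
BA = \begin{bmatrix} \Phi_1^\top\Phi_1 & \Phi_1^\top\Phi_2 \\ -\Phi_2^\top\Phi_1 & -\Phi_2^\top\Phi_2 \end{bmatrix} = n\,\Gamma_{\psi_1,\psi_2}.
\]
The only delicate point, and the one I would be careful about, is the placement of the minus sign: it must live in $B$ on the $\Phi_2^\top$ block, so that it flips the sign of $\Phi_2\Phi_2^\top$ in $AB$ while simultaneously producing the asymmetric pattern $-C_{\psi_1,\psi_2}^\top,\,-C_{\psi_2}$ in the lower block row of $BA$. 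Because this sign breaks the symmetry $B=A^\top$, the factors $A$ and $B$ are genuinely distinct and $\Gamma_{\psi_1,\psi_2}$ is in general non-symmetric, consistent with $\Lambda_{\psi_1,\psi_2}$ being a difference of PSD matrices whose eigenvalues may be negative.

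Having matched $\frac{1}{n}AB=\Lambda_{\psi_1,\psi_2}$ and $\frac{1}{n}BA=\Gamma_{\psi_1,\psi_2}$, I would finish by invoking the standard nonzero-spectrum identity: if $\mu\neq 0$ and $ABv=\mu v$, then $Bv\neq 0$ and $BA(Bv)=B(ABv)=\mu(Bv)$, so every nonzero eigenvalue of $AB$ is a nonzero eigenvalue of $BA$ and vice versa, with equal multiplicities. Scaling both products by the common factor $\frac{1}{n}$ preserves this correspondence, merely dividing each eigenvalue by $n$, so $\Lambda_{\psi_1,\psi_2}$ and $\Gamma_{\psi_1,\psi_2}$ share the same nonzero eigenvalues, which is the claim. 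I do not anticipate a substantive obstacle beyond verifying the block arithmetic and the sign bookkeeping; the entire argument rests on recognizing the signed factorization exhibited in the key step above.
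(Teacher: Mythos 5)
Your proposal is correct and follows essentially the same route as the paper: both factor the difference kernel matrix as a product $AB$ with $A=\begin{bmatrix}\Phi_1 & \Phi_2\end{bmatrix}$ and $B$ the signed stack of $\Phi_1^\top$ and $-\Phi_2^\top$, identify $BA$ (up to the $1/n$ normalization, which you place outside the factors rather than inside $B$ as the paper does) with $\Gamma_{\psi_1,\psi_2}$, and invoke the standard fact that $AB$ and $BA$ share the same nonzero eigenvalues. The only difference is cosmetic bookkeeping of the $1/n$ factor, plus your explicit one-line justification of the $AB$/$BA$ spectrum identity, which the paper simply cites.
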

We also note that for every (right) eigenvector $\mathbf{v}\in\mathbb{R}^{d_1+d_2}$ of matrix $\Gamma_{\psi_1,\psi_2}$ in \eqref{Eq: Differential Covarainace Matrix}, which we call the \emph{differential covariance matrix}, we can find the corresponding eigenvector $\mathbf{u}$ of  kernel difference matrix $\Lambda_{\psi_1,\psi_2}$ as follows:
\begin{equation*}
    \mathbf{u} = \begin{bmatrix} \phi_1(\psi_1(x_1)) & \phi_2(\psi_2(x_1)) \\
    \vdots & \vdots \\
    \phi_1(\psi_1(x_n)) & \phi_2(\psi_2(x_n)) \end{bmatrix}\mathbf{v} 
\end{equation*}
Note that the computation of the matrix $\Gamma_{\psi_1,\psi_2}$ can be performed with $O\bigl(n(d_1+d_2)^2\bigr)$ computations, linearly growing in sample size, and the eigendecomposition of $\Gamma_{\psi_1,\psi_2}$ can be handled via $O\bigl((d_1+d_2)^3\bigr)$, depending on the dimensions of the kernel feature maps $\phi_1,\phi_2$, and finally the eigenvector mapping from $\Gamma_{\psi_1,\psi_2}$ to $\Lambda_{\psi_1,\psi_2}$ will be $O\bigl(n(d_1+d_2)^2\bigr)$. Therefore, the entire eigenvector computation of $\Lambda_{\psi_1,\psi_2}$ can be handled using $O\bigl(n(d_1+d_2)^2+(d_1+d_2)^3\bigr)$ computations. The algorithm~\ref{alg:spec} contains the main steps in computing SPEC-eigenvectors using the above approach. As detailed in this algorithm, the computation of the differential kernel covariance matrix can be run over samples in a cascade, avoiding the need to store a large dataset. 

\begin{remark}
\label{Remark Cholesky decomposition}
Unlike the $n\times n$ kernel difference matrix $\Lambda_{\psi_1,\psi_2}\in\mathbb{R}^{n\times n}$, the differential covariance matrix $\Gamma_{\psi_1,\psi_2}\in\mathbb{R}^{(d_1+d_2)\times (d_1+d_2)}$ is not a symmetric matrix. Still, $\Gamma_{\psi_1,\psi_2}$ will only have real eigenvalues, where the non-zero eigenvalues are shared with $\Lambda_{\psi_1,\psi_2}$. In Appendix~\ref{Sec: Appendix Compute Eigvals}, we further provide a Cholesky decomposition-based method to reduce the computation of $\Gamma_{\psi_1,\psi_2}$'s non-zero eigenvalues and eigenvectors to the eigendecomposition of a $(d_1+d_2)\times(d_1+d_2)$-dimensional \emph{symmetric} matrix.
\end{remark}

Applying the standard linear and cosine-similarity kernels, the kernel feature dimension will match that of the embedding, which is usually bounded by 1000 for standard image and text embeddings. In the case of shift-invariant kernels, e.g. the Gaussian (RBF) kernel, whose feature dimension is infinite, we utilize the random Fourier features (RFFs) \cite{rahimi2007random,sutherland2015error} to reduce the dimension of the kernel feature dimension with a proxy kernel function characterized by the RFFs. Following the RFF framework, given a kernel function $k(x,y)= \kappa (x-y)$ that is normalized i.e. $\kappa(0)=1$, we draw a number $m$ independent Fourier features $\omega_1,\ldots ,\omega_m \sim \widehat{\kappa}$ from probability density function $\widehat{\kappa}$ which denotes the Fourier transform of $\kappa$ defined as
\begin{equation*}
   \widehat{\kappa}(\omega) = \frac{1}{(2\pi)^d}\int_{\mathcal{X}} \kappa(x)\exp(-i\langle\omega , x\rangle )\mathrm{d}x 
\end{equation*}
Then, the RFF method approximates the shift-invariant kernel $k(x,y)\approx \widehat{k}(x,y) =  \bigl\langle \widehat{\phi}(x), \widehat{\phi}(y)\bigr\rangle$ where
\begin{equation*}
    \widehat{\phi}(x)=\frac{1}{\sqrt{m}}\Bigl[\cos(\omega_1^\top x),\sin(\omega_1^\top x),.,\cos(\omega_m^\top x),\sin(\omega_m^\top x)\Bigr] 
\end{equation*}
\begin{algorithm}[t]
    \caption{Spectral Pairwise Embedding Comparison (SPEC)} 
    \label{alg:spec}
    \begin{algorithmic}[1]
        \STATE \textbf{Input:} Sample set $\{\mathbf{x}_1, \ldots, \mathbf{x}_n\}$, embeddings $\psi_1$ and $\psi_2$, kernel feature maps $\phi_1$ and $\phi_2$

        \STATE Initialize $C_{\psi_1} =  \mathbf{0}_{d_1 \times d_1}$, $C_{\psi_2} = \mathbf{0}_{d_2 \times d_2}$, \\$C_{\psi_1, \psi_2} = \mathbf{0}_{d_1 \times d_2}$ \vspace{1mm}

        \FOR{$i \in \{1, \ldots, n\}$ }
            \STATE Update $C_{\psi_1} \leftarrow C_{\psi_1} + \frac{1}{n} \phi_1(\psi_1(\mathbf{x}_i)) \phi_1(\psi_1(\mathbf{x}_i))^\top$ \vspace{0.5mm}
            \STATE Update $C_{\psi_2} \leftarrow C_{\psi_2} + \frac{1}{n} \phi_2(\psi_2(\mathbf{x}_i)) \phi_2(\psi_2(\mathbf{x}_i))^\top$ \vspace{0.5mm}
            \STATE Update $C_{\psi_1, \psi_2} \hspace{-0.35em} \leftarrow \hspace{-0.3em} C_{\psi_1, \psi_2}\hspace{-0.3em} + \hspace{-0.1em}\frac{1}{n} \phi_1(\psi_1(\mathbf{x}_i)) \phi_2(\psi_2(\mathbf{x}_i))^\top$
        \ENDFOR\vspace{1mm}
    \STATE Construct $\Gamma_{\psi_1,\psi_2}$ as in Equation~\eqref{Eq: Differential Covarainace Matrix}\vspace{1mm} 
    \STATE Compute  eigenvalues $\lambda_{1:{d_1+d_2}}$ and eigenvectors $\mathbf{v}_{1:{d_1+d_2}}$ of non-symmetric matrix $\Gamma_{\psi_1,\psi_2}$ \vspace{1mm} 
    \FOR{$i \in \{1, \ldots, d_1+d_2\}$}
        \STATE Map eigenvector   $\mathbf{u}_i = \bigl[\phi_1(\psi_1(\mathbf{X}))\: \phi_2(\psi_2(\mathbf{X})) \bigr] \mathbf{v}_i$
    \ENDFOR\vspace{1mm}

    \STATE \textbf{Output:} Eigenvalues $\lambda_1,\ldots ,\lambda_{d_1+d_2}$, eigenvectors $\mathbf{u}_1,\ldots ,\mathbf{u}_{d_1+d_2}$.
    \end{algorithmic}
\end{algorithm}
\begin{theorem}\label{Thm: Fouirer features}\vspace{-2mm}
    Consider normalized shift-invariant kernel $k_1(x,y)=\kappa_1(x-y)$ and $k_2(x',y')=\kappa_2(x'-y')$. Then, drawing $m$ Fourier features $\omega^{(1)}_i\sim \widehat{\kappa}_1$ and $\omega^{(2)}_i\sim \widehat{\kappa}_2$, we form the RFF-proxy kernel functions $\widehat{k}_1,\widehat{k}_2$. Then, considering eigenvalues $\widehat{\lambda}_1,\ldots , \widehat{\lambda}_n$ and eigenvectors $\widehat{\mathbf{v}}_1,\ldots , \widehat{\mathbf{v}}_n$ of proxy $\widehat{\Lambda}_{\psi_1,\psi_2}$, for every $\delta >0$, the following holds with probability at least $1-\delta$:
    \begin{align*}
        \sum_{i=1}^n \Bigl\Vert {\Lambda}_{\psi_1,\psi_2}\widehat{\mathbf{v}}_i -  \widehat{\lambda}_i \widehat{\mathbf{v}}_i \Bigr\Vert^2_2 \,\le\, \frac{128\log(3/\delta)}{m}.
    \end{align*}
\end{theorem}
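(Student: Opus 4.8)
The plan is to first recognize that, remarkably, the left-hand side is independent of the particular eigenstructure of the proxy matrix and collapses to a single Frobenius-norm error. Since $\widehat{\Lambda}_{\psi_1,\psi_2}$ is a difference of two symmetric RFF-kernel matrices, it is itself symmetric, so its unit eigenvectors $\widehat{v}_1,\ldots,\widehat{v}_n$ can be chosen to form an orthonormal basis of $\mathbb{R}^n$, and by definition $\widehat{\lambda}_i\widehat{v}_i = \widehat{\Lambda}_{\psi_1,\psi_2}\widehat{v}_i$. Hence $\Lambda_{\psi_1,\psi_2}\widehat{v}_i - \widehat{\lambda}_i\widehat{v}_i = (\Lambda_{\psi_1,\psi_2} - \widehat{\Lambda}_{\psi_1,\psi_2})\widehat{v}_i$, and summing squared norms over an orthonormal basis gives
\begin{equation*}
\sum_{i=1}^n \bigl\Vert \Lambda_{\psi_1,\psi_2}\widehat{v}_i - \widehat{\lambda}_i\widehat{v}_i\bigr\Vert_2^2 = \bigl\Vert \Lambda_{\psi_1,\psi_2} - \widehat{\Lambda}_{\psi_1,\psi_2}\bigr\Vert_F^2 .
\end{equation*}
This reduction is the crux: it replaces a claim about $n$ approximate eigenpairs with a single concentration bound on the matrix approximation error.

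Next I would expand the error matrix. Writing $E := \Lambda_{\psi_1,\psi_2} - \widehat{\Lambda}_{\psi_1,\psi_2} = \tfrac{1}{n}[(K_{\psi_1} - \widehat{K}_{\psi_1}) - (K_{\psi_2} - \widehat{K}_{\psi_2})]$ and using $\widehat{K}_{\psi_j} = \tfrac{1}{m}\sum_{l=1}^m Z_l^{(j)}$ with $(Z_l^{(j)})_{ab} = \cos\bigl(\omega_l^{(j)\top}(\psi_j(x_a)-\psi_j(x_b))\bigr) \in [-1,1]$, the standard RFF identity gives the unbiasedness $\mathbb{E}[Z_l^{(j)}] = K_{\psi_j}$. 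Thus $\Vert E\Vert_F$ is a function of the $2m$ independent draws $\{\omega_l^{(j)}\}$. I would then (i) bound its mean, exploiting independence of the draws so that cross terms vanish and the per-entry variance of each bounded cosine is at most $1$, which yields $\mathbb{E}\Vert E\Vert_F^2 \le 2/m$ and hence $\mathbb{E}\Vert E\Vert_F \le \sqrt{2/m}$; and (ii) establish a bounded-differences property: replacing a single $\omega_l^{(j)}$ alters exactly one summand and perturbs $E$ by a matrix of Frobenius norm at most $2/m$, since each $\Vert Z_l^{(j)}\Vert_F \le n$, so $\Vert E\Vert_F$ changes by at most $2/m$.

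With these two ingredients I would apply McDiarmid's inequality to $\Vert E\Vert_F$ over the $2m$ coordinates (bounded difference $2/m$ each), concluding that $\Vert E\Vert_F \le \mathbb{E}\Vert E\Vert_F + t$ with probability at least $1-\delta$ for a suitable $t = O\bigl(\sqrt{\log(1/\delta)/m}\bigr)$; squaring this high-probability bound and absorbing the mean and deviation terms via $(a+b)^2 \le 2a^2 + 2b^2$ produces the claimed $O(\log(2/\delta)/m)$ rate. The factor $2$ inside the logarithm arises naturally if one instead treats $\widehat{K}_{\psi_1}$ and $\widehat{K}_{\psi_2}$ by separate McDiarmid arguments and combines them through a union bound with a $\delta/2$ budget each, after which the triangle inequality $\Vert E\Vert_F \le \tfrac{1}{n}(\Vert K_{\psi_1}-\widehat{K}_{\psi_1}\Vert_F + \Vert K_{\psi_2}-\widehat{K}_{\psi_2}\Vert_F)$ reassembles the bound.

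The main obstacle I anticipate is the concentration step rather than the algebraic reduction: correctly computing the bounded-difference constant (which hinges on the crude but essential estimate $\Vert Z_l^{(j)}\Vert_F \le n$) and the second-moment bound, and then tracking all constants through the squaring and union-bound steps so that they aggregate into the stated coefficient. The Frobenius-norm reduction of the first step is exact and the RFF unbiasedness is routine, so the only genuine work — and the only place the explicit constant (here $128$) is pinned down — lies in the bookkeeping of the McDiarmid estimate.
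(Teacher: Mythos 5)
Your proposal is correct, and its skeleton matches the paper's proof: the exact reduction $\sum_{i=1}^n \Vert \Lambda_{\psi_1,\psi_2}\widehat{v}_i - \widehat{\lambda}_i\widehat{v}_i\Vert_2^2 = \Vert \Lambda_{\psi_1,\psi_2} - \widehat{\Lambda}_{\psi_1,\psi_2}\Vert_F^2$ via the orthonormal eigenbasis of the symmetric proxy matrix, the RFF unbiasedness $\mathbb{E}[Z_l^{(j)}] = K_{\psi_j}$, the split of the error into the two kernels' approximation errors (Young's inequality), and the $\delta/2$ union bound are all steps the paper takes as well. Where you genuinely diverge is the concentration tool: the paper treats $\frac{1}{n}K_{\psi_j,\omega_l}$ as i.i.d.\ random ``vectors'' with $\Vert \cdot\Vert_F \le 1$ and mean $\frac{1}{n}K_{\psi_j}$, and invokes the vector Bernstein inequality (with cited constants) to get $\mathbb{P}(\Vert \frac{1}{n}\widehat{K}_{\psi_j} - \frac{1}{n}K_{\psi_j}\Vert_F \ge \epsilon) \le \exp\bigl(\frac{8 - m\epsilon^2}{32}\bigr)$, whereas you use McDiarmid's bounded-differences inequality (constant $2/m$ per coordinate, hinging on $\Vert Z_l^{(j)}\Vert_F \le n$) together with a second-moment/Jensen bound $\mathbb{E}\Vert E\Vert_F \le \sqrt{2/m}$. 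Both are valid; your route is more elementary and self-contained (no appeal to a less standard matrix/vector concentration result), and if you track your constants it actually yields roughly $\frac{8 + 16\log(2/\delta)}{m}$, sharper than the stated $\frac{128\log(2/\delta)}{m}$, which it therefore comfortably implies. Two small cautions: your single-McDiarmid variant over all $2m$ draws implicitly requires the $\omega^{(1)}$ and $\omega^{(2)}$ samples to be mutually independent (the union-bound variant you also sketch, and which the paper uses, does not), and your claim $\mathbb{E}\Vert E\Vert_F^2 \le 2/m$ needs the observation that the cross term between the two kernels' deviations vanishes by that same independence — both points you essentially flag, so they are bookkeeping rather than gaps.
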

Theorem~\ref{Thm: Fouirer features} shows the eigenvectors of the RFF-proxy kernel difference $\widehat{\Lambda}_{\psi_1,\psi_2}$ provide a proxy for the eigenspace of the target kernel difference function ${\Lambda}_{\psi_1,\psi_2}$. Note that the differential covariance matrix of the proxy-RFF kernel, the dimension of $\widehat{\Gamma}_{\psi_1,\psi_2}$ will be $4m\times 4m$, that is finite for every shift-invariant kernel. As a result, one can apply the eigenspace correspondence in Proposition~\ref{Proposition: compute eignevctors} to the proxy kernel function and reduce the computational complexity to $O\bigl(m^2n+m^3\bigr)$ for $m$ RFF features and $n$ samples.

\section{SPEC-based Quantification of Embedding Differences}
\begin{figure*}[t]
    \centering 
    \includegraphics[width=0.98\linewidth]{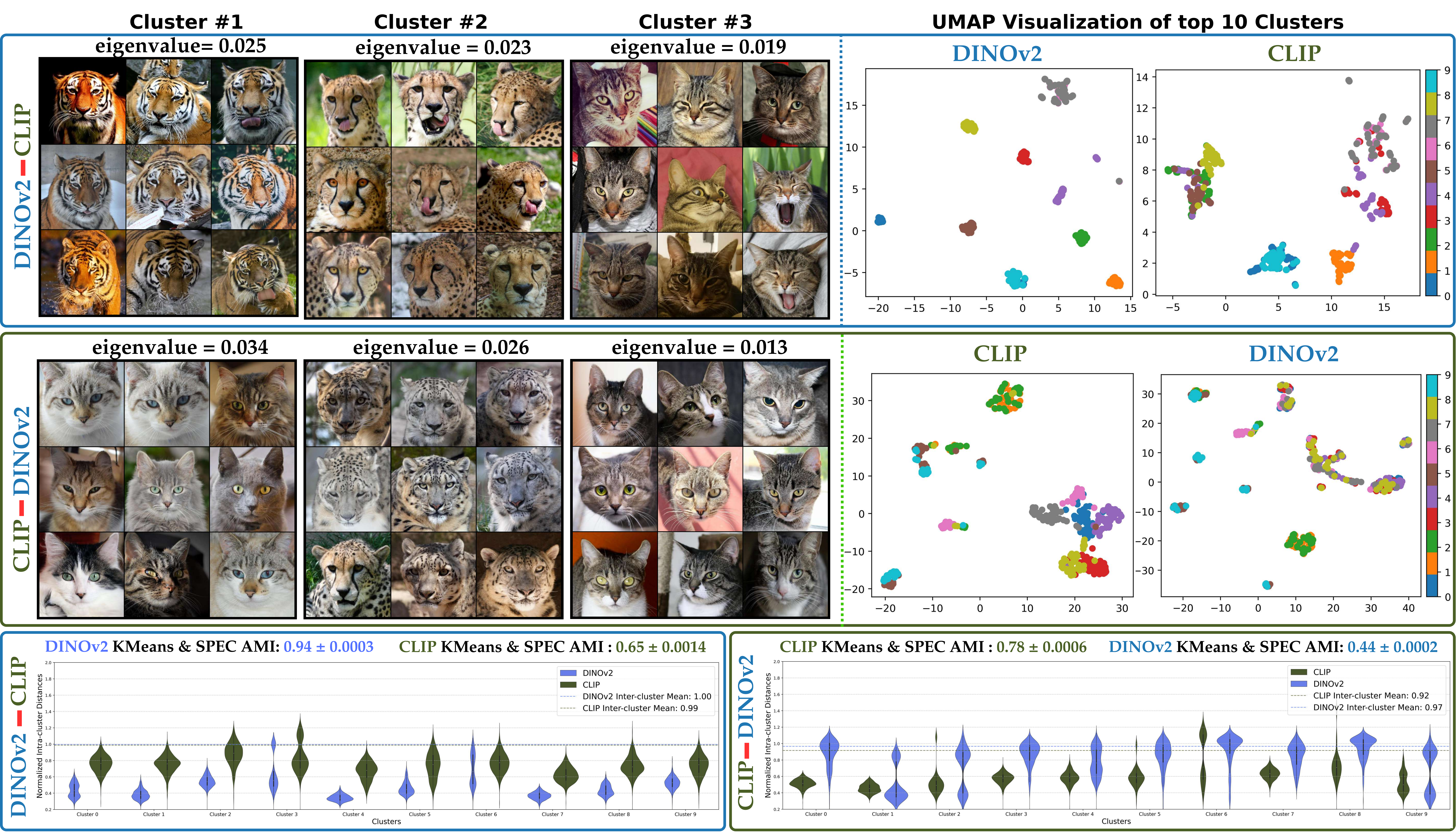}
    \vspace{-1mm}
    \caption{Comparison of different embeddings on 15K samples from the AFHQ dataset, consisting of 5K cats, 5K wildlife, and 5K dogs. The number at the top of each image represents the eigenvalue of the corresponding SPEC cluster. The last two images in each row show the UMAP representation of the SPEC clusters for each embedding individually.}
    \label{fig:afhq}
\end{figure*}
As discussed earlier, the eigenspace of the  kernel difference matrix $\Lambda_{\psi_1,\psi_2}$ provides information on the differently clustered samples by the two embeddings. Therefore, the SPEC approach motivates measuring the difference of two embeddings using the eigenspectrum of $\Lambda_{\psi_1,\psi_2}$. Here, we specifically focus on the spectral radius of $\Lambda_{\psi_1,\psi_2}$, i.e., its maximum absolute eigenvalue $\rho(\Lambda_{\psi_1,\psi_2})=\max_{1\le i\le n}|\lambda_i(\Lambda_{\psi_1,\psi_2})|$ ($\rho(A)$ denotes $A$'s spectral radius). Note that $\Lambda_{\psi_1,\psi_2}$ is by definition a symmetric matrix with a zero trace, and therefore its eigenvalues are all real and add up to 0. The following definition states the difference measure, which we call \emph{SPEC-diff} score:
\begin{equation}\label{Eq: Spec-diff}
    \mathrm{SPEC}\text{-}\mathrm{diff}(\psi_1,\psi_2)\, :=\, \rho(\Lambda_{\psi_1,\psi_2}).
\end{equation}
Since $\mathrm{SPEC}\text{-}\mathrm{diff}$ is only a function of $\Lambda_{\psi_1,\psi_2}$'s non-zero eigenvalues, Proposition~\ref{Proposition: compute eignevctors} shows that $\mathrm{SPEC}\text{-}\mathrm{diff}(\psi_1,\psi_2)= \rho(\Gamma_{\psi_1,\psi_2})$ is equal to the spectral radius of the differential covariance matrix $\Gamma_{\psi_1,\psi_2}$, therefore, it is a symmetric pseudo-distance whose computation cost scales linearly with sample size $n$.

While the $\mathrm{SPEC}\text{-}\mathrm{diff}$ measure can be used to quantify the mismatches of two embeddings, it can be further optimized in the training or fine-tuning of an embedding map $\psi_{1,\theta}$'s parameters $\theta$ in order to align the embedding's clusters with another reference embedding $\psi_2$. The optimization problem to be solved for such an alignment of the embeddings will be the following, which we call the \emph{SPEC-align} problem:
\begin{equation}\label{Eq: Spec-align optimization}
    \min_{\theta\in\Theta}\; \mathcal{L}(\psi_{1,\theta}) + \beta\cdot \mathrm{SPEC}\text{-}\mathrm{diff}(\psi_{1,\theta},\psi_2)
\end{equation}
In the above, $\mathcal{L}(\psi_{1,\theta}) $ denotes the original loss function of training embedding $\psi_{1,\theta}$ and $\beta$ denotes the coefficient of the penalty function $\mathrm{SPEC}\text{-}\mathrm{diff}(\psi_{1,\theta},\psi_2)$, penalizing the mismatch with reference embedding $\psi_2$. To apply a gradient-based optimization algorithm to solve \eqref{Eq: Spec-align optimization}, one needs to efficiently compute the gradient of $\mathrm{SPEC}\text{-}\mathrm{diff}(\psi_{1,\theta},\psi_2)$ with respect to parameter $\theta$. The following proposition shows that the gradient computation can be run in $O(n_B)$ over a batch size $n_B$.
\begin{proposition}\label{Proposition: gradient of psec-diff}
Consider the definitions in \eqref{Eq: Kernel difference definition},\eqref{Eq: Differential Covarainace Matrix},\eqref{Eq: Spec-diff}. Then, assuming a unique top eigenvalue (in terms of absolute value) for $\Gamma_{\psi_{1,\theta},\,\psi_2}$ with the left and right eigenvectors $\mathbf{u}_{\text{left}},\mathbf{u}_{\text{right}}$, we will have:
\begin{equation}\label{Spec-align gradient}
    \nabla_\theta\: \mathrm{SPEC}\text{-}\mathrm{diff}(\psi_{1,\theta},\psi_2) \: =\: \nabla_\theta\Bigl(\bigr\vert\mathbf{u}_{\text{left}}^\top\Gamma_{\psi_{1,\theta},\psi_2}\mathbf{u}_{\text{right}}\bigl\vert \Bigr)
\end{equation}
\end{proposition}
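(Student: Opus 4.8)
The plan is to combine the eigenspace equivalence of Proposition~\ref{Proposition: compute eignevctors} with first-order analytic perturbation theory for a simple eigenvalue. First I would reduce $\mathrm{SPEC}\text{-}\mathrm{diff}$ to an eigenvalue of the finite matrix $\Gamma_{\psi_{1,\theta},\psi_2}$: by Proposition~\ref{Proposition: compute eignevctors} the nonzero eigenvalues of $\Gamma_{\psi_{1,\theta},\psi_2}$ coincide with those of the symmetric matrix $\Lambda_{\psi_{1,\theta},\psi_2}$, so they are all real and $\rho(\Gamma_{\psi_{1,\theta},\psi_2}) = \rho(\Lambda_{\psi_{1,\theta},\psi_2}) = \mathrm{SPEC}\text{-}\mathrm{diff}(\psi_{1,\theta},\psi_2)$. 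Let $\lambda(\theta)$ denote the eigenvalue of $\Gamma_{\psi_{1,\theta},\psi_2}$ of largest absolute value. The uniqueness assumption makes it a \emph{simple} eigenvalue, and since the covariance and cross-covariance blocks are smooth functions of $\theta$, standard perturbation theory (Kato) guarantees that $\lambda(\theta)$ together with its right and left eigenvectors $\mathbf{u}_{\text{right}}(\theta),\mathbf{u}_{\text{left}}(\theta)$ is differentiable in a neighborhood of the operating point.

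Next I would derive the eigenvalue derivative. Adopting the normalization $\mathbf{u}_{\text{left}}^\top\mathbf{u}_{\text{right}}=1$, which is possible because the left and right eigenvectors of a simple eigenvalue are never orthogonal, I differentiate the right-eigenvector relation $\Gamma_{\psi_{1,\theta},\psi_2}\mathbf{u}_{\text{right}} = \lambda\,\mathbf{u}_{\text{right}}$ with respect to each component of $\theta$, left-multiply by $\mathbf{u}_{\text{left}}^\top$, and use the left-eigenvector relation $\mathbf{u}_{\text{left}}^\top\Gamma_{\psi_{1,\theta},\psi_2} = \lambda\,\mathbf{u}_{\text{left}}^\top$ to cancel the two terms carrying $\partial_\theta\mathbf{u}_{\text{right}}$. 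This yields the envelope-type formula $\nabla_\theta\lambda = \mathbf{u}_{\text{left}}^\top\bigl(\nabla_\theta\Gamma_{\psi_{1,\theta},\psi_2}\bigr)\mathbf{u}_{\text{right}}$: the first-order contribution of the eigenvector variation vanishes, and only the explicit $\theta$-dependence of $\Gamma_{\psi_{1,\theta},\psi_2}$ survives. This is exactly what makes the right-hand side efficient, since the eigenvectors can be computed once by the power method and then frozen.

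Then I would reconcile the absolute value. At the operating point the frozen bilinear form equals the eigenvalue, $\mathbf{u}_{\text{left}}^\top\Gamma_{\psi_{1,\theta},\psi_2}\mathbf{u}_{\text{right}} = \lambda\,\mathbf{u}_{\text{left}}^\top\mathbf{u}_{\text{right}} = \lambda$, which is real and nonzero; hence $\bigl\vert\mathbf{u}_{\text{left}}^\top\Gamma_{\psi_{1,\theta},\psi_2}\mathbf{u}_{\text{right}}\bigr\vert = \vert\lambda\vert = \mathrm{SPEC}\text{-}\mathrm{diff}$. Treating $\mathbf{u}_{\text{left}},\mathbf{u}_{\text{right}}$ as constants, $\nabla_\theta\bigl(\mathbf{u}_{\text{left}}^\top\Gamma_{\psi_{1,\theta},\psi_2}\mathbf{u}_{\text{right}}\bigr) = \mathbf{u}_{\text{left}}^\top\bigl(\nabla_\theta\Gamma_{\psi_{1,\theta},\psi_2}\bigr)\mathbf{u}_{\text{right}} = \nabla_\theta\lambda$ by the previous step, and the chain rule for $\vert\cdot\vert$ gives $\nabla_\theta\bigl\vert\mathbf{u}_{\text{left}}^\top\Gamma_{\psi_{1,\theta},\psi_2}\mathbf{u}_{\text{right}}\bigr\vert = \mathrm{sign}(\lambda)\,\nabla_\theta\lambda = \nabla_\theta\vert\lambda\vert = \nabla_\theta\,\mathrm{SPEC}\text{-}\mathrm{diff}$, which is the claim in \eqref{Spec-align gradient}.

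The delicate point, and the step I expect to need the most care, is justifying that the eigenvector-variation terms genuinely drop out, so that freezing the left and right eigenvectors gives the \emph{exact} gradient rather than a first-order approximation; this rests entirely on the simple-eigenvalue hypothesis (through differentiability and $\mathbf{u}_{\text{left}}^\top\mathbf{u}_{\text{right}}\neq 0$) and on the normalization $\mathbf{u}_{\text{left}}^\top\mathbf{u}_{\text{right}}=1$, which I would state explicitly since the identity is not scale-invariant in the eigenvectors. I would also confirm that the top eigenvalue is real, so that $\rho=\vert\lambda\vert$ and the $\mathrm{sign}(\lambda)$ chain-rule step is well-posed; this follows from the Proposition~\ref{Proposition: compute eignevctors} equivalence with the symmetric $\Lambda_{\psi_{1,\theta},\psi_2}$. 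Finally I would note the two degenerate regimes where the argument fails: $\lambda=0$ (the embeddings already match in the kernel metric, where $\vert\lambda\vert$ is non-differentiable) and a tie for the top absolute eigenvalue, where the spectral radius loses differentiability.
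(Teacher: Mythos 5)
Your proposal is correct and rests on the same core identity as the paper's proof: under the bi-orthogonal normalization $\mathbf{u}_{\text{left}}^\top\mathbf{u}_{\text{right}}=1$, the simple top eigenvalue satisfies $\lambda_{\max} = \mathbf{u}_{\text{left}}^\top\Gamma_{\psi_{1,\theta},\psi_2}\mathbf{u}_{\text{right}}$, and one then differentiates. The mechanics differ in how this is justified. The paper writes $\Gamma_{\psi_{1,\theta},\psi_2}=UJU^{-1}$ in Jordan canonical form, isolates the rank-one term associated with the simple top eigenvalue, obtains the identity from bi-orthogonality of the generalized left/right eigenvectors, and then simply asserts that ``taking the partial derivative proves the proposition.'' You instead invoke analytic perturbation theory for a simple eigenvalue and differentiate the eigen-relations $\Gamma_{\psi_{1,\theta},\psi_2}\mathbf{u}_{\text{right}}=\lambda\mathbf{u}_{\text{right}}$ and $\mathbf{u}_{\text{left}}^\top\Gamma_{\psi_{1,\theta},\psi_2}=\lambda\mathbf{u}_{\text{left}}^\top$ directly, showing explicitly that the terms carrying the eigenvector derivatives cancel. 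This makes rigorous exactly the step the paper leaves implicit: why freezing $\mathbf{u}_{\text{left}},\mathbf{u}_{\text{right}}$ (as one does in practice after a power-method computation) yields the \emph{exact} gradient rather than a first-order approximation, which is the entire content of \eqref{Spec-align gradient}. Your additional points --- realness of the top eigenvalue via the equivalence in Proposition~\ref{Proposition: compute eignevctors}, the $\mathrm{sign}(\lambda)$ chain rule for the absolute value, the need for the normalization to be stated since the identity is not scale-invariant, and the degenerate regimes $\lambda=0$ or a tie in $\vert\lambda\vert$ where differentiability fails --- are all either used tacitly or omitted in the paper's proof; they strengthen the argument without changing its substance.
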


Therefore, the above proposition suggests computing the top left and right eigenvector of the $(d_1+d_2)\times(d_1+d_2)$  kernel difference matrix, which can be computed using the power method, and subsequently to take the gradient of the scalar function $\bigl\vert \mathbf{u}_{\text{left}}^\top\Gamma_{\psi_{1,\theta},\psi_2}\mathbf{u}_{\text{right}}\bigr\vert$ which is the absolute value of the mean of the function value for each individual sample $x_1,\ldots , x_n$. This property is especially suitable for applying stochastic gradient methods.  

\begin{remark}
\label{Remark l2 spec-diff}
Another viable difference function follows the $\ell_2$-norm of the eigenvalues of $\Lambda_{\psi_1,\psi_2}$ (note that SPEC-diff is the $\ell_\infty$-norm of $\Lambda_{\psi_1,\psi_2}$'s eigenvalues). Since $\Lambda_{\psi_1,\psi_2}$ is a symmetric matrix with real eigenvalues, the squared-$\ell_2$-norm of its eigenvalues is its Frobenius norm-squared:
\begin{equation*}
    \big\Vert \Lambda_{\psi_1,\psi_2} \bigr\Vert^2_F = \frac{1}{n^2}\sum_{i=1}^n\sum_{j=1}^n\bigl(k_{\psi_1}(x_i,x_j) - k_{\psi_2}(x_i,x_j)\bigr)^2.
\end{equation*}
An advantage of the above embedding distance function is the possibility of performing stochastic optimization, where given a mini-batch of $B$ samples, we can consider the estimation $\frac{1}{B^2}\sum_{i,j=1}^B \bigl(k_{\psi_1}(x_i,x_j) - k_{\psi_2}(x_i,x_j)\bigr)^2$ in the alignment process. This kernel difference Frobenius-norm-based alignment of embeddings has also been explored in the concurrent work by \citet{gong2025kernelbased}.   
\end{remark}

\section{Numerical Results}
In this section, we first discuss the experimental settings and then apply the SPEC algorithm to compare different image and text embeddings across various large-scale datasets. Finally, we explore the use of the SPEC-align method to match the sample clusters of the embeddings.

\textbf{Datasets.} In our experiments on image data, we used four datasets: AFHQ \cite{afhq} (15K animal faces in categories of cats, wildlife, and dogs), FFHQ \cite{ffhq} (70K human-face images), ImageNet-1K \cite{imagenet} (1.4 million images across 1,000 labels), and MS-COCO 2017 \cite{lin2015microsoft} ($\approx$110K samples of diverse scenes with multiple objects). Additionally, similar to \cite{materzynskadisentangling}, we created a custom dataset derived from 10 selected classes from ImageNet-1k, where we overlaid text labels directly on images.

\textbf{Embeddings.} The feature embeddings tested in this study include the image embeddings: CLIP \cite{clip}, DINOv2 \cite{dinov2}, Inception-V3 \cite{inception}, and SWAV \cite{swav}, and the text embeddings: RoBERTa \cite{liu2020roberta}, CLIP \cite{clip}, and E5-V2 \cite{wang2023improving}. All embeddings were extracted using pre-trained models, and standard preprocessing was applied for uniformity across  datasets. 

\textbf{Experimental settings.} In our experiments, we computed the SPEC differential kernel covariance matrix using \( m = 2000 \) independent random Fourier features for a Gaussian kernel. To determine the Gaussian kernel bandwidth \( \sigma \), we followed the kernel-based evaluation of generative models in \cite{jalali2023information, ospanov2024towards} and selected the embeddings bandwidths such that the difference between top eigenvalue is less than 0.01.
We provide the detailed SPEC algorithm in Algorithm~\ref{alg:spec}. The experiments were performed on two RTX-4090 GPUs.

\begin{figure*}[t]
    \centering
    \includegraphics[width=.95\linewidth]{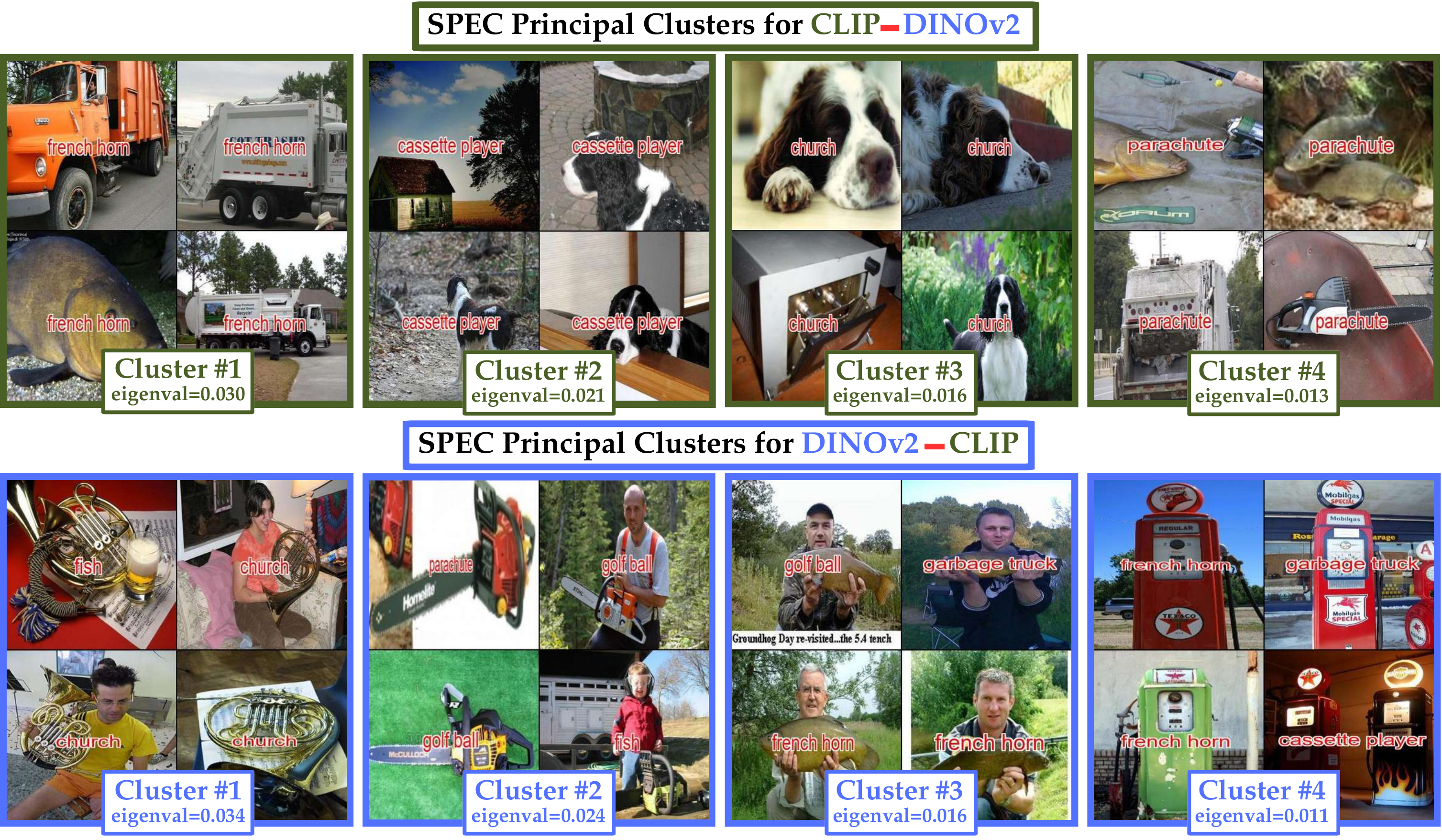}
    \vspace{-1mm}
    \caption{Top 4 SPEC-identified clusters comparing CLIP and DINOv2 embeddings on 10 ImageNet classes with overlaid text labels.}
    \label{fig:clip_dino_imagenette}
\end{figure*}

\begin{figure*}[t]
    \centering
    \includegraphics[width=\linewidth]{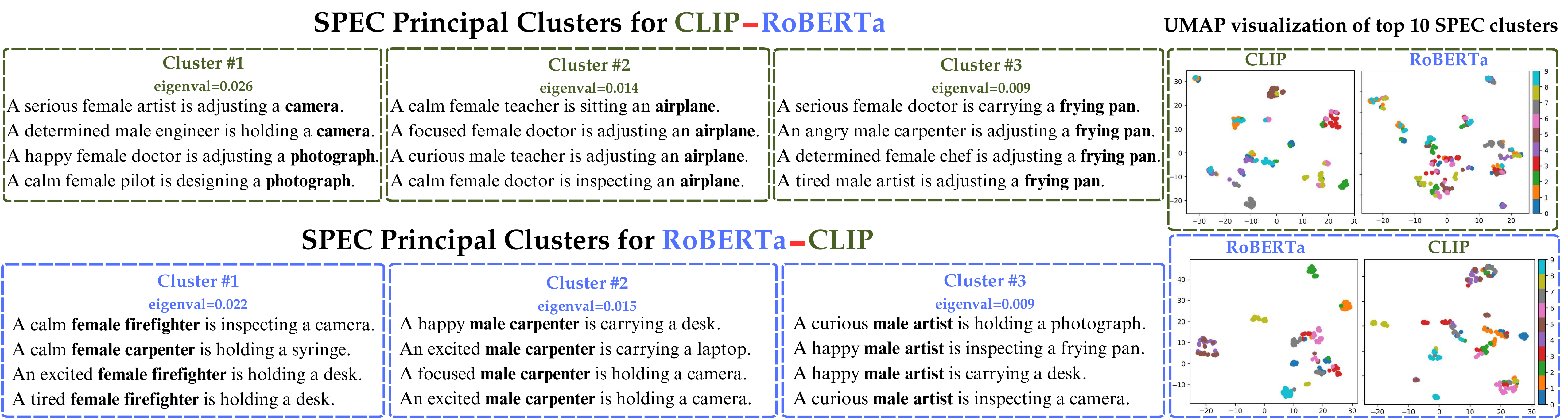}
    \vspace{-2mm}
    \caption{Top 4 SPEC-identified clusters by comparing CLIP and RoBERTa text embeddings on a dataset generated from GPT-4o.}
    \label{fig:clip_roberta_toyexample_main}
\end{figure*}

\begin{figure*}[t]
    \centering
    \includegraphics[width=0.8\linewidth]{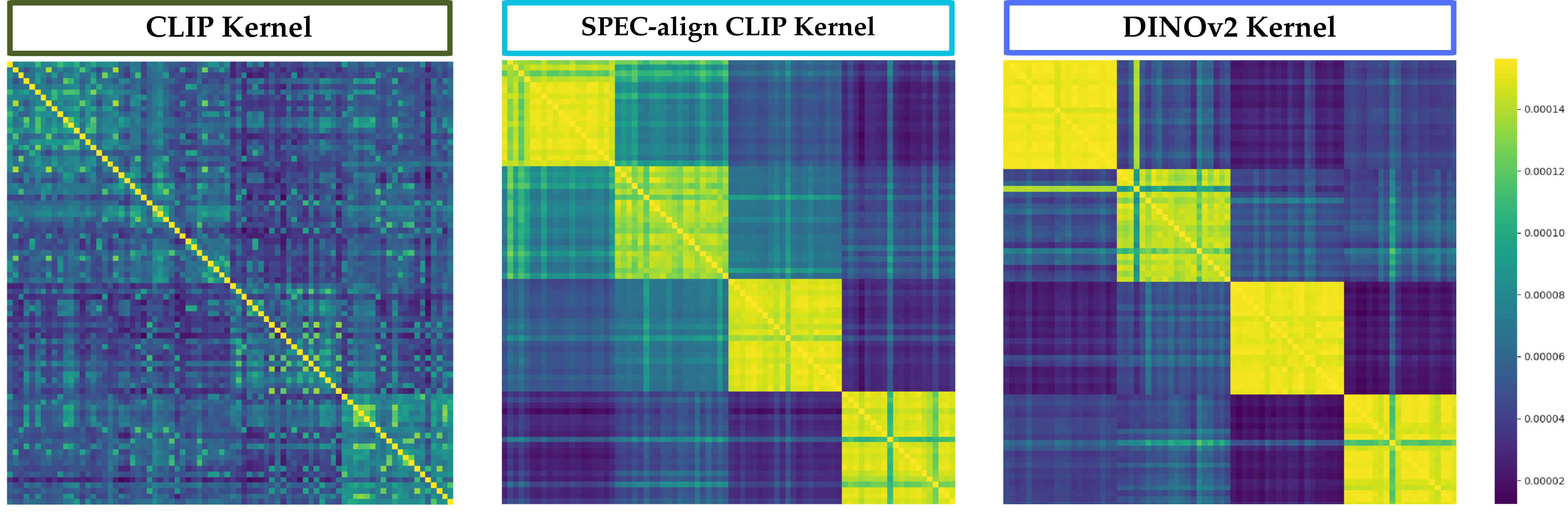}
    \caption{Comparison of Kernel matrices after using SPEC-align to match the sample clusters of CLIP to DINOv2.}\vspace{-3mm}
    \label{fig:spec_aling_kernels}
\end{figure*}

\textbf{SPEC comparison of different embeddings.}
To evaluate SPEC, we compared various image embeddings using the AFHQ dataset. As shown in Figure~\ref{fig:afhq}, we employed SPEC for pairwise comparisons to analyze the difference between these embeddings. We reported the top 9 images that correspond to the maximum entries of the top three eigenvectors in the SPEC approach. Subsequently, we found and visualized the top 100 samples (with maximum entries) from each of the top 10 eigenvectors (i.e. SPEC-identified clusters). To confirm whether these samples were clustered by the first embedding and not by the second embedding, we used UMAP maps \citep{mcinnes2018umap} to validate the SPEC-identified different-captured sample groups by the two embeddings. In the Appendix, we further provide the t-SNE \citep{van2008visualizing} and PaCMAP \citep{pacmap} plots of the FFHQ and AFHQ experiments.
Also, we have analyzed the found clusters using violin plots to visualize normalized distances between data points within each cluster. The plots also suggest that the first embedding can cluster the points more strongly compared to the second embedding. 
Also, we ran the K-means clustering algorithm 50 times on each of the embedding's features and computed the averaged (across the 50 runs) Adjusted Mutual Information (AMI) \citep{vinh2009information} between the K-means labels and the SPEC-identified labels. The results indicate that the first embedding aligns more strongly with K-Means labels.

Furthermore, to highlight clustering differences between embeddings, we conducted a sanity check on two of the top five SPEC clusters from the DINOv2 - CLIP on AFHQ. We computed the center of the top four images in each cluster in both DINOv2 and CLIP embeddings. Then, we calculated the cosine similarity between the center and a set of eight test images: four additional images from the same cluster and four random images that do not belong to the cluster.
As shown in Figure~\ref{fig:proof_afhq}, DINOv2 well separates the cluster images from random samples, assigning the highest similarity scores to cluster-specific samples while keeping random samples significantly lower. However, in CLIP, some random images rank higher in similarity than the cluster-specific samples. A similar experiment was performed on SPEC clusters from the DINOv2 - CLIP on the FFHQ dataset (Figure~\ref{fig:proof_ffhq}). Details are in Appendix \ref{similarityranking}.
 

To further evaluate SPEC's performance in comparing embeddings, we apply a typographic attack on CLIP embeddings. As studied by \citet{materzynskadisentangling}, CLIP prioritizes text added to a custom dataset over the image content. We selected 10 classes from the ImageNet-1K dataset and overlaid different text labels directly onto the images. The top four SPEC-identified clusters are presented in Figure~\ref{fig:clip_dino_imagenette}, where we observe that CLIP clusters are based on the overlaid text, whereas DINOv2 clusters them based on visual features. We also compared CLIP and DINOv2 embeddings under the same settings on the ImageWoof dataset \citep{Howard_Imagewoof_2019}, which consists of various dog breeds from ImageNet-1K. SPEC principal clusters show that DINOv2 primarily clusters images based on dog breeds, whereas CLIP groups them based on the animals' gestures. Additional details are provided in Figure~\ref{fig:clip_dino_imagewoof} of Appendix \ref{image_embeddings}.

\textbf{SPEC comparison of embeddings on different image and text datasets.}
To check the performance of SPEC on text embeddings, we generated 10K samples from GPT-4o across different categories, including profession, object, gender, and emotion. We compared CLIP and RoBERTa text embeddings in Figure~\ref{fig:clip_roberta_toyexample_main} and observed that the top four clusters in CLIP focused on objects in sentences, while RoBERTa clustered based on profession and gender. We also noted from the UMAP visualization of the samples that the top clusters of each embedding are not well clustered by the other, indicating that they focus on different aspects of the sentences. We also compared CLIP with E5 and observed the same results in Figure~\ref{fig:clip_e5_toyexample} of the Appendix \ref{text_embeddings}. In Figure~\ref{fig:mscoco_embeddings}, we compared different image embeddings on the MS-COCO 2017 training set with ~120K samples which we discuss in the Appendix \ref{image_embeddings}.

\textbf{Aligning embeddings using SPEC-align}
In this section, we discuss how to use the SPEC-align method to align the differential kernel covariance of two embeddings. As shown in the comparison of CLIP and DINOv2 in Figures~\ref{fig:clip_dino_imagenette} and~\ref{fig:clip_dino_imagewoof}, DINOv2 captures certain clusters that CLIP fails to distinguish. To improve CLIP's performance, we aligned it with DINOv2 using the ImageNet training set. Specifically, we added an alignment term to the CLIP loss, as formulated in \eqref{Eq: Spec-align optimization}, and computed the gradient using \eqref{Spec-align gradient}. Learning parameters are detailed in Appendix~\ref{spec-align-parameters}, and additional results using Remark~\ref{Remark l2 spec-diff} are provided in Appendix~\ref{Cholesky decomposition runtime}.

We provide the kernel matrices for the four clusters in this experiment in Figure~\ref{fig:spec_aling_kernels}, corresponding to the results in Figure~\ref{fig:clip_dino_imagenette}. Notably, the SPEC-aligned CLIP kernel captures the top four clusters based on image content rather than the overlaid text labels. The clusters and their UMAP visualizations are shown in Figure~\ref{fig:spec_aling_tsne_kernels}. 
To further evaluate SPEC-align’s performance, we conducted an experiment similar to \cite{dinov2}, where feature quality was assessed by training a simple classifier on a frozen backbone without fine-tuning its weights. In this setting, SPEC-align CLIP achieved 73.93\% top-1 accuracy on ImageNet-1K, outperforming the standard CLIP model, which reached 67.20\%. For reference, DINOv2 achieved 78.99\% on the same task, indicating that SPEC-align brings CLIP substantially closer to DINOv2 performance.






\vspace{-2mm}
\section{Conclusion}
In this paper, we proposed the spectral SPEC approach to the comparison of embedding maps. The SPEC method aims to identify groups of samples clustered by one embedding model which is not grouped by another model. We formulated a scalable algorithm with $O(n)$ computations to apply SPEC to a dataset of size $n$. We also discussed the application of SPEC for measuring the mismatches of two embeddings and their alignment. We note that the SPEC approach operates based on the assumption that the differently clustered samples can be detected by the spectral method. Extending the clustering-based approach to non-spectral clustering frameworks will be interesting for future exploration. In addition, extending the framework to compare cross-modal embeddings such as CLIP and BLIP is a future direction for this work.   


\clearpage
\clearpage

\section*{Acknowledgments}
The work of Farzan Farnia is partially supported by a grant from the Research Grants Council of the Hong Kong Special Administrative Region, China, Project 14209920, and is partially supported by CUHK Direct Research Grants with CUHK Project No. 4055164 and 4937054.
Finally, the authors sincerely thank the anonymous reviewers for their useful feedback and constructive suggestions.

\section*{Impact Statement}
This paper presents work whose goal is to advance the field of Machine Learning. There are many potential societal consequences of our work, none which we feel must be specifically highlighted here.

\bibliography{main}
\bibliographystyle{icml2025}

\appendix
\onecolumn
\section{Proofs}
\subsection{Proof of Theorem~\ref{Thm: 1}}
\label{Thm_1_proof}
For simplicity, we adopt the following notations in this proof:
\begin{align*}
    &K^{(1)}_{11} := \frac{1}{n}K_{\psi_1}[\mathcal{I}, \mathcal{I}],\quad K^{(1)}_{12} := \frac{1}{n}K_{\psi_1}[\mathcal{I}, \mathcal{I}^c],\quad K^{(1)}_{22} := \frac{1}{n}K_{\psi_1}[\mathcal{I}^c, \mathcal{I}^c] \\
    &K^{(2)}_{11} := \frac{1}{n}K_{\psi_2}[\mathcal{I}, \mathcal{I}],\quad K^{(2)}_{12} := \frac{1}{n}K_{\psi_2}[\mathcal{I}, \mathcal{I}^c],\quad K^{(2)}_{22} := \frac{1}{n}K_{\psi_2}[\mathcal{I}^c, \mathcal{I}^c] \\
    &\Lambda_{11} := \Lambda_{\psi_1,\psi_2}[\mathcal{I}, \mathcal{I}],\quad \Lambda_{12} := \Lambda_{\psi_1,\psi_2}[\mathcal{I}, \mathcal{I}^c],\quad \Lambda_{22} := \Lambda_{\psi_1,\psi_2}[\mathcal{I}^c, \mathcal{I}^c]
\end{align*}
As a result, the following holds by definition:
\begin{equation*}
  \Lambda_{11}=K^{(1)}_{11}- K^{(2)}_{11},\quad \Lambda_{12}=K^{(1)}_{12}- K^{(2)}_{12},\quad \Lambda_{22}=K^{(1)}_{22}- K^{(2)}_{22}   
\end{equation*}
According to Condition 1, we know the Frobenius norm bound $\Vert K^{(1)}_{12}\Vert_F\le \epsilon_1$. Also, due to Condition 2, we know the $\ell_2$-operator norm bound $\Vert K^{(2)}_{11}\Vert_2\le \epsilon_2$. Note that the matrix $\frac{1}{n}K_{\psi_2}$ is positive semi-definite (PSD). Therefore, the Schur complement of its block representation following indices in $\mathcal{I}$ and $\mathcal{I}^c=\{1,\ldots ,n\}-\mathcal{I} $ must be  a PSD matrix, i.e.
\begin{equation*}
  K^{(2)}_{22} - {K^{(2)}_{12}}^\top {K^{(2)}_{11}}^{-1} {K^{(2)}_{12}} \succeq \mathbf{0}.   
\end{equation*}
Therefore, the above Schur complement has a non-negative trace, implying that
\begin{align*}
    &\mathrm{Tr}\Bigl(K^{(2)}_{22} - {K^{(2)}_{12}}^\top {K^{(2)}_{11}}^{-1} {K^{(2)}_{12}}\Bigr) \ge 0 \quad\Longrightarrow\quad  \mathrm{Tr}\Bigl( {K^{(2)}_{12}}^\top {K^{(2)}_{11}}^{-1} {K^{(2)}_{12}}\Bigr) \le \mathrm{Tr}\Bigl(K^{(2)}_{22}\Bigr).
\end{align*}
Therefore, we will have the following:
\begin{align*}
1 \, &=\, \mathrm{Tr}\Bigl(\frac{1}{n}K_{\psi_2}\Bigr) \\
    &\ge\, \mathrm{Tr}\Bigl(K^{(2)}_{22}\Bigr) \\
    &\ge \,  \mathrm{Tr}\Bigl( {K^{(2)}_{12}}^\top {K^{(2)}_{11}}^{-1} {K^{(2)}_{12}}\Bigr)  \\
    &\ge \,  \mathrm{Tr}\Bigl( {K^{(2)}_{12}}^\top \bigl(\frac{1}{\lambda_{\max}(K^{(2)}_{11})}I\bigr) {K^{(2)}_{12}}\Bigr) \\
    &\ge \,  \frac{1}{\lambda_{\max}(K^{(2)}_{11})}\mathrm{Tr}\Bigl( {K^{(2)}_{12}}^\top {K^{(2)}_{12}}\Bigr) \\
    &= \,  \frac{1}{\Vert K^{(2)}_{11}\Vert_2}\mathrm{Tr}\Bigl( {K^{(2)}_{12}}^\top {K^{(2)}_{12}}\Bigr) \\
    &= \,  \frac{1}{\Vert K^{(2)}_{11}\Vert_2}\bigl\Vert K^{(2)}_{12} \bigr\Vert^2_F
\end{align*}

The above means that Condition 2 implies  
\[  
\left\| K^{(2)}_{12} \right\|^2_F \leq \left\| K^{(2)}_{11} \right\|_2 \leq \epsilon_2.  
\]  
As a result, we can apply Young's inequality to show that  
\[  
\left\| \Lambda_{12} \right\|^2_F \leq 2 \left( \left\| K^{(2)}_{12} \right\|^2_F + \left\| K^{(1)}_{12} \right\|^2_F \right) \leq 2(\epsilon^2_1 + \epsilon_2).  
\]  
Therefore, we will have the following:
\begin{equation*}
    \Bigl\Vert \Lambda_{\psi_1,\psi_2} - \underbrace{\begin{bmatrix} \Lambda_{11} & \mathbf{0} \\ \mathbf{0} & \Lambda_{22} \end{bmatrix}}_{\Large\widetilde{\Lambda}_{\psi_1,\psi_2}} \Bigr\Vert^2_F \, =\, 2\bigl\Vert \Lambda_{12} \bigr\Vert^2_F  \, \le \, 4(\epsilon^2_1+\epsilon_2).
\end{equation*}
Now, since $ \Lambda_{\psi_1,\psi_2}$ is a symmetric matrix, we can apply spectral decomposition to write it as $\Lambda_{\psi_1,\psi_2} = V\mathrm{diag}(\boldsymbol{\lambda})V^\top $ where every row $\mathbf{v}_i$ of matrix $V$ is an eigenvector of $\Lambda_{\psi_1,\psi_2}$ with corresponding eigenvalue $\lambda_i$, sorted as $\lambda_1\ge \cdots \ge \lambda_n$. Note that the eigenvalues are real and sum up to $0$ as the trace of $\Lambda_{\psi_1,\psi_2}$ is zero. Then, we can write:
\begin{align*}
    \sum_{i=1}^n \bigl\Vert {\widetilde{\Lambda}}_{\psi_1,\psi_2}\mathbf{v}_i - {{\Lambda}}_{\psi_1,\psi_2}\mathbf{v}_i  \bigr\Vert^2_2 \, =& \, \sum_{i=1}^n \bigl\Vert \bigl({\widetilde{\Lambda}}_{\psi_1,\psi_2}- {{\Lambda}}_{\psi_1,\psi_2}\bigr)\mathbf{v}_i  \bigr\Vert^2_2 \\
    =& \, \sum_{i=1}^n \mathbf{v}_i ^\top\bigl({\widetilde{\Lambda}}_{\psi_1,\psi_2}- {{\Lambda}}_{\psi_1,\psi_2}\bigr)^\top\bigl({\widetilde{\Lambda}}_{\psi_1,\psi_2}- {{\Lambda}}_{\psi_1,\psi_2}\bigr)\mathbf{v}_i \\
    =& \, \sum_{i=1}^n \mathrm{Tr}\Bigl(\mathbf{v}_i ^\top\bigl({\widetilde{\Lambda}}_{\psi_1,\psi_2}- {{\Lambda}}_{\psi_1,\psi_2}\bigr)^\top\bigl({\widetilde{\Lambda}}_{\psi_1,\psi_2}- {{\Lambda}}_{\psi_1,\psi_2}\bigr)\mathbf{v}_i\Bigr)
    \\
    =& \, \sum_{i=1}^n \mathrm{Tr}\Bigl(\mathbf{v}_i\mathbf{v}_i ^\top\bigl({\widetilde{\Lambda}}_{\psi_1,\psi_2}- {{\Lambda}}_{\psi_1,\psi_2}\bigr)^\top\bigl({\widetilde{\Lambda}}_{\psi_1,\psi_2}- {{\Lambda}}_{\psi_1,\psi_2}\bigr)\Bigr) \\
    =& \,  \mathrm{Tr}\Bigl(\Bigl(\sum_{i=1}^n\mathbf{v}_i\mathbf{v}_i^\top\Bigr) \bigl({\widetilde{\Lambda}}_{\psi_1,\psi_2}- {{\Lambda}}_{\psi_1,\psi_2}\bigr)^\top\bigl({\widetilde{\Lambda}}_{\psi_1,\psi_2}- {{\Lambda}}_{\psi_1,\psi_2}\bigr)\Bigr) \\
    =& \,  \mathrm{Tr}\Bigl( \bigl({\widetilde{\Lambda}}_{\psi_1,\psi_2}- {{\Lambda}}_{\psi_1,\psi_2}\bigr)^\top\bigl({\widetilde{\Lambda}}_{\psi_1,\psi_2}- {{\Lambda}}_{\psi_1,\psi_2}\bigr)\Bigr)
    \\
    =& \,  \Bigl\Vert {\widetilde{\Lambda}}_{\psi_1,\psi_2}- {{\Lambda}}_{\psi_1,\psi_2}\Bigr\Vert_F^2 \\
    \le& \,  4\bigl(\epsilon_1^2+\epsilon_2\bigr)
\end{align*}
As a result, we can write
\begin{align*}
    4\bigl(\epsilon_1^2+\epsilon_2\bigr)\, \ge & \, \sum_{i=1}^n \bigl\Vert {\widetilde{\Lambda}}_{\psi_1,\psi_2}\mathbf{v}_i - {{\Lambda}}_{\psi_1,\psi_2}\mathbf{v}_i  \bigr\Vert^2_2 \\ 
    =& \, \sum_{i=1}^n \bigl\Vert {\widetilde{\Lambda}}_{\psi_1,\psi_2}\mathbf{v}_i - \lambda_i\mathbf{v}_i  \bigr\Vert^2_2 \\ 
    =& \, \sum_{i=1}^n \bigl\Vert \Lambda_{11}\mathbf{v}_i[\mathcal{I}] - \lambda_i\mathbf{v}_i[\mathcal{I}]  \bigr\Vert^2_2 + \bigl\Vert \Lambda_{22}\mathbf{v}_i[\mathcal{I}^c] - \lambda_i\mathbf{v}_i[\mathcal{I}^c]  \bigr\Vert^2_2 \\
    \ge& \, \sum_{i=1}^n \bigl(\lambda_i - \lambda^{\mathcal{I}}_i\bigr)^2 \bigl\Vert \mathbf{v}_i[\mathcal{I}]\bigr\Vert^2_2 + \bigl(\lambda_i - \lambda^{\mathcal{I}^c}_i\bigr)^2\bigl\Vert \mathbf{v}_i[\mathcal{I}^c]\bigr\Vert^2_2.
\end{align*}
In the above, the last inequality holds as we know for every PSD matrix $A$ and vector $\mathbf{v}$, we have $\Vert A\mathbf{v} - \lambda\mathbf{v}\Vert_2\ge |\lambda_j - \lambda|\ \Vert \mathbf{v}\Vert_2$, where $\lambda_j$ is the eigenvalue of $A$ with the minimum absolute difference $\vert \lambda_j -\lambda\vert$. Therefore, the proof of Theorem~\ref{Thm: 1} is complete.

\subsection{Proof of Proposition~\ref{Proposition: compute eignevctors}}
Note that we can write $\Lambda_{\psi_1,\psi_2}$ using the following matrix multiplication:
\begin{align*}
    \Lambda_{\psi_1,\psi_2} = \frac{1}{n}\begin{bmatrix} \Phi_{\psi_1} & \Phi_{\psi_2} \end{bmatrix} \begin{bmatrix} \Phi_{\psi_1}^\top \\ \\-\Phi_{\psi_2}^{\scriptsize \top} \end{bmatrix}
 \end{align*}
In the above, we define $\Phi_{\psi_1}\in\mathbb{R}^{n\times d_1}$ to be the embedding of dataset $x_1,\ldots ,x_n$ with embedding map $\psi_1$, i.e., its $i$th row will be $\phi_1(\psi_1(x_i))$, and similarly we let $\Phi_{\psi_2}\in\mathbb{R}^{n\times d_2}$ to be the embedding of dataset with embedding map $\psi_2$ with its $i$th row being $\phi_2(\psi_2(x_i))$. Therefore, if we define $A = \begin{bmatrix} \Phi_{\psi_1} & \Phi_{\psi_2} \end{bmatrix}$ and $B = \frac{1}{n}\begin{bmatrix} \Phi^\top_{\psi_1} & -\Phi_{\psi_2}^\top \end{bmatrix}^\top$, then we will have $\Lambda_{\psi_1,\psi_2} = AB$. 

On the other hand, we know that for every matrix $A\in\mathbb{R}^{n\times (d_1+d_2)}$ and $B\in\mathbb{R}^{(d_1+d_2) \times n}$, $AB$ and $BA$ share the same non-zero eigenvalues. In this case, the matrix $BA$ sharing the non-zero eigenvalues with $\Lambda_{\psi_1,\psi_2} = AB$ can be calculated as
\begin{align*}
    BA \, &=\, \frac{1}{n} \begin{bmatrix} \Phi_{\psi_1}^\top\vspace{2mm} \\ -\Phi_{\psi_2}^\top \end{bmatrix} \begin{bmatrix} \Phi_{\psi_1} & \Phi_{\psi_2} \end{bmatrix} \\
    &=\, \begin{bmatrix} \frac{1}{n} \Phi_{\psi_1}^\top\Phi_{\psi_1} & \frac{1}{n} \Phi_{\psi_1}^\top\Phi_{\psi_2}\vspace{2mm} \\ -\frac{1}{n} \Phi_{\psi_2}^\top\Phi_{\psi_1} & -\frac{1}{n} \Phi_{\psi_2}^\top\Phi_{\psi_2}\end{bmatrix} \\
    &=\, \begin{bmatrix} C_{\psi_1} & C_{\psi_1,\psi_2}\vspace{2mm} \\  -C_{\psi_1,\psi_2}^\top & -C_{\psi_2}\end{bmatrix} \\
    &= \, \Gamma_{\psi_1,\psi_2}.
\end{align*}
In addition, for every eigenvector $\mathbf{v}$ (corresponding to a non-zero eigenvalue) of $\Gamma_{\psi_1,\psi_2} = BA$, we have that $\mathbf{u}=A\mathbf{v}$ is an eigenvector of $\Lambda_{\psi_1,\psi_2}= AB$ which is
\begin{equation*}
    \mathbf{u} = \begin{bmatrix} \Phi_{\psi_1} & \Phi_{\psi_2} \end{bmatrix}\mathbf{v}  = \begin{bmatrix} \phi_1(\psi_1(x_1)) & \phi_2(\psi_2(x_1)) \\
    \vdots & \vdots \\
    \phi_1(\psi_1(x_n)) & \phi_2(\psi_2(x_n)) \end{bmatrix}\mathbf{v} 
\end{equation*}
Therefore, the proof is complete.

\subsection{Computation of Eigenvectors and Eigenvalues of $\Gamma_{\psi_1,\psi_2}$}\label{Sec: Appendix Compute Eigvals}
Considering the notations in the previous proof, note that the defined differential covariance matrix $\Gamma_{\psi_1,\psi_2}$ can be written as follows:
\begin{align*}
\Gamma_{\psi_1,\psi_2}\, =\, \begin{bmatrix} C_{\psi_1} & C_{\psi_1,\psi_2}\vspace{2mm} \\  -C_{\psi_1,\psi_2}^\top & -C_{\psi_2}\end{bmatrix} \, =\,  \underbrace{\begin{bmatrix} I_{d_1\times d_1} & \mathbf{0}_{d_1\times d_2}\vspace{2mm} \\  \mathbf{0}_{d_2\times d_1} & -I_{d_2\times d_2}\end{bmatrix}}_{\text{Matrix}\: D}\underbrace{\begin{bmatrix} C_{\psi_1} & C_{\psi_1,\psi_2}\vspace{2mm} \\  C_{\psi_1,\psi_2}^\top & C_{\psi_2}\end{bmatrix}}_{\text{Matrix}\: C} 
\end{align*}
In the above, $D$ is a diagonal matrix with $\pm 1$ diagonal entries, and $C$ represents the joint kernel covariance matrix of the concatenation of the embedding vectors $[\phi_1(\psi_1(x)), \phi_2(\psi_2(x))]$ that is a PSD matrix. Since $C$ is a $(d_1+d_2)\times (d_1+d_2)$ symmetric PSD matrix, we can apply the Cholesky decomposition to write $C= Z^\top Z$ for an upper-triangular matrix $Z\in\mathbb{R}^{(d_1+d_2)\times (d_1+d_2)}$. Therefore, we have $\Gamma_{\psi_1,\psi_2}= DZ^\top Z$. While $\Gamma_{\psi_1,\psi_2} = DZ^\top Z$ is not a symmetric matrix, we note that the multiplication-order-flipped $\Theta = Z D Z^\top \in\mathbb{R}^{(d_1+d_2)\times (d_1+d_2)}$ will be a symmetric matrix. Therefore, the symmetric matrix $\Theta$ shares the same non-zero eigenvalues with $\Gamma_{\psi_1,\psi_2}$, and for each eigenvector $w$ of $\Theta$, the vector $u= DZ^\top w$ will be the corresponding eigevector of $\Gamma_{\psi_1,\psi_2}$. To conclude, one can perform the eigendecomposition for the symmetric matrix $\Theta$ and then convert the non-zero eigenvalues and eigenvectors to  compute those of $\Gamma_{\psi_1,\psi_2}$, resulting in the following procedure:
\begin{enumerate}[leftmargin=*]
    \item Form the PSD matrix $C = \begin{bmatrix} C_{\psi_1} & C_{\psi_1,\psi_2}\vspace{2mm} \\  C_{\psi_1,\psi_2}^\top & C_{\psi_2}\end{bmatrix}$
    \item Apply Cholesky decomposition to write $C= Z^\top Z$ for an upper-triangular matrix $Z\in\mathbb{R}^{(d_1+d_2)\times (d_1+d_2)}$
    \item Compute the symmetric matrix $\Theta = Z \begin{bmatrix} I_{d_1\times d_1} & \mathbf{0}_{d_1\times d_2}\vspace{2mm} \\  \mathbf{0}_{d_2\times d_1} & -I_{d_2\times d_2}\end{bmatrix} Z^\top$
    \item Perform eigendecomposition on symmetric matrix $\Theta$ to compute $d_1+d_2$ pairs of eigenvalues and eigenvectors $(\lambda_i,w_i)_{i=1}^{d_1+d_2}$
    \item For each $i\in\{1,\ldots d_1+d_2\}$, compute $u_i = \begin{bmatrix} I_{d_1\times d_1} & \mathbf{0}_{d_1\times d_2}\vspace{2mm} \\  \mathbf{0}_{d_2\times d_1} & -I_{d_2\times d_2}\end{bmatrix}Z^\top w_i$
\end{enumerate}

\subsection{Proof of Theorem~\ref{Thm: Fouirer features}}
\label{Thm_proof:  Fouirer features}

As stated in the theorem, we consider independent random Fourier features $\omega_1,\ldots , \omega_{m}\sim\widehat{\kappa}$ where the proxy feature map is:
\begin{equation*}
    \widehat{\phi}(x)=\frac{1}{\sqrt{m}}\Bigl[\cos(\omega_1^\top x),\sin(\omega_1^\top x),.,\cos(\omega_m^\top x),\sin(\omega_m^\top x)\Bigr] 
\end{equation*}
Based on the assumption, the shift-invariant kernel $k(x,y)=\kappa(x-y)$ is normalized where $k(x,x)=1$ for every $x\in\mathcal{X}$. Therefore, using the Fourier synthesis equation $k(x,y) = \mathbb{E}_{\omega\sim \widehat{\kappa}}\bigl[\cos(\omega^{\top}(x-y))\bigr]=\mathbb{E}_{\omega\sim \widehat{\kappa}}\bigl[\cos(\omega^{\top}x)\cos(\omega^{\top}y)+\sin(\omega^{\top}x)\sin(\omega^{\top}y) \bigr]$. The RFF-proxy kernel function can be viewed as
$$\widehat{k}(x,y) = \frac{1}{m}\sum_{i=1}^{m} \cos(\omega_i^{\top}(x-y)).$$ As a result, if we consider kernel matrix $K_{\psi_1 ,\omega_i}$ where $k_{\psi_1,\omega_i}(x,y) = \cos(\omega_i^{\top}(\psi_1(x)-\psi_1(y)))$, we can simplify the proxy kernel matrix as 
\begin{equation*}
    \frac{1}{n} \widehat{K}_{\psi_1} = \frac{1}{m}\sum_{i=1}^m \frac{1}{n}K_{\psi_1,\omega_i}
\end{equation*}
where we note that $\mathbb{E}_{\omega_i\sim p_\omega}[\frac{1}{n}K_{\psi_1,\omega_i}] = \frac{1}{n}K_{\psi_1}$ as $\omega$ is drawn from the Fourier transform $\widehat{\kappa}$.

Also, $\Vert \frac{1}{n} K_{\psi_1,\omega_i} \Vert_F \le 1$ holds, because the kernel function is assumed to be normalized and $|k_{\psi_1}(x,y)|\le 1$ for every $x,y$. Noting that the Frobenius norm $\Vert \cdot\Vert_F$ can be written as the Euclidean norm of the vectorized matrix, the application of Vector Bernstein inequality \cite{gross2011recovering,kohler2017sub} proves for any $0\le \epsilon \le 2$: 
\begin{align*}
    &\mathbb{P}\Bigl(\Bigl\Vert \frac{1}{m}\sum_{i=1}^m  [\frac{1}{n}K_{\psi_1,\omega_i}] - \frac{1}{n}K_{\psi_1}  \Bigr\Vert_F \ge \epsilon\Bigr) \, \le\, \exp\Bigl(\frac{8-m\epsilon^2}{32}\Bigr), \\
\end{align*}
Therefore, we will have   
\begin{align*}
   & \mathbb{P}\Bigl(\Bigl\Vert \frac{1}{n}\widehat{K}_{\psi_1} - \frac{1}{n}K_{\psi_1}  \Bigr\Vert_F \ge \epsilon\Bigr) \, \le\, \exp\Bigl(\frac{8-m\epsilon^2}{32}\Bigr)
\end{align*}
Similarly, we can show that for the embedding $\psi_2$ we will have 
\begin{align*}
   & \mathbb{P}\Bigl(\Bigl\Vert \frac{1}{n}\widehat{K}_{\psi_2} - \frac{1}{n}K_{\psi_2}  \Bigr\Vert_F \ge \epsilon\Bigr) \, \le\, \exp\Bigl(\frac{8-m\epsilon^2}{32}\Bigr)
\end{align*}
Next, we note that
\begin{align*}
  \sum_{i=1}^n \Bigl\Vert \Lambda_{\psi_1,\psi_2} \widehat{\mathbf{v}}_i - {\lambda}_i\widehat{\mathbf{v}}_i\Bigr\Vert^2_2  \, &=  \, \sum_{i=1}^n \Bigl\Vert \Lambda_{\psi_1,\psi_2} \widehat{\mathbf{v}}_i - \widehat{\Lambda}_{\psi_1,\psi_2}\widehat{\mathbf{v}}_i\Bigr\Vert^2_2 \\
  \, &=  \, \sum_{i=1}^n \Bigl\Vert \Bigl(\Lambda_{\psi_1,\psi_2}  - \widehat{\Lambda}_{\psi_1,\psi_2}\Bigr)\widehat{\mathbf{v}}_i\Bigr\Vert^2_2 \\
  \, &=  \, \sum_{i=1}^n \widehat{\mathbf{v}}_i^\top \Bigl(\Lambda_{\psi_1,\psi_2}  - \widehat{\Lambda}_{\psi_1,\psi_2}\Bigr)^\top\Bigl(\Lambda_{\psi_1,\psi_2}  - \widehat{\Lambda}_{\psi_1,\psi_2}\Bigr)\widehat{\mathbf{v}}_i \\
  \, &=  \, \sum_{i=1}^n \mathrm{Tr}\Bigl(\widehat{\mathbf{v}}_i^\top \Bigl(\Lambda_{\psi_1,\psi_2}  - \widehat{\Lambda}_{\psi_1,\psi_2}\Bigr)^\top\Bigl(\Lambda_{\psi_1,\psi_2}  - \widehat{\Lambda}_{\psi_1,\psi_2}\Bigr)\widehat{\mathbf{v}}_i\Bigr) \\
  \, &=  \, \sum_{i=1}^n \mathrm{Tr}\Bigl( \Bigl(\Lambda_{\psi_1,\psi_2}  - \widehat{\Lambda}_{\psi_1,\psi_2}\Bigr)^\top\Bigl(\Lambda_{\psi_1,\psi_2}  - \widehat{\Lambda}_{\psi_1,\psi_2}\Bigr)\widehat{\mathbf{v}}_i\widehat{\mathbf{v}}_i^\top\Bigr) \\
   \, &=  \,  \mathrm{Tr}\Bigl( \Bigl(\Lambda_{\psi_1,\psi_2}  - \widehat{\Lambda}_{\psi_1,\psi_2}\Bigr)^\top\Bigl(\Lambda_{\psi_1,\psi_2}  - \widehat{\Lambda}_{\psi_1,\psi_2}\Bigr)\Bigl(\sum_{i=1}^n\widehat{\mathbf{v}}_i\widehat{\mathbf{v}}_i^\top\Bigr)\Bigr) \\
   \, &=  \,  \mathrm{Tr}\Bigl( \Bigl(\Lambda_{\psi_1,\psi_2}  - \widehat{\Lambda}_{\psi_1,\psi_2}\Bigr)^\top\Bigl(\Lambda_{\psi_1,\psi_2}  - \widehat{\Lambda}_{\psi_1,\psi_2}\Bigr)\Bigr) \\
   \, &= \,  \Bigl\Vert\Lambda_{\psi_1,\psi_2}  - \widehat{\Lambda}_{\psi_1,\psi_2}\Bigr\Vert_F^2 \\
    \, &\le \,  2\Bigl\Vert K_{\psi_1}  - \widehat{K}_{\psi_1}\Bigr\Vert_F^2 + 2\Bigl\Vert K_{\psi_2}  - \widehat{K}_{\psi_2}\Bigr\Vert_F^2 
\end{align*}
The last line in the above inequalities follow from Young's inequality showing that $\Vert A+B\Vert_F^2 \le 2\Vert A\Vert_F^2+ 2\Vert B\Vert_F^2$. Then, setting $\delta = 2\exp\bigl(\frac{8-m\epsilon^2}{32}\bigr)$ implying $\epsilon = \sqrt{\frac{32\log(2e^{1/4}/\delta)}{m}}$, we will have
\begin{align*}
   & \mathbb{P}\Bigl(\Bigl\Vert \frac{1}{n}\widehat{K}_{\psi_1} - \frac{1}{n}K_{\psi_1}  \Bigr\Vert_F \le \sqrt{\frac{32\log(2e^{1/4}/\delta)}{m}}\Bigr) \, \ge\, 1-\frac{\delta}{2},\quad \mathbb{P}\Bigl(\Bigl\Vert \frac{1}{n}\widehat{K}_{\psi_2} - \frac{1}{n}K_{\psi_2}  \Bigr\Vert_F \le \sqrt{\frac{32\log(2e^{1/4}/\delta)}{m}}\Bigr) \, \ge\, 1-\frac{\delta}{2}
\end{align*}
where by applying the union bound we can show
\begin{align*}
   & \mathbb{P}\Bigl(\Bigl\Vert \frac{1}{n}\widehat{K}_{\psi_1} - \frac{1}{n}K_{\psi_1}  \Bigr\Vert_F \le \sqrt{\frac{32\log(2e^{1/4}/\delta)}{m}} \;\; \text{and}\;\; \Bigl\Vert \frac{1}{n}\widehat{K}_{\psi_2} - \frac{1}{n}K_{\psi_2}  \Bigr\Vert_F \le \sqrt{\frac{32\log(2e^{1/4}/\delta)}{m}}\Bigr) \, \ge\, 1-\delta
\end{align*}
which shows that 
\begin{align*}
   & \mathbb{P}\Bigl( \sum_{i=1}^n \Bigl\Vert \Lambda_{\psi_1,\psi_2} \widehat{\mathbf{v}}_i - {\lambda}_i\widehat{\mathbf{v}}_i\Bigr\Vert^2_2 \le {\frac{128\log(2e^{1/4}/\delta)}{m}}\Bigr) \, \ge\, 1-\delta
\end{align*}
The above completes the theorem's proof given that $2e^{1/4}\approx 2.57 <3$.
\subsection{Proof of Proposition~\ref{Proposition: gradient of psec-diff}}
To show this statement, we leverage the fact that the eigenvalues of matrix $\Gamma_{\psi_{1,\theta},\psi_2}$ are real, as they are shared with the symmetric matrix $\Lambda_{\psi_1,\psi_2}$. Then, we can leverage the Jordan canonical form to write the matrix $\Gamma_{\psi_{1,\theta},\psi_2}$ as follows:
\begin{equation*}
    \Gamma_{\psi_{1,\theta},\psi_2} = U J U^{-1}
\end{equation*}
In the above, matrix $U\in\mathbb{R}^{d_1+d_2}$ includes the right generalized eigenvectors of matrix $\Gamma_{\psi_{1,\theta},\psi_2}$ as its rows, and $U^{-1}$ includes the left generalized eigenvectors of matrix $\Gamma_{\psi_{1,\theta},\psi_2}$ in its rows. Also, $J$ is the Jordan normal form containing one block matrix on the diagonal for every eigenvalue. Assuming that the eigenvalue with the maximum absolute value (which determines the spectral radius) has multiplicity 1, the Jordan canonical form has only one diagonal entry $\lambda_{\max}$ for the top eigenvalue, and so we can write the decomposition as:
\begin{equation*}
    \Gamma_{\psi_{1,\theta},\psi_2} = U^{(-i_{\max})} J^{(-i_{\max})} {U^{-1}}^{(-i_{\max})} + \lambda_{\max}\mathbf{u}_{\mathrm{left}}\mathbf{u}_{\mathrm{right}}^\top 
\end{equation*}
Due to the bi-orthogonality of $\mathbf{u}_{\mathrm{left}},\, \mathbf{u}_{\mathrm{right}}$ with the rest of right and left generalized eigenvectors, respectively, we will have
\begin{equation*}
    \lambda_{\max}(\Gamma_{\psi_{1,\theta},\psi_2}) = \mathbf{u}_{\mathrm{left}}^\top\Gamma_{\psi_{1,\theta},\psi_2}\mathbf{u}_{\mathrm{right}} 
\end{equation*}
Taking the partial derivative with respect to $\theta$ from the above identity proves the proposition.

\section{Additional Numerical Results}
In this section, we provide additional numerical results on embedding comparisons and alignment using the SPEC framework, further illustrating its effectiveness in identifying clustering differences across various datasets and embedding methods.

\subsection{Cholesky decomposition-based method for eigenvectors and eigenvalues computation}
\label{Cholesky decomposition runtime}
As explained in Remark~\ref{Remark Cholesky decomposition}, the computation of the eigenvalues and eigenvectors of the differential covariance matrix $\Gamma_{\psi_1, \psi_2}$ can be efficiently reduced to the eigendecomposition of a symmetric matrix through the use of Cholesky decomposition. We followed the explanation in the Appendix~\ref{Sec: Appendix Compute Eigvals} and compared the runtime between Algorithm~\ref{alg:spec} and the Cholesky decomposition-based method on the Gaussian Kernel with different RFF. As shown in Table~\ref{Table Cholesky Runtime}, the Cholesky method is more scalable and was able to handle \(d=10K\), whereas the vanilla implementation of Algorithm~\ref{alg:spec} using Pytorch's eig command (for general square matrices) could not efficiently scale beyond dimension $d=5000$.

Furthermore, in Figures~\ref{fig:mscoco_different_r_clip}, \ref{fig:mscoco_different_r_dino}, we present the top four SPEC-identified clusters comparing CLIP with DINOv2 using different values of Random Fourier Features (RFF) $r$, ranging from $1000$ to $6000$, with Cholesky decomposition applied to the MS-COCO dataset. We observe that for $r \leq 3000$, the clusters are not well-separated and show some inconsistencies. As $r$ increases, these inconsistencies diminish, and by $r = 6000$, the clusters are clean and well-separated. We also provide the t-SNE and UMAP representations of the top 10 SPEC-identified clusters for CLIP and DINOv2 for $r = 2000,\, 6000$. in Figures~\ref{fig:mscoco_clip-dino_choelesky}, \ref{fig:mscoco_dino-clip_choelesky}.

\begin{table}[t]
\centering
\begin{tabular}{l|cccccc}
\multicolumn{7}{c}{Time (sec)} \\
\midrule
Computation Method & \(d=\)1000 & \(d=\)2000 & \(d=\)4000 & \(d=\)6000 & \(d=\)8000 & \(d=\)10000 \\
\midrule
Direct (Algorithm~\ref{alg:spec}) & 13 & 35 & 72& - & - & - \\
Cholesky Decomposition-based (Appendix~\ref{Sec: Appendix Compute Eigvals}) & 11 & 22 & 33 & 47 & 64 & 85 \\
\bottomrule
\end{tabular}
\caption{Time taken for Direct method and Cholesky Decomposition method at different values of \(d\).}
\label{Table Cholesky Runtime}
\end{table}

\begin{figure}
    \centering
    \includegraphics[width=0.86\linewidth]{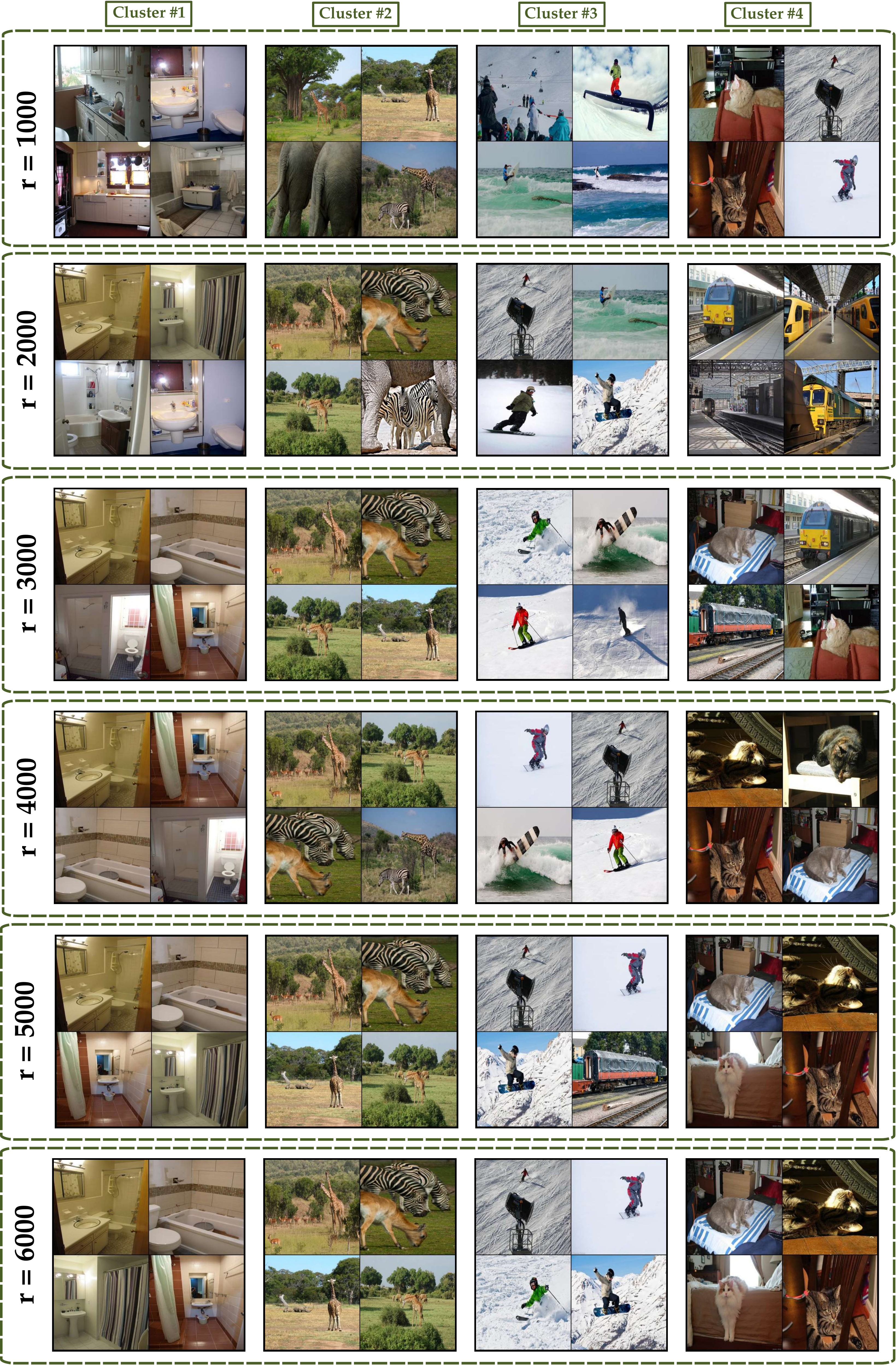}
    \vspace{-4mm}
    \caption{Top 4 SPEC-identified clusters comparing CLIP - DINOv2 with different Random Fourier features (RFF) r ranging from 1000 to 6000 using Cholesky decomposition on MS-COCO dataset.}
    \label{fig:mscoco_different_r_clip}
\end{figure}

\begin{figure}
    \centering
    \includegraphics[width=0.86\linewidth]{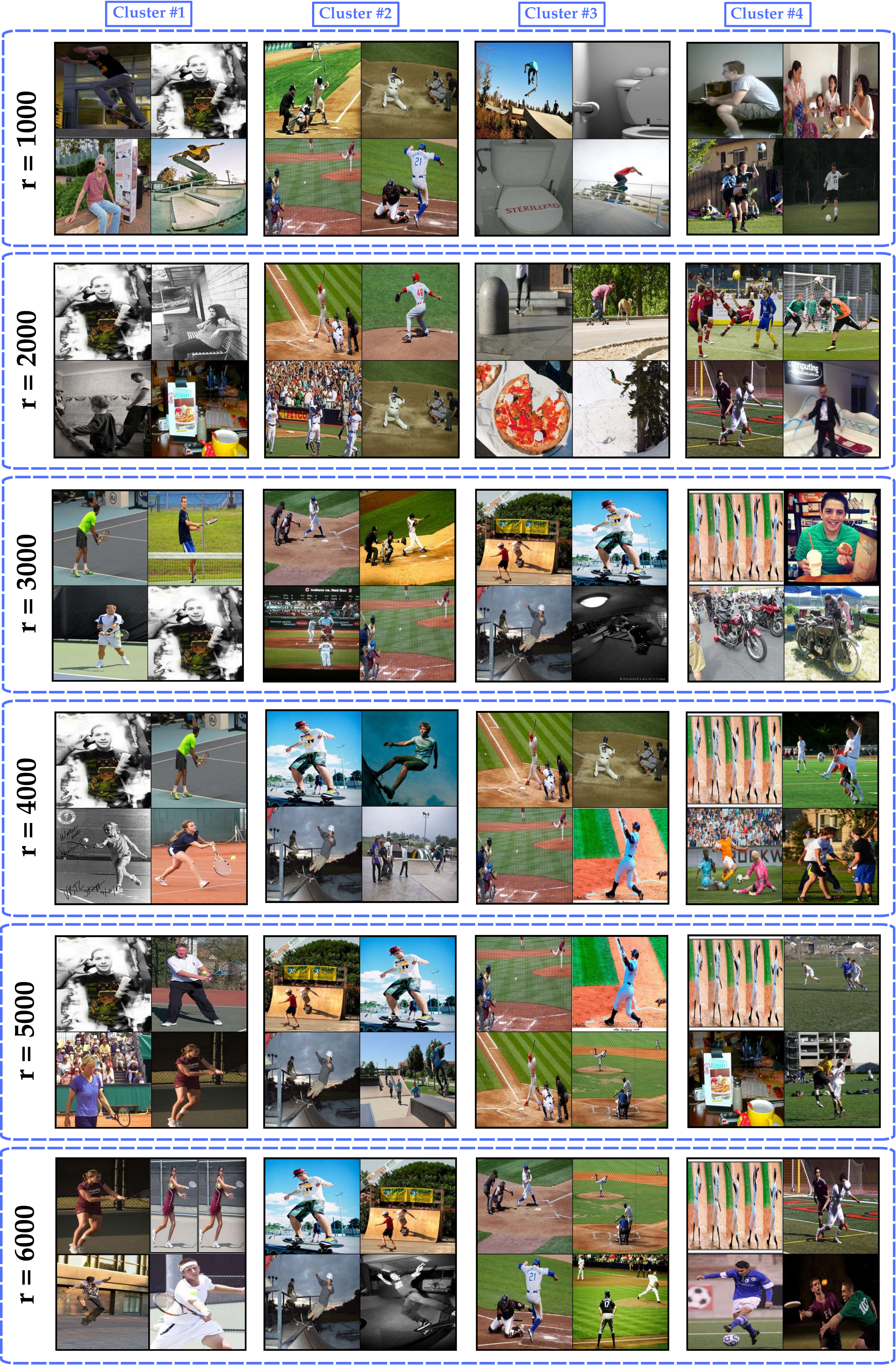}
    \vspace{-4mm}
    \caption{Top 4 SPEC-identified clusters comparing DINOv2 - CLIP with different Random Fourier features (RFF) r ranging from 1000 to 6000 using Cholesky decomposition on MS-COCO dataset.}
    \label{fig:mscoco_different_r_dino}
\end{figure}

\begin{figure}
    \centering
    \includegraphics[width=\linewidth]{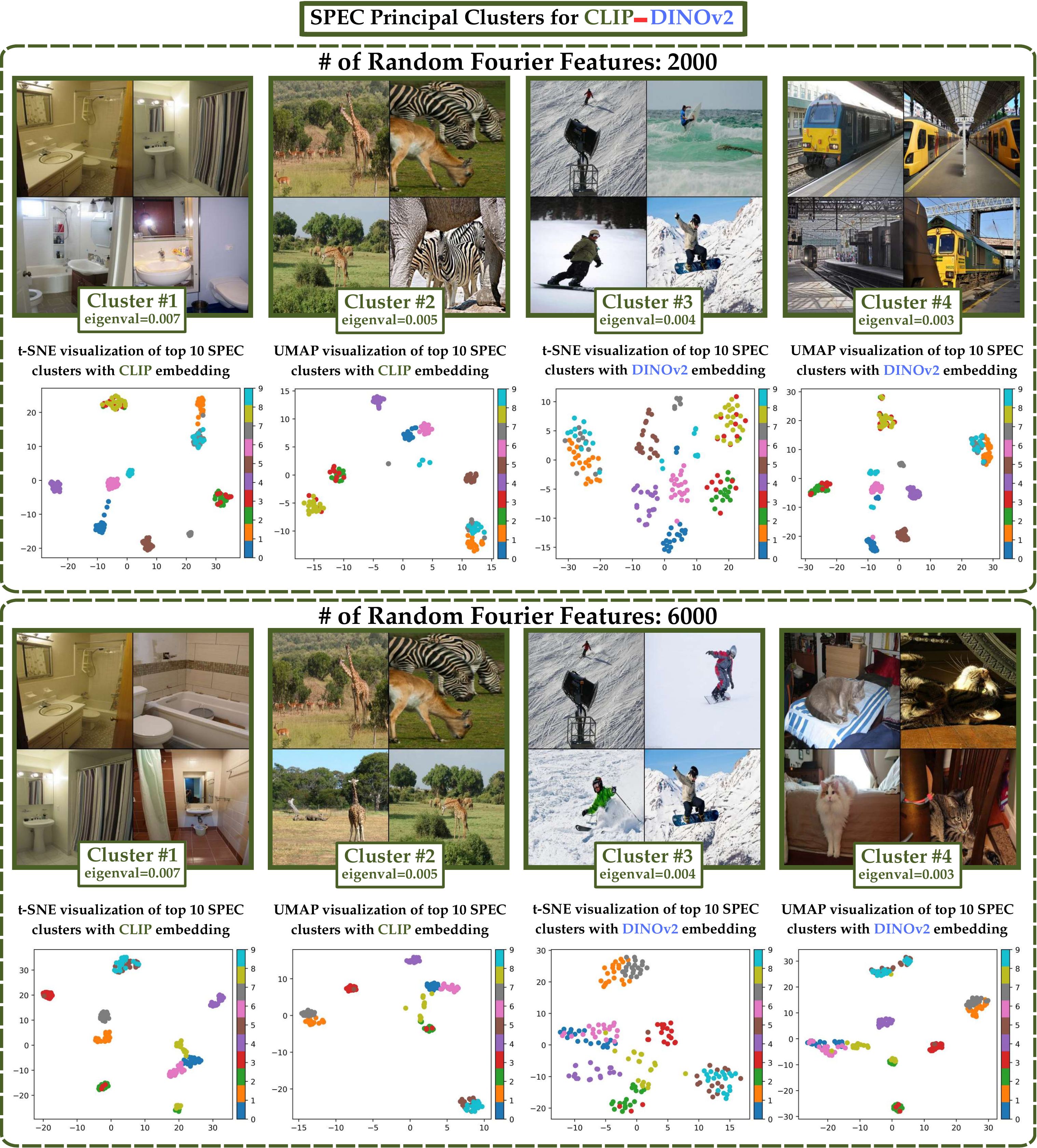}
    \caption{Top 4 SPEC-identified clusters comparing CLIP - DINOv2 embeddings on MS-COCO dataset using Cholesky decomposition with different Random Fourier Features (2000 and 6000). The second row shows the t-SNE and UMAP representation of the top 10 SPEC-identified clusters for each embedding.}
    \label{fig:mscoco_clip-dino_choelesky}
\end{figure}

\begin{figure}
    \centering
    \includegraphics[width=\linewidth]{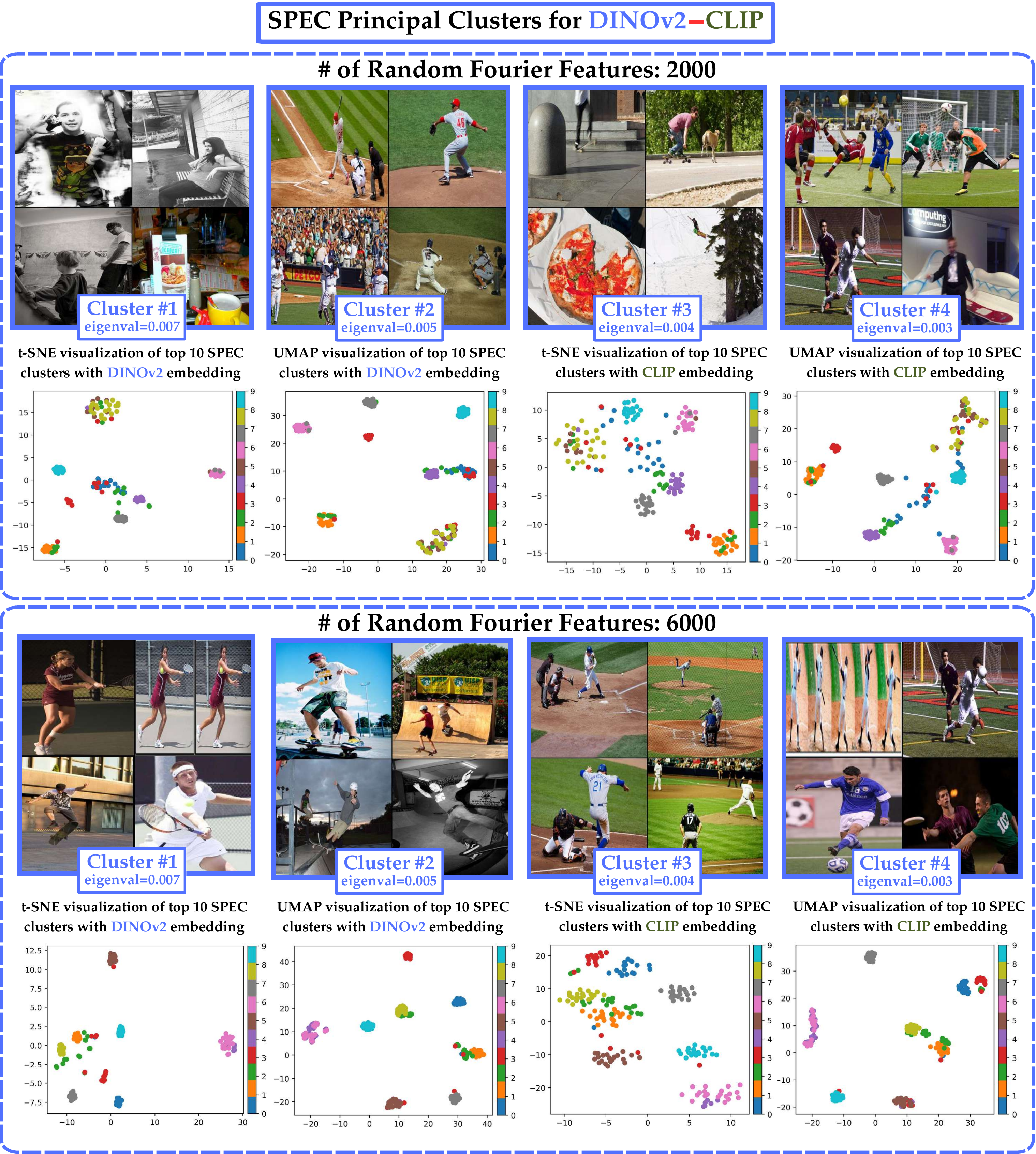}
    \caption{Top 4 SPEC-identified clusters comparing DINOv2 - CLIP embeddings on MS-COCO dataset using Cholesky decomposition with different Random Fourier Features (2000 and 6000). The second row shows the t-SNE and UMAP representation of the top 10 SPEC-identified clusters for each embedding.}
    \label{fig:mscoco_dino-clip_choelesky}
\end{figure}

\subsection{Ablation Study on the Comparison of Visualization Algorithms}

To further validate that the SPEC method can distinguish between two embeddings, we supplemented the UMAP visualization in Figure~\ref{fig:afhq} with additional techniques, PacMAP \cite{pacmap} and t-SNE. These visualizations assess how well the clusters identified by SPEC align across different dimensionality reduction methods. As shown in Figure~\ref{fig:afhq_pacmap}, the alternative techniques confirm that SPEC correctly isolates distinct clusters in one embedding, whereas the other embedding fails to produce clean separations.

\begin{figure}
    \centering
    \includegraphics[width=\linewidth]{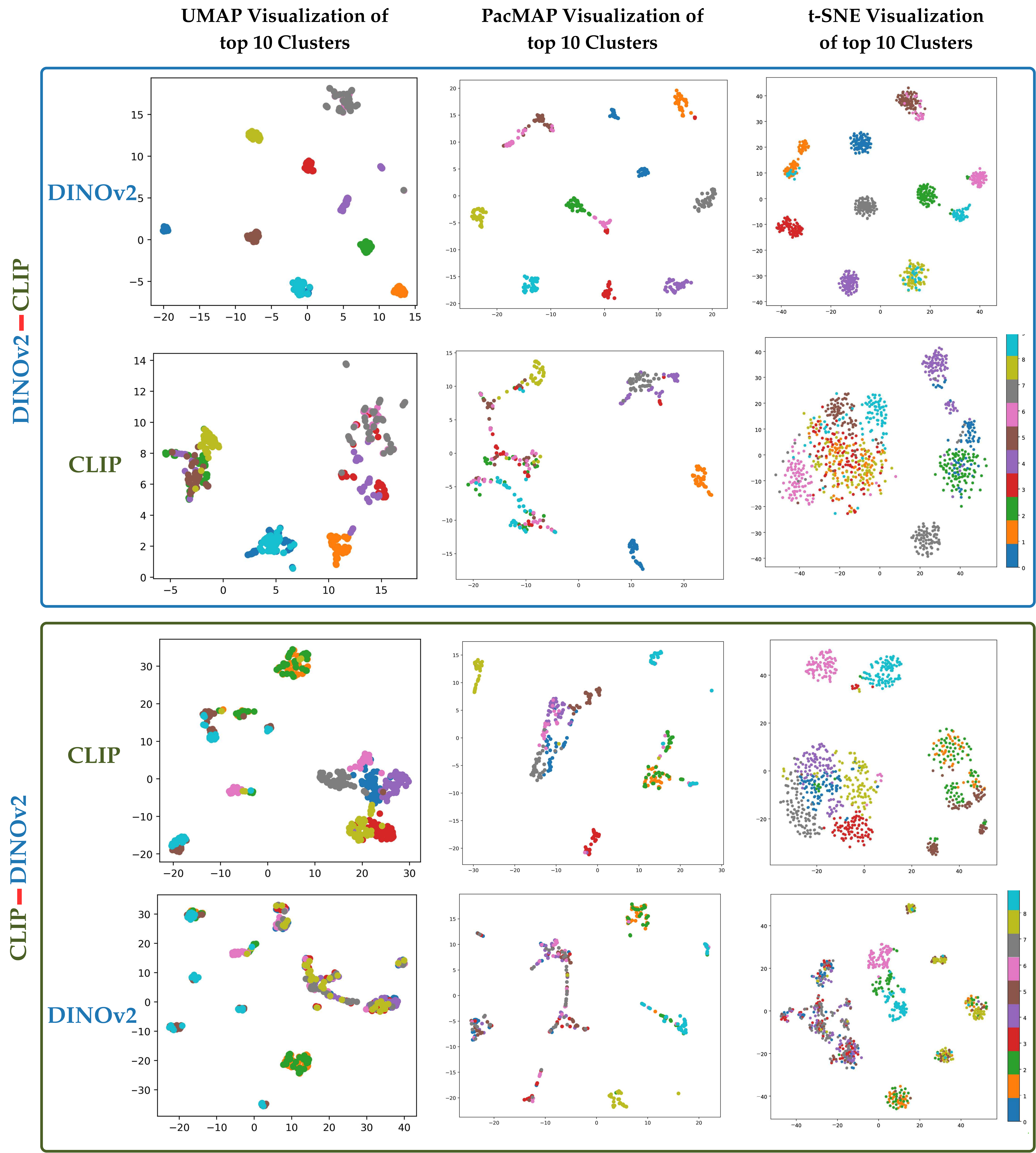}
    \caption{Comparison of SPEC-identified clusters across different visualization methods (PacMAP, t-SNE, UMAP) of Figure~\ref{fig:afhq} experiment. One embedding shows clear cluster separation, while the other fails to distinguish groups.}
    \label{fig:afhq_pacmap}
\end{figure}

\subsection{Comparison of text embeddings}
\label{text_embeddings}

\textbf{Synthetic text dataset.}
To evaluate the performance of SPEC on text embeddings, we generated a dataset of 10K text samples using GPT-4o, covering diverse categories such as profession, objects, gender, and emotions. We then applied SPEC to compare CLIP, RoBERTa, and E5 text embeddings. As shown in Figures~\ref{fig:clip_roberta_toyexample} and \ref{fig:clip_e5_toyexample}, CLIP primarily clusters sentences based on objects mentioned in the text, whereas RoBERTa organizes them according to profession and gender. For instance, CLIP's first cluster consists of sentences related to cameras and photography, while RoBERTa's first cluster groups sentences about female firefighters and female carpenters. This suggests that CLIP embeddings do not cluster professions based on gender but excel at grouping objects, which aligns with its training focus. Additionally, the t-SNE visualization reveals that the principal clusters identified in one embedding are not well-separated in the other, indicating that each model captures different semantic aspects of the text.
In addition to previous experiments, we used MS-COCO 2017 train set captions (~120K samples) to compare RoBERTa and E5-L-V2 embeddings. As shown in Figure~\ref{fig:roberta_e5_mscoco}, we can observe that E5 managed to cluster captions where two or more animals are interacting.

The prompt used to generate the text dataset: \textit{“You are an expert prompt optimizer for text-to-image models. Text-to-image models take a text prompt as input and generate images. Your task is to generate a prompt describing a person in [Profession], [Emotion], and [Gender] performing [Action] with [Object].
You can randomly choose the categories from the attributes:
Professions: Chef, doctor, journalist, scientist, carpenter, engineer, pilot, artist, teacher, firefighter.
Emotions: Excited, calm, angry, serious, curious, confident, focused, determined, happy, tired.
Genders: Male, female.
Actions: Designing, adjusting, sitting, crouching, climbing, carrying, holding, standing, inspecting, juggling.
Objects: Camera, frying pan, painting, laptop, photograph, syringe, golden throne, desk, guitar, airplane.”}

\textbf{Real world text dataset.}
To further compare the text embeddings, we validated our approach on a large-scale real text dataset: WikiText-2 \cite{wikitext}. We split the dataset into 10K samples, each containing 100 tokens. Then, we used SPEC to compare CLIP and RoBERTa embeddings. As shown in Figure\ref{fig:wikitext}. We observed that RoBERTa better clustered Military Operations \& Infrastructure, Ecology \& Species Biology, Historical Figures, and Music, while CLIP embeddings more strongly clustered Entertainment \& Sports and Science. 

In addition to t-SNE plots, we also examined the distribution of pairwise distances within each cluster to verify that one embedding successfully captured these clusters while the other was less inclined to do so. Also, we ran the K-means clustering algorithm 50 times on each of the embedding's features and computed the averaged (across the 50 runs) Normalized Mutual Information (NMI) between the K-means labels and the SPEC-identified labels. The results demonstrate that one embedding achieved considerably stronger alignment with KMeans labels.

\begin{figure}
    \centering
    \includegraphics[width=0.9\linewidth]{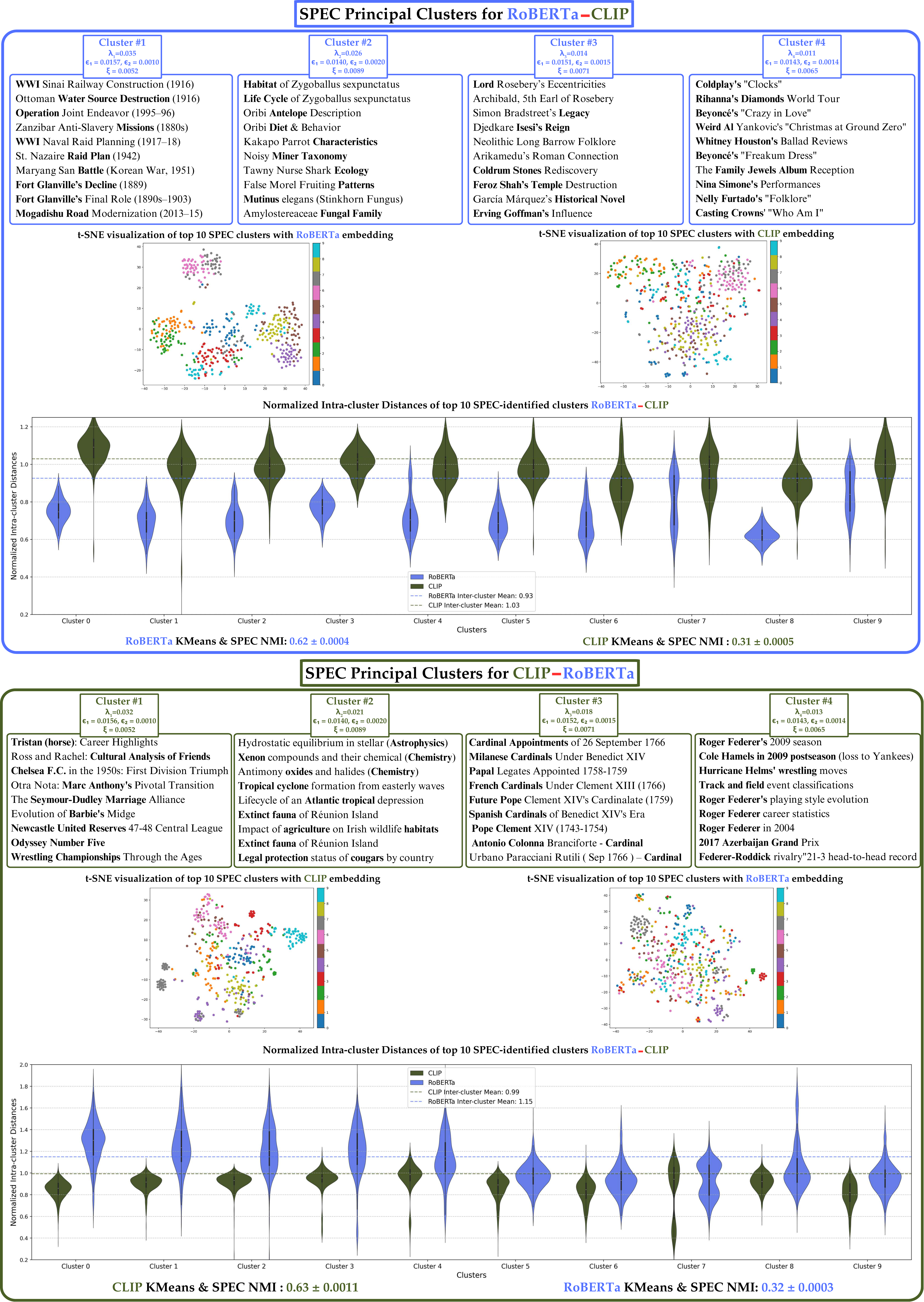}
    \caption{Top 4 SPEC-identified clusters by comparing CLIP and RoBERTa text embeddings on the WikiText-2 dataset with the visualization of the top 10 SPEC-identified clusters using t-SNE.}
    \label{fig:wikitext}
\end{figure}

\begin{figure*}
    \centering
    \includegraphics[width=0.91\linewidth]{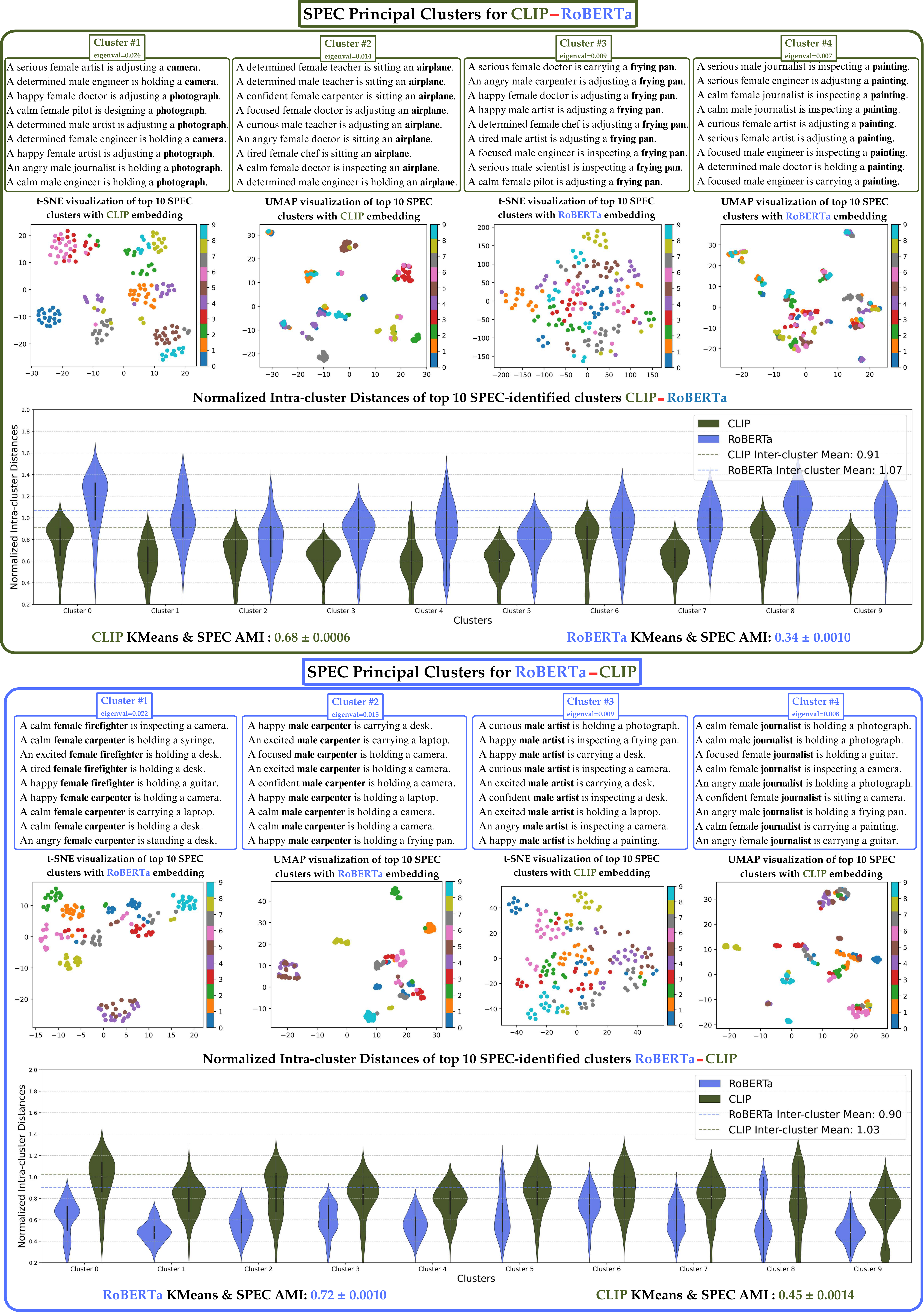}
    \vspace{-3mm}
    \caption{Top 4 SPEC-identified clusters by comparing CLIP and RoBERTa text embeddings on a dataset of 10K samples generated from GPT-4o with the visualization of the top 10 SPEC-identified clusters using t-SNE and UMAP.}
    \label{fig:clip_roberta_toyexample}
\end{figure*}

\begin{figure*}
    \centering
    \includegraphics[width=0.92\linewidth]{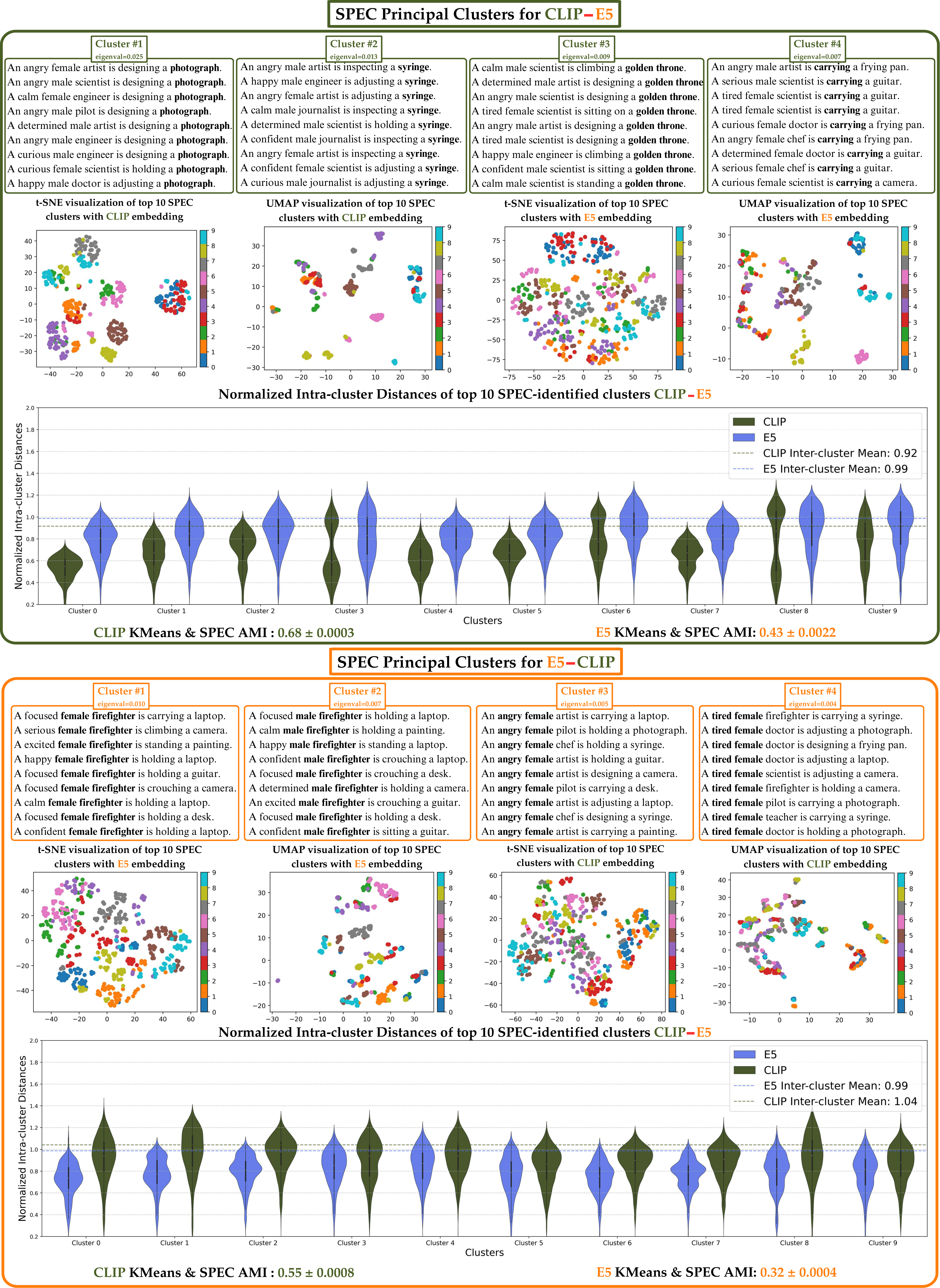}
    \vspace{-3mm}
    \caption{Top 4 SPEC-identified clusters by comparing CLIP and E5-Large-V2 text embeddings on a dataset of 10K samples generated from GPT-4o with the visualization of the top 10 SPEC-identified clusters using t-SNE.}
    \label{fig:clip_e5_toyexample}
\end{figure*}

\begin{figure*}
    \centering
    \includegraphics[width=\linewidth]{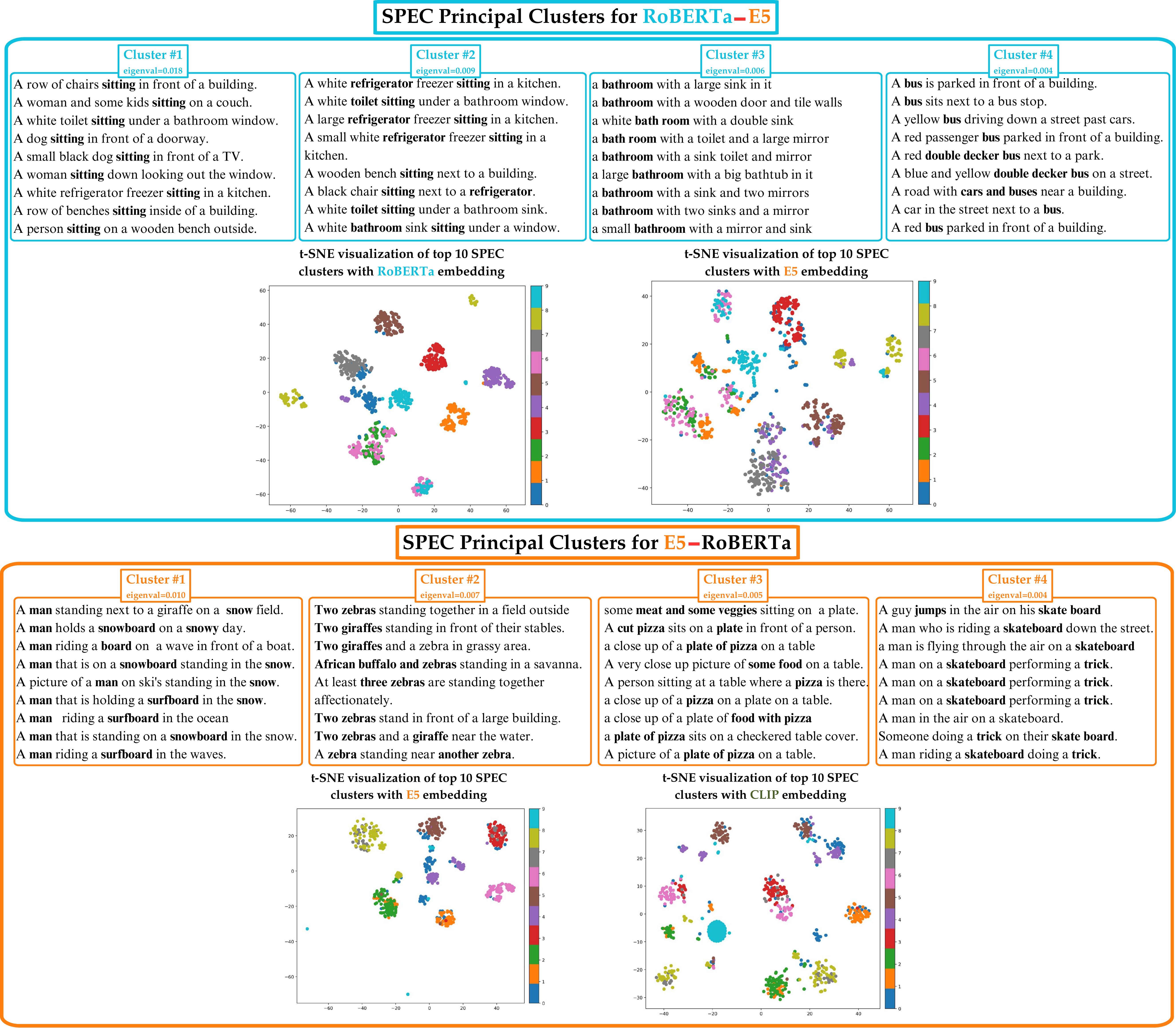}
    \vspace{-3mm}
    \caption{Top 4 SPEC-identified clusters by comparing RoBERTa and E5-Large-V2 text embeddings on MS-COCO 2017 train captions (~120K prompts) with the visualization of the top 10 SPEC-identified clusters using t-SNE.}
    \label{fig:roberta_e5_mscoco}
\end{figure*}

\subsection{Comparison of image embeddings}
\label{image_embeddings}
To explore the capability of CLIP's image embedding, we analyze the top 4 SPEC-identified clusters comparing CLIP and DINOv2 embeddings on ImageNet-1k dog breeds in Figure~\ref{fig:clip_dino_imagewoof}, highlighting their different clustering strategies. CLIP primarily groups dogs based on their posture and gestures rather than their breed. For example, in cluster two, all dogs are standing, but they belong to different breeds. This suggests that CLIP focuses more on high-level visual features like body position and orientation. In contrast, DINOv2 forms clusters based on dog breeds, grouping visually similar dogs together regardless of their posture. The last row presents the t-SNE representation of the top 10 SPEC-identified clusters for each embedding, further illustrating their distinct clustering behaviors.

We analyzed the clustering behavior of different embeddings on 120K samples from the MS-COCO 2017 dataset and observed similar trends in how different models organize visual concepts in Figure~\ref{fig:mscoco_embeddings}. For instance, SWAV demonstrates a strong ability to cluster grid-like images, suggesting its emphasis on structural patterns in images. Meanwhile, CLIP excels at differentiating activities like surfing, capturing fine-grained semantic details that may not be as distinct in SWAV or DINOv2. However, CLIP struggles to cluster certain sports, such as tennis, as effectively as SWAV or DINOv2, highlighting its relative limitations in capturing specific action-based similarities. These findings further illustrate the varying strengths of different embeddings in organizing visual content.

We also analyzed SPEC on 70,000 samples from the FFHQ dataset. As observed in Figure \ref{fig:ffhq}, DINOv2 better distinguishes images where two people are present, with one appearing incompletely, as a cluster. In contrast, CLIP is more effective at identifying children as a distinct group.

\subsection{Comparing similarity ranking for SPEC clusters}
\label{similarityranking}
In Figure ~\ref{fig:proof_afhq}, the leftmost images show the top 4 samples of SPEC-identified clusters on AFHQ. The first and second clusters correspond to black cats and black \& white cats, respectively. For each cluster, cosine similarity is computed between the mean of these samples and both 4 additional cluster members (green-bordered) and 4 random images (red-bordered). Images are sorted left to right by similarity scores. The first row represents DINOv2, the second CLIP. Unlike DINOv2, CLIP ranks some random cats more similar to the cluster mean than the actual cluster members.

For the FFHQ dataset, as shown in Figure ~\ref{fig:proof_ffhq}, the first cluster corresponds to people wearing graduation caps, and the second to people wearing sunglasses. While DINOv2 maintains a significant similarity gap, clearly distinguishing the clusters, CLIP assigns higher similarity to some samples that lack these defining features.
\begin{figure*}
    \centering
    \includegraphics[width=\linewidth]{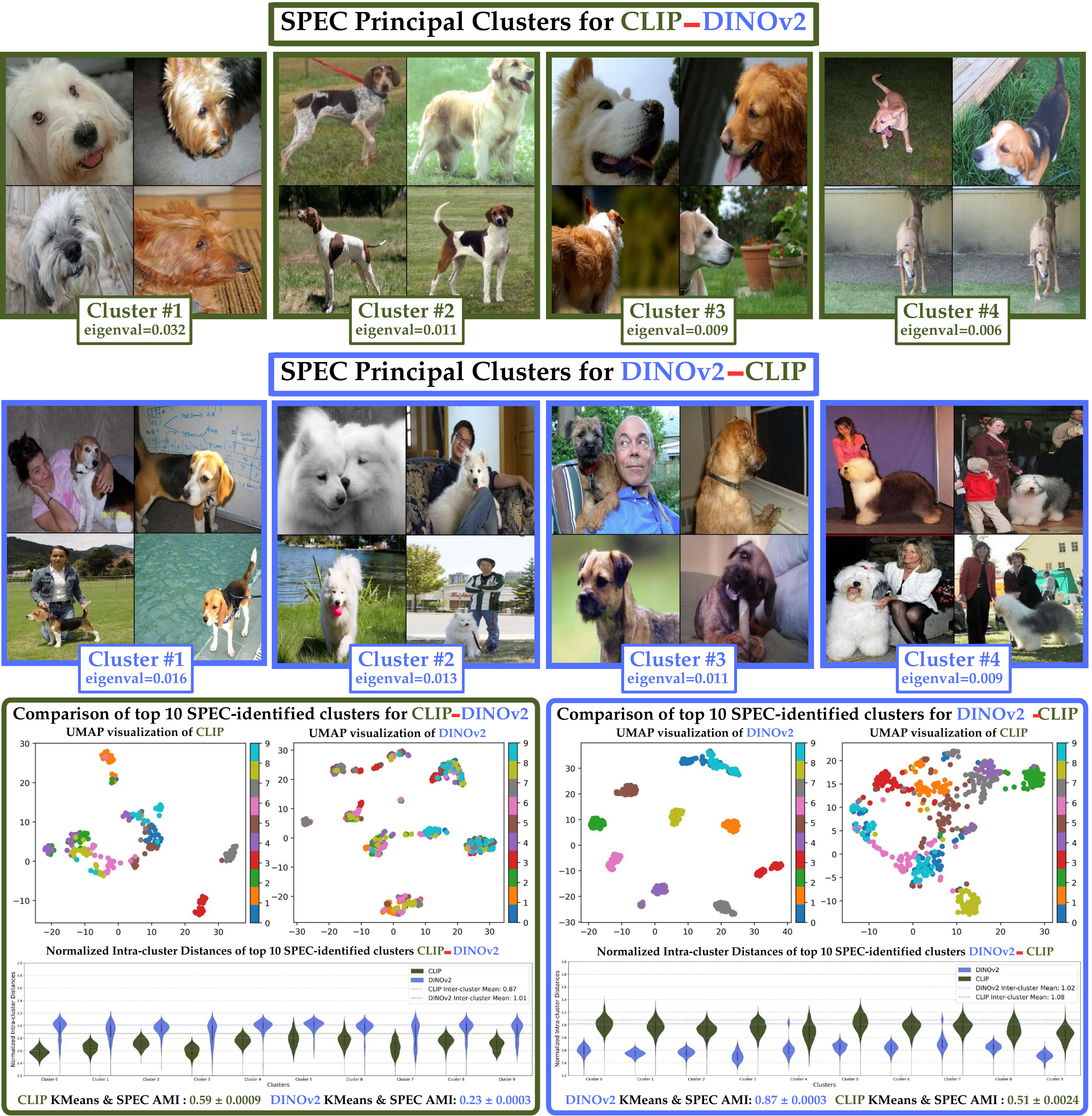}
    \vspace{-3mm}
    \caption{Top 4 SPEC-identified clusters comparing CLIP and DINOv2 embeddings on ImageNet-1k dog breeds. The last row shows the UMAP representation of the top 10 SPEC-identified clusters for each embedding.}
    \label{fig:clip_dino_imagewoof}
\end{figure*}

\begin{figure*}
    \centering 
    \includegraphics[width=\linewidth]{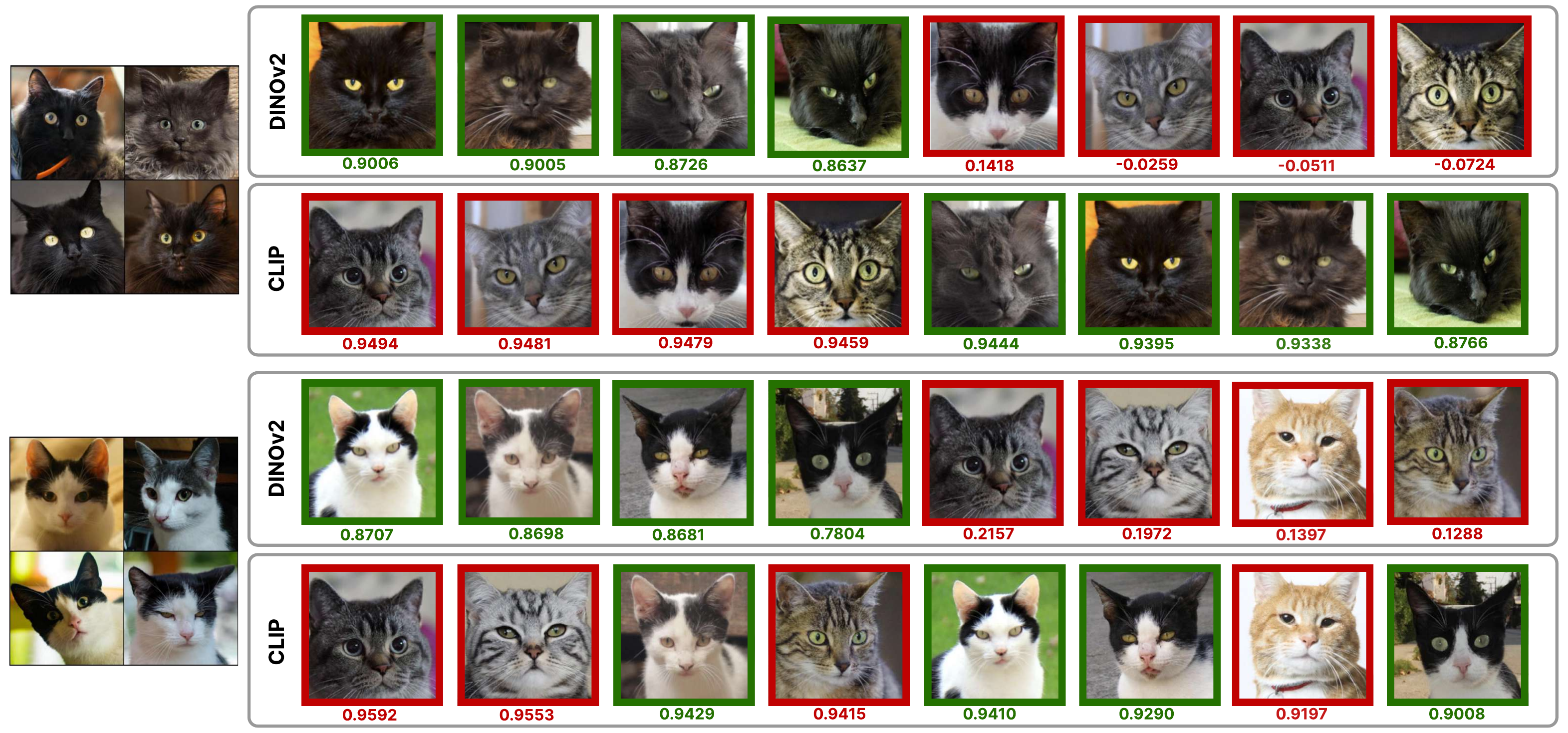}
    \caption{Comparing similarity ranking for SPEC clusters in DINOv2-CLIP on the AFHQ dataset. The leftmost images show the top 4 samples of two SPEC-identified clusters. Cosine similarity is computed with 4 cluster members (green-bordered) and 4 random images (red-bordered), sorted by score. Unlike DINOv2, CLIP ranks some random samples higher than cluster members.}
    \label{fig:proof_afhq}
\end{figure*}

\begin{figure*}
    \centering 
    \includegraphics[width=\linewidth]{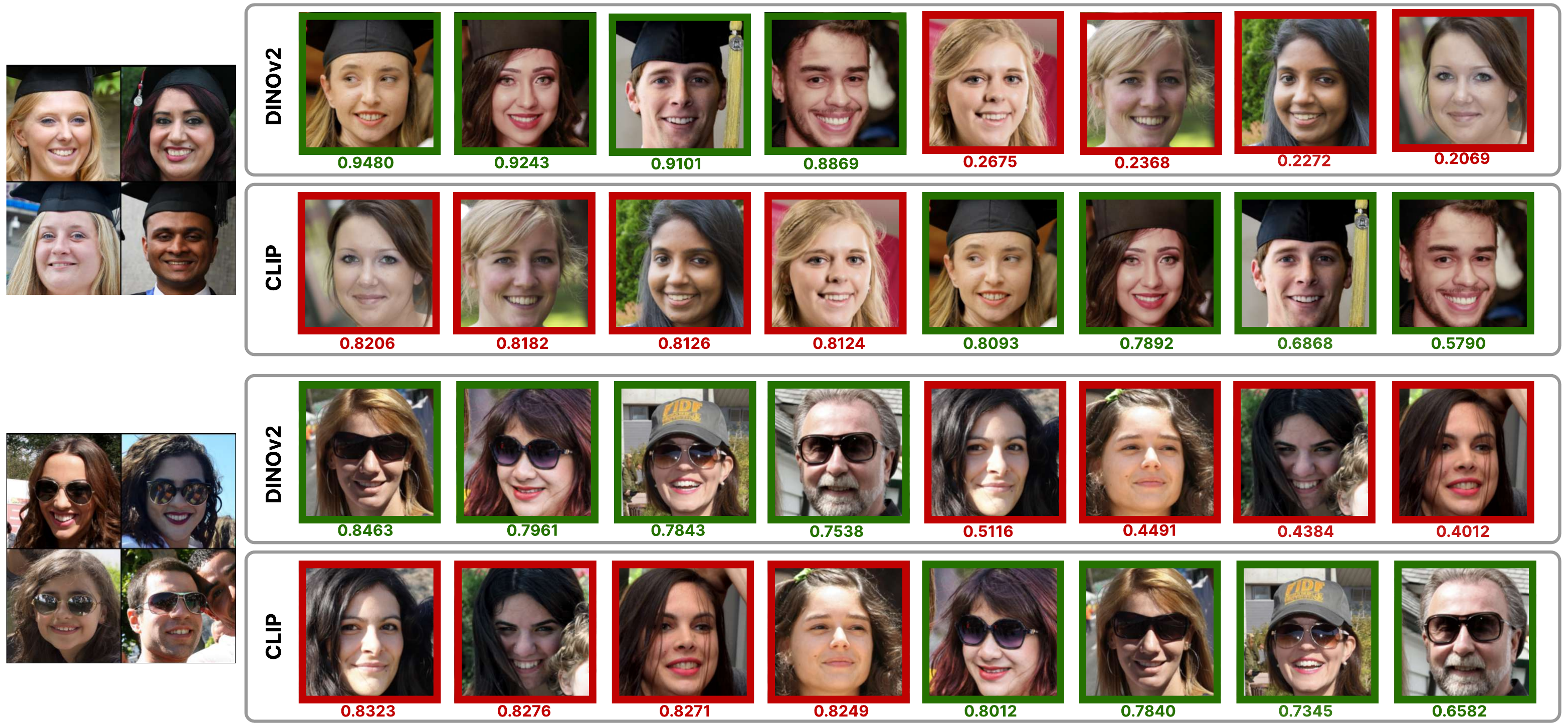}
    \caption{Comparing similarity ranking for SPEC clusters in DINOv2-CLIP on the FFHQ dataset. The leftmost images show the top 4 samples of two SPEC-identified clusters. Cosine similarity is computed with 4 cluster members (green-bordered) and 4 random images (red-bordered), sorted by score. Unlike DINOv2, CLIP ranks some random samples higher than cluster members.}
    \label{fig:proof_ffhq}
\end{figure*}

\begin{figure*}
    \centering 
    \includegraphics[width=\linewidth]{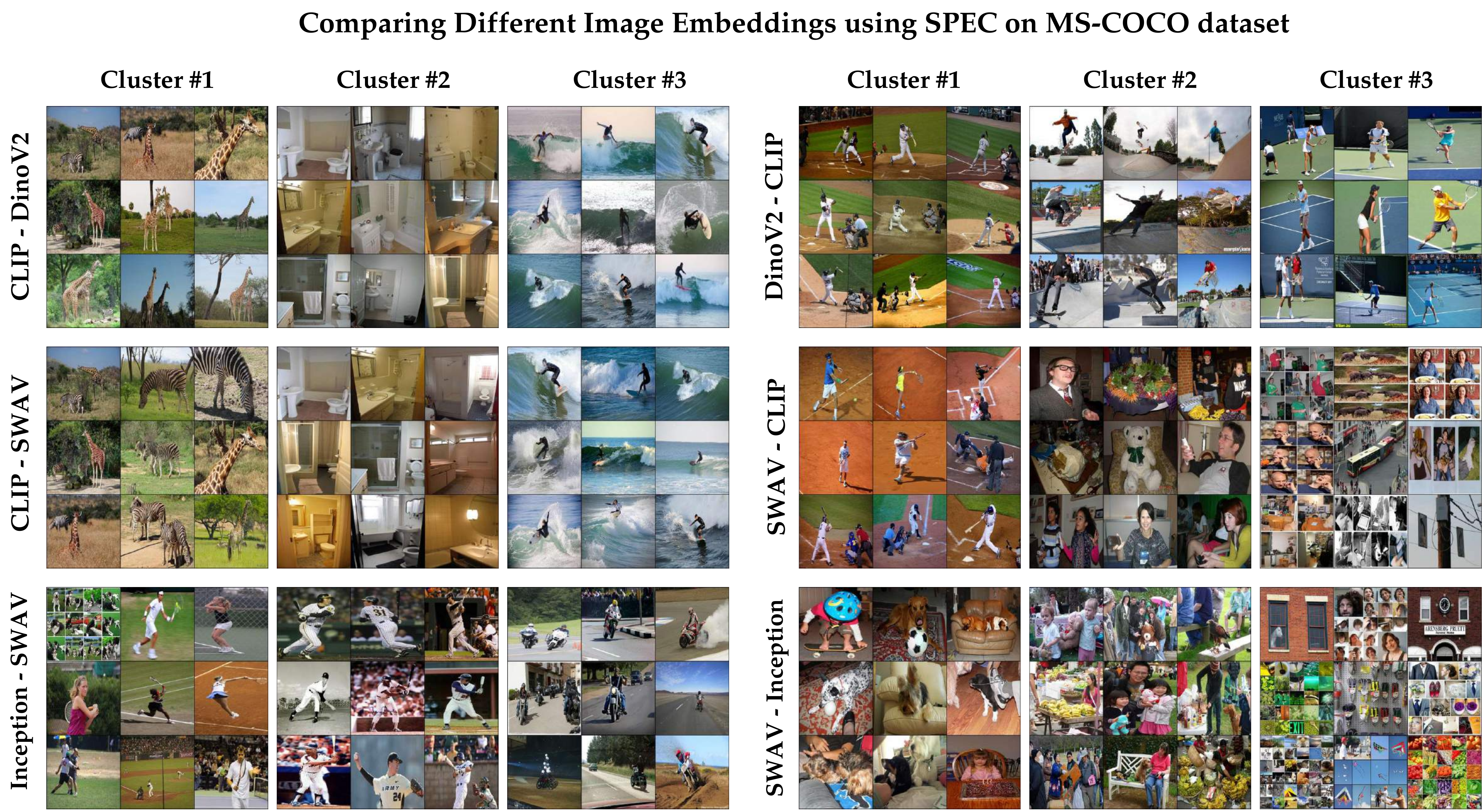}
    \caption{Comparing Different embeddings on the 120K samples from MS-COCO 2017 dataset.}
    \label{fig:mscoco_embeddings}
\end{figure*}

\begin{figure*}
    \centering 
    \includegraphics[width=\linewidth]{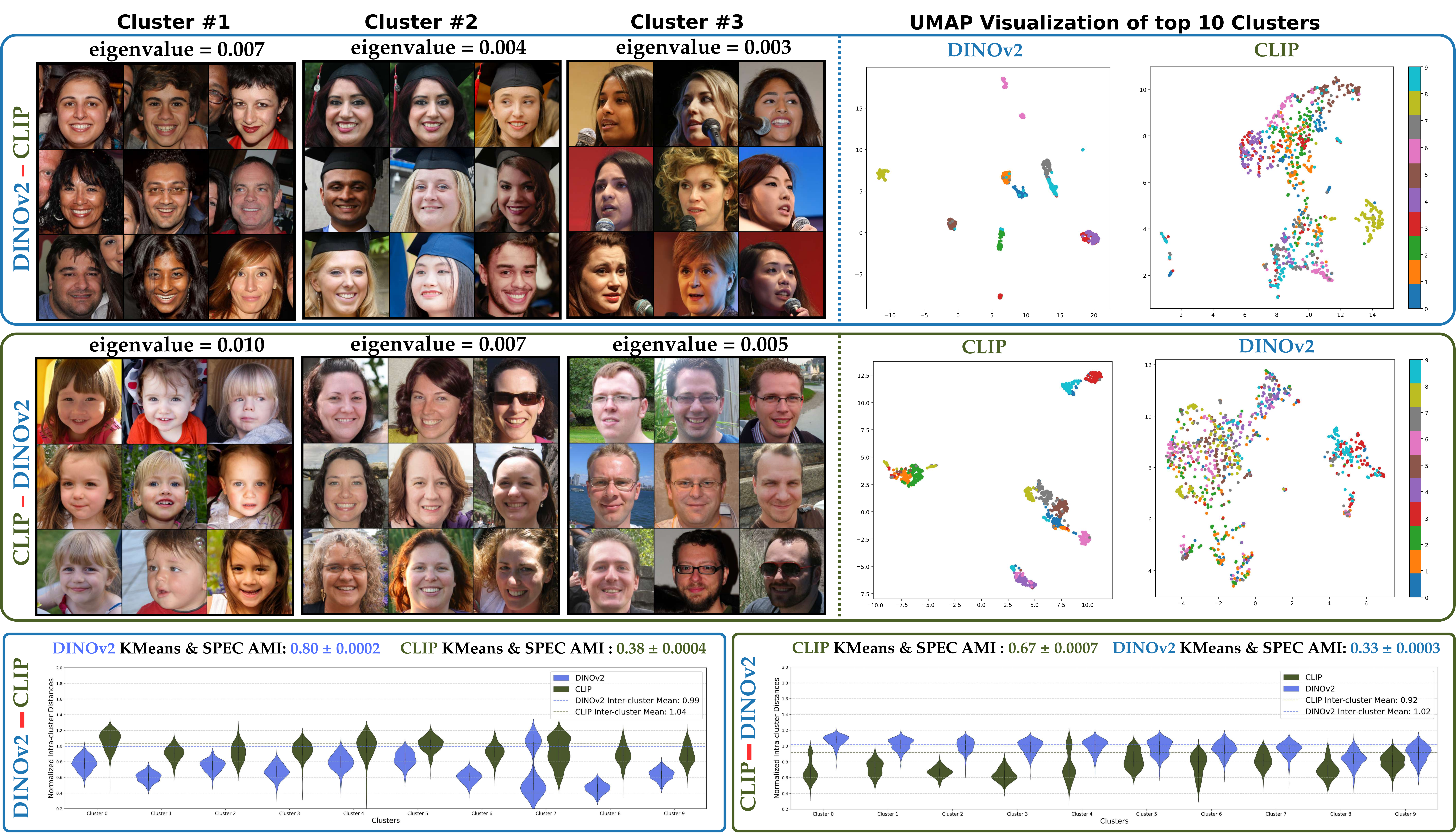}
    \vspace{-3mm}
    \caption{Comparing embeddings on 70K FFHQ samples. Top numbers show SPEC cluster eigenvalues. Last two images per row display UMAP representations of SPEC clusters for each embedding.}
    \label{fig:ffhq}
\end{figure*}

\begin{figure*}
    \centering 
    \includegraphics[width=\linewidth]{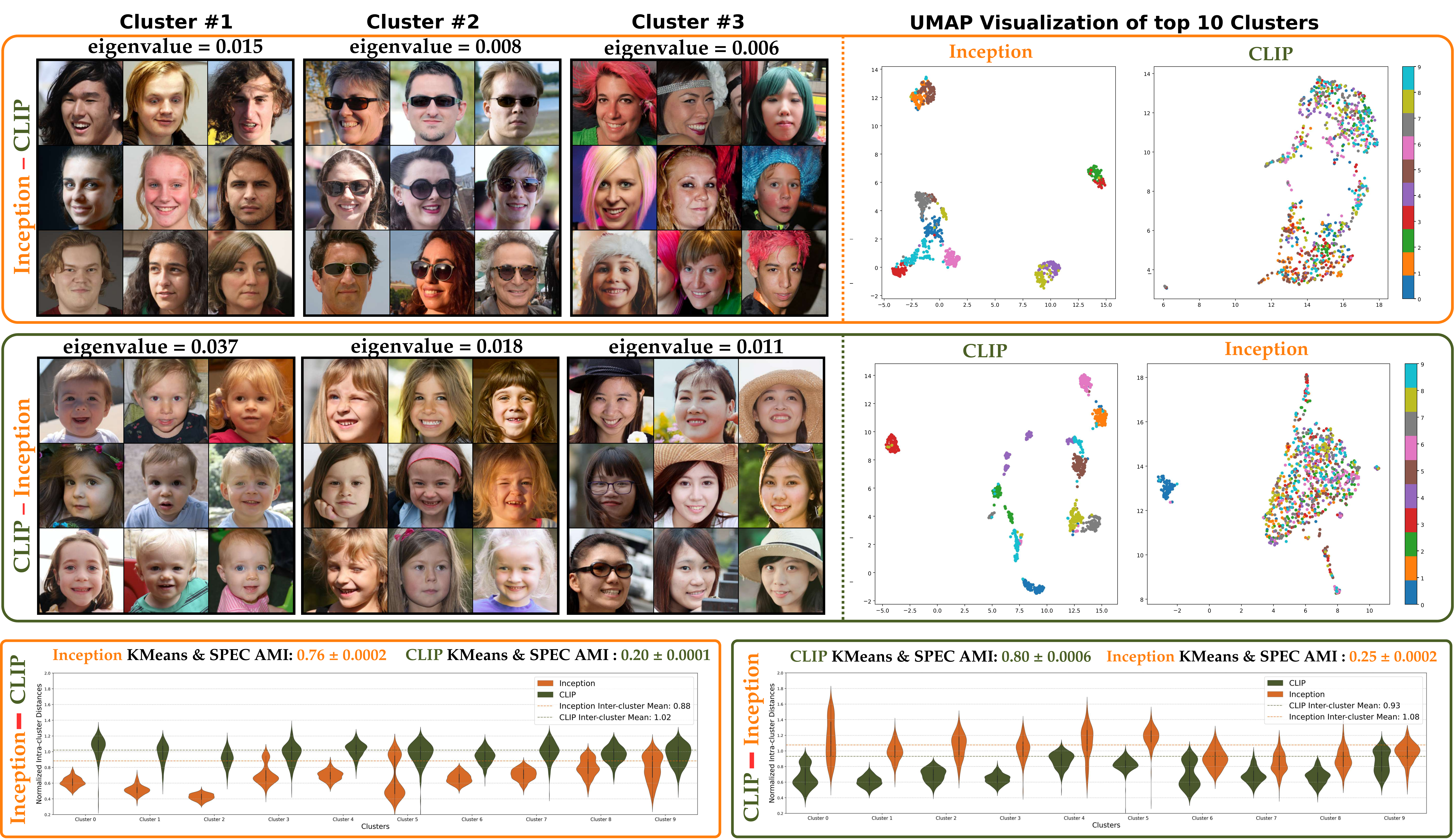}
    \vspace{-3mm}
    \caption{Comparing embeddings on 70K FFHQ samples. Top numbers show SPEC cluster eigenvalues. Last two images per row display UMAP representations of SPEC clusters for each embedding.}
    \label{fig:ffhq2}
\end{figure*}

\begin{figure*}
    \centering 
    \includegraphics[width=\linewidth]{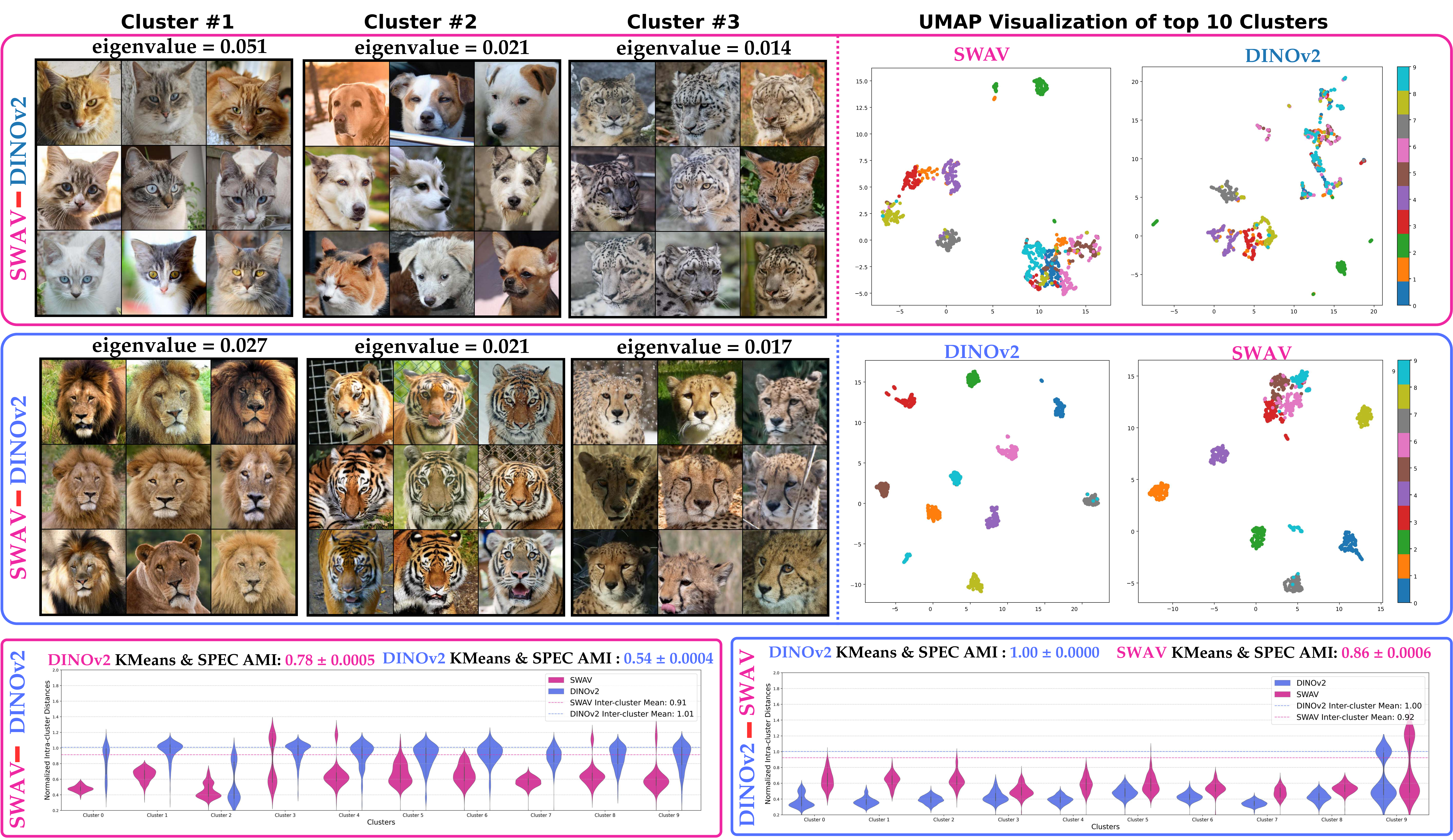}
    \vspace{-3mm}
    \caption{Comparison of different embeddings on 15K samples from the AFHQ dataset, consisting of 5K cats, 5K wildlife, and 5K dogs. The number at the top of each image represents the eigenvalue of the corresponding SPEC cluster. The last two images in each row show the UMAP representation of the SPEC clusters for each embedding individually.}
    \label{fig:afhq2}
\end{figure*}


\subsection{SPEC-align Experiments}

\textbf{SPEC-align using $\ell_2$-Norm  of Kernel Difference Matrix.}
As mentioned in Remark~\ref{Remark l2 spec-diff}, another choice of SPEC-diff uses the $\ell_2$-norm of the eigenvalues of the kernel difference matrix $\Lambda_{\psi_1,\psi_2}$. Similar to the experiment in Figure~\ref{fig:spec_aling_kernels}, we aligned CLIP to DINOv2 using this $\ell_2$-norm-based objective. Figure~\ref{fig:spec diff l2} shows the CLIP kernel before and after alignment, alongside the target DINOv2 kernel. We also plot the SPEC-diff score over iterations, demonstrating successful convergence of the alignment process.

\begin{figure}
    \centering
    \includegraphics[width=\linewidth]{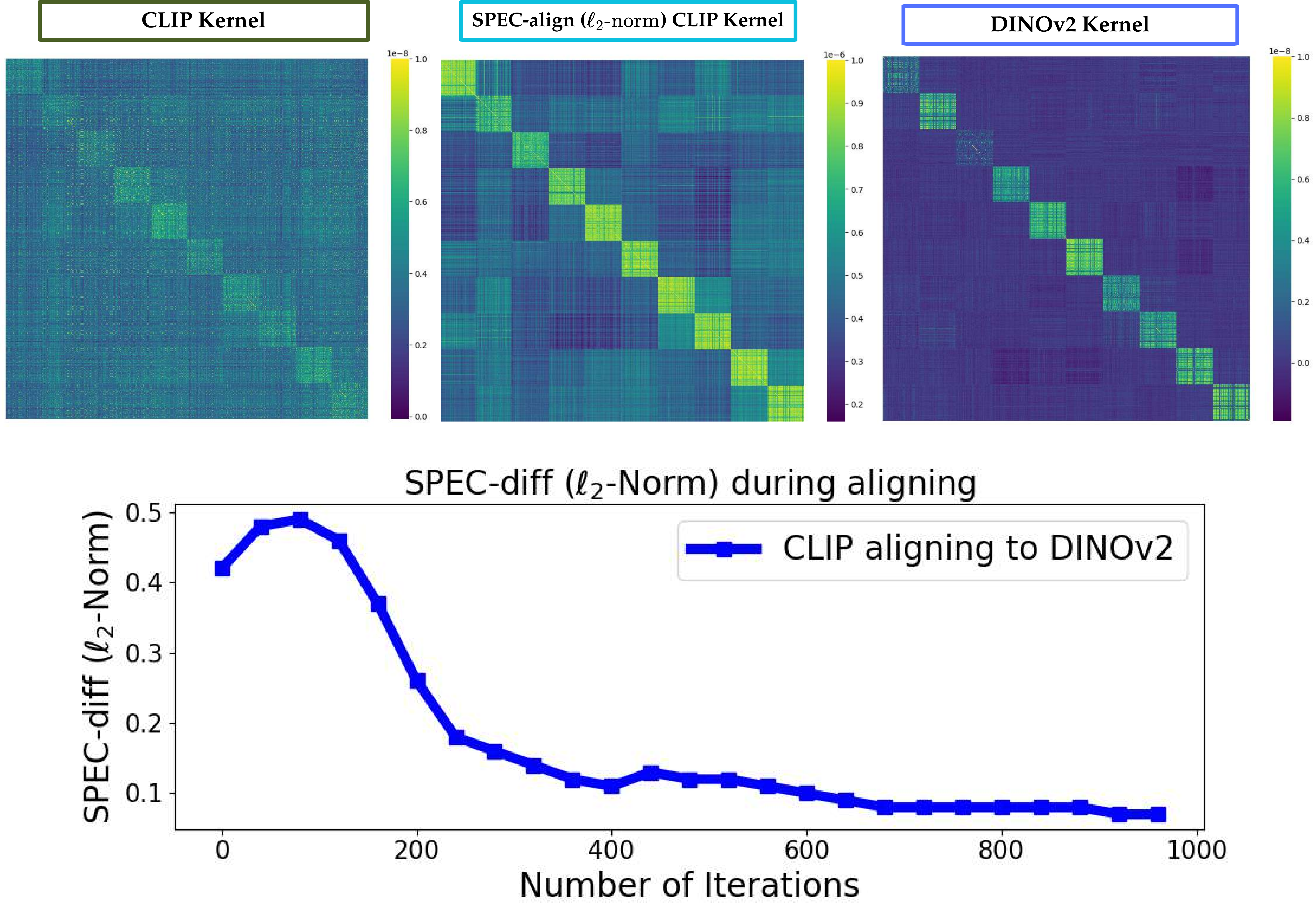}
    \caption{SPEC-align using the $\ell_2$-norm of the eigenvalues of the kernel difference matrix. Left: original CLIP kernel. Center: aligned CLIP kernel using $\ell_2$-norm-based SPEC-align. Right: target DINOv2 kernel. Bottom: SPEC-diff score ($\ell_2$-Norm) over alignment iterations.}
    \label{fig:spec diff l2}
\end{figure}

\textbf{SPEC-align Finetuning Experiments Parameters.}
\label{spec-align-parameters}
The following parameters were used in our experiment. We used the OpenCLIP GitHub repository (\href{https://github.com/mlfoundations/open_clip}{link}) and used the MS-COCO 2017 training set, which consists of ~120K pairs of texts and images. We use SPEC-align with the following parameters and chose DINOv2-Vit-B/14 as our reference model.

\begin{table}[ht]
    \centering
    \begin{tabular}{|l|l|}
        \hline
        \textbf{Parameter} & \textbf{Value} \\
        \hline
        accum\_freq & 1 \\
        alignment\_loss\_weight & 0.1 \\
        batch\_size & 128 \\
        clip\_alignment\_contrastive\_loss\_weight & 0.9 \\
        coca\_contrastive\_loss\_weight & 1.0 \\
        distributed & True \\
        epochs & 10 \\
        lr & 1e-05 \\
        lr\_scheduler & cosine \\
        model & ViT-B-32 \\
        name & Vit-B-32\_laion2b\_e16\_freeze\_5 \\
        precision & amp \\
        pretrained & laion2b\_e16 \\
        seed & 0 \\
        wd & 0.2 \\
        \hline
    \end{tabular}
    \caption{Configuration parameters used in the experiments.}
    \label{tab:config_params}
\end{table}

\begin{table*}[h]
\centering
\caption{Linear evaluation of frozen features on fine-grained benchmarks.}
\begin{tabular}{lllc}
        \toprule 
        Model & Architecture & Data &   Imagenet-1K \\ \midrule
        OpenCLIP & ViT-B/32 & 	LAION 400M     &   73.50 \\
        SPEC-align OpenCLIP & ViT-B/32 & LAION 400M &  76.45\\
        DINOv2 & Vit-B/14 &  LVD-142M   & 78.99\\
        \bottomrule
    \end{tabular}
\label{tab:linear_eval_openclip_dinov2}
\end{table*}

\textbf{Comparison of Kernel matrices for SPEC-align.}
As shown in Figure~\ref{fig:spec_aling_tsne_kernels}, the SPEC-aligned CLIP kernel captures the top four clusters based on image content rather than the overlaid text labels. Furthermore, according to the t-SNE of the models, the SPEC-align cluster of fish is close to the cluster of images with overlaying text of fish showing that SPEC-align captured the similarity of text and image in this experiment while clustered based on the ground truth (cluster of images).

\textbf{Aligning text embeddings.}
In addition, we aligned CLIP text features to the T5-XL model. In Figure~\ref{fig:clip_t5_alignment}, we can observe that the CLIP kernel has become more similar to T5-XL, and the SPEC-diff is also decreasing.

\begin{figure}
    \centering
    \includegraphics[width=0.92\linewidth]{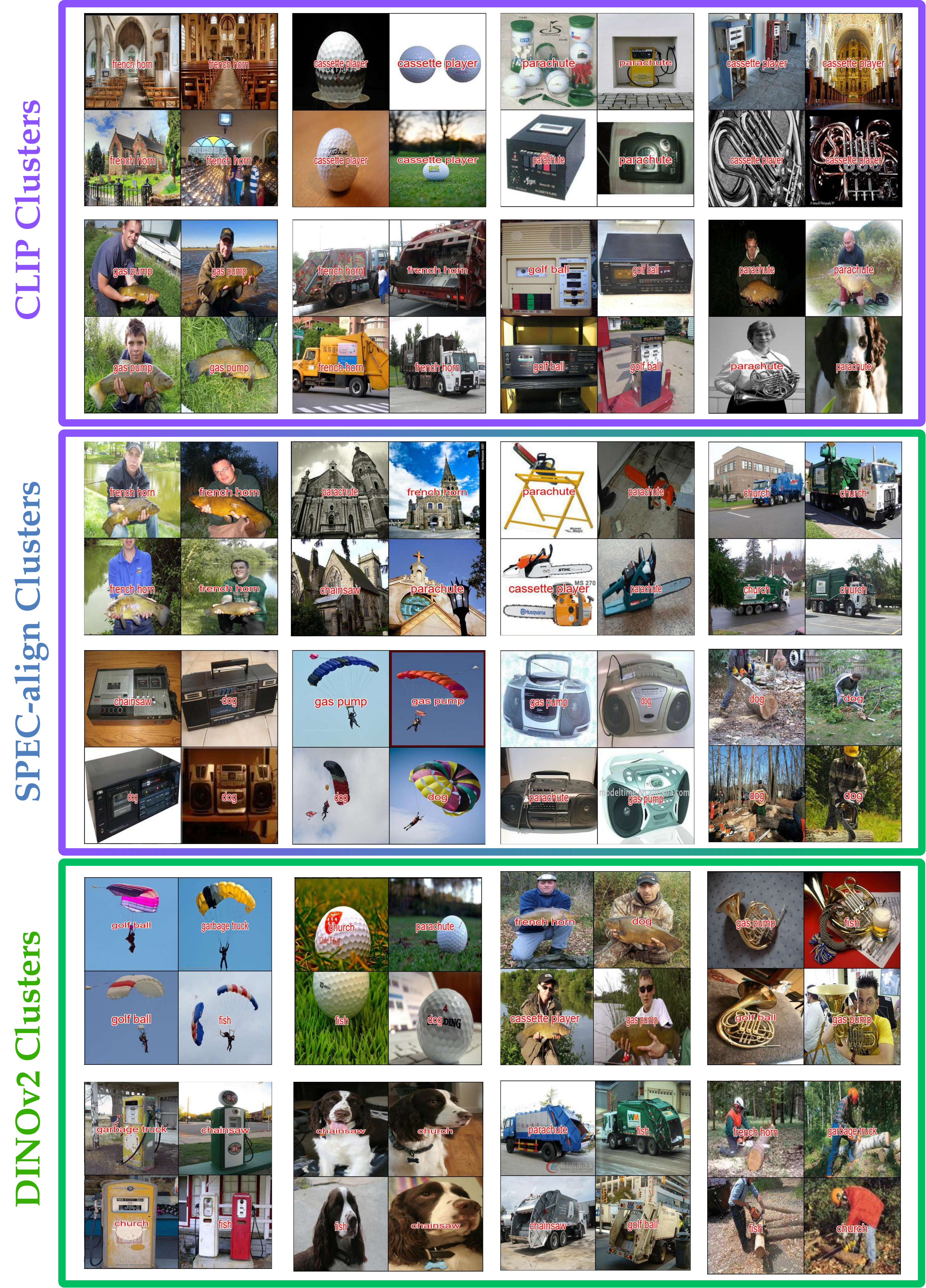}
    \caption{Top 8 Kernel-PCA (Gaussian RBF kernel) clusters for CLIP, DINOv2, and CLIP aligned with DINOv2, trained on the ImageNet dataset.}
    \label{fig:spec_align_imagenette}
\end{figure}

\begin{figure}
    \centering
    \includegraphics[width=0.92\linewidth]{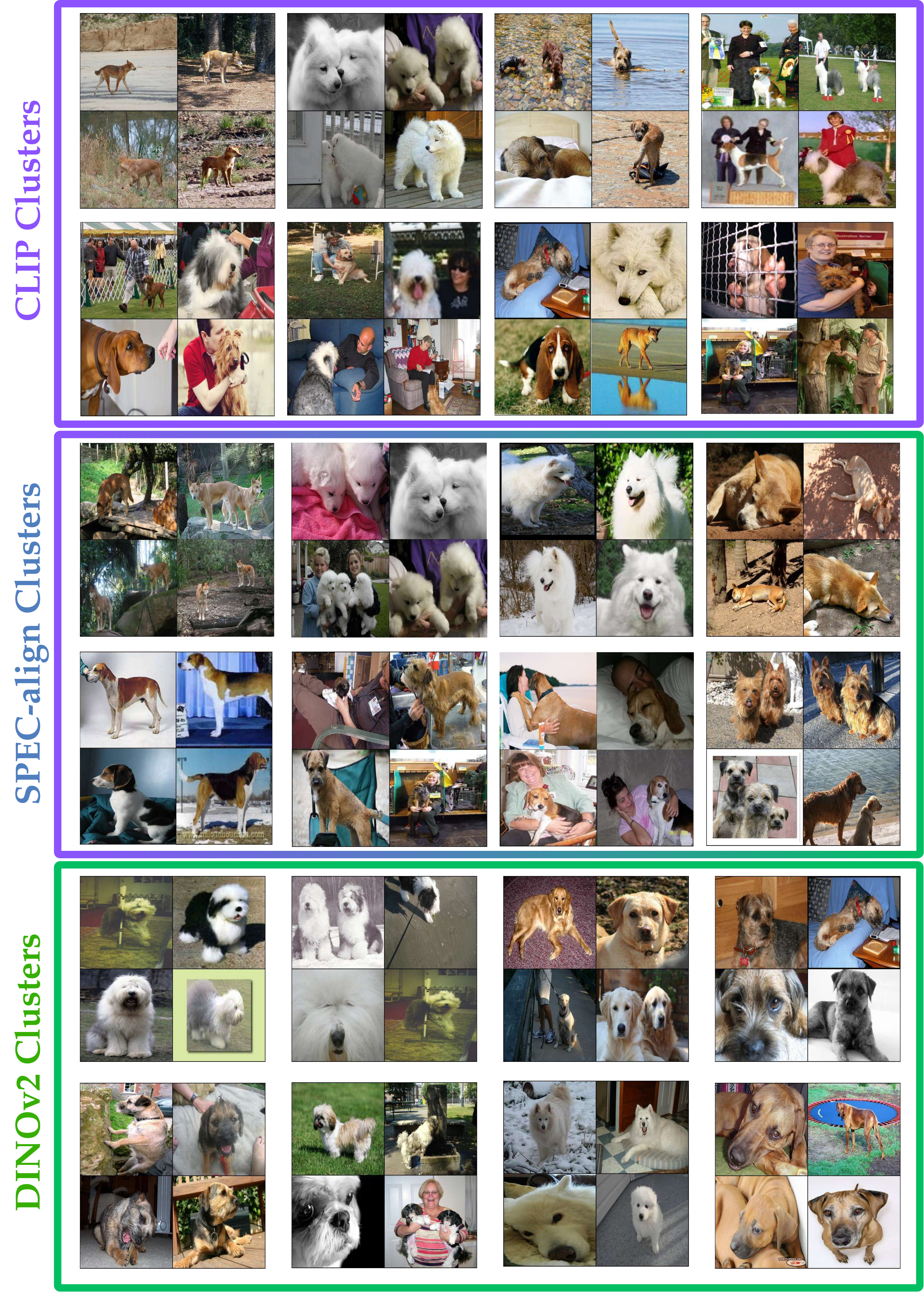}
    \caption{Top 8 Kernel-PCA (Gaussian RBF kernel) clusters for CLIP, DINOv2, and CLIP aligned with DINOv2, trained on the ImageNet dataset.}
    \label{fig:spec_align_imagewoof}
\end{figure}

\begin{figure}
    \centering
    \includegraphics[width=0.9\linewidth]{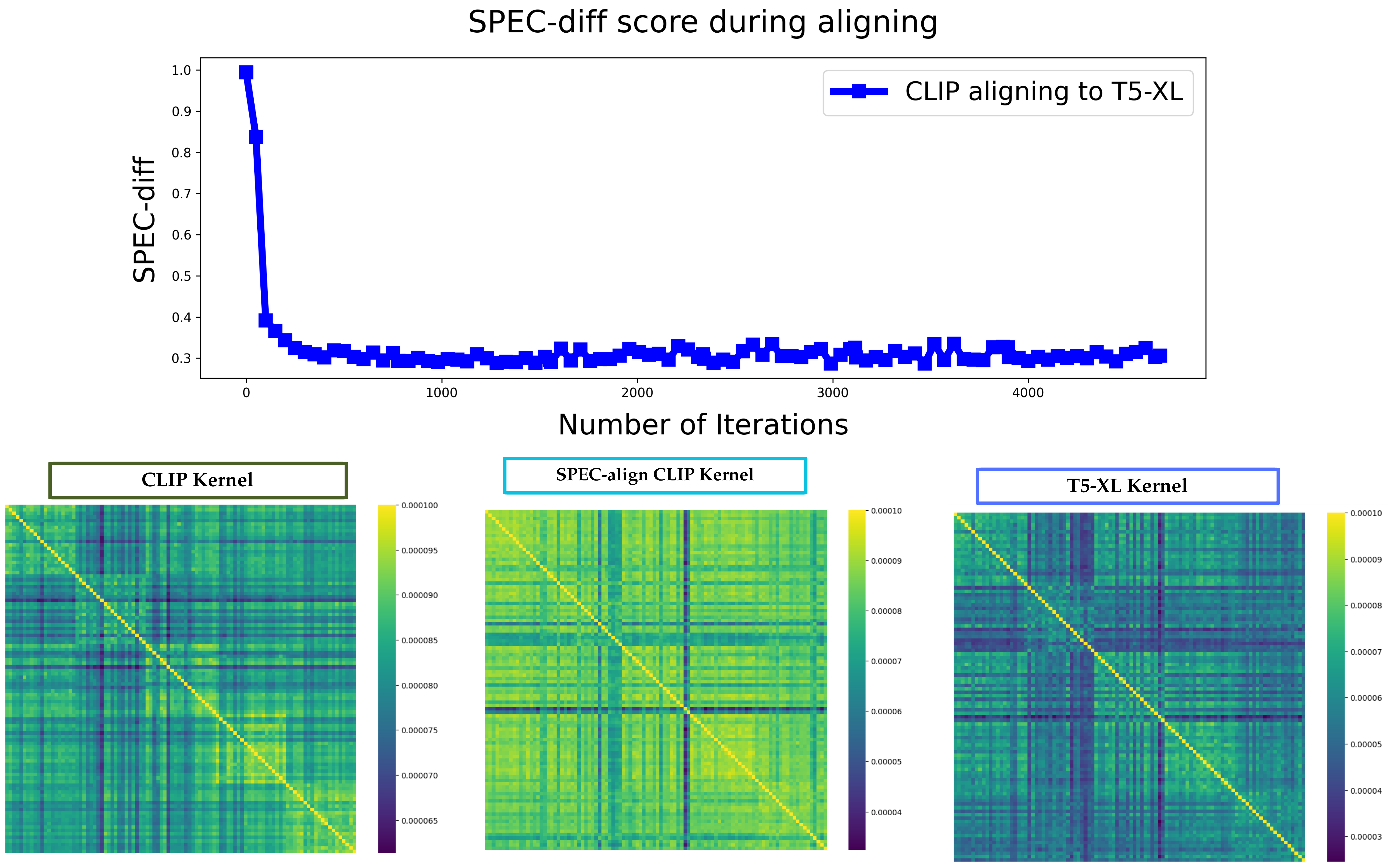}
    \caption{Comparison of Kernel matrices after using SPEC-align to match the sample clusters of CLIP to T5-XL with measuring SPEC-diff during the training.}
    \label{fig:clip_t5_alignment}
\end{figure}

\textbf{Clusters Comparison of SPEC-align.}
We provide additional results by comparing the top 8 Kernel-PCA (Gaussian RBF kernel) clusters of CLIP, DINOv2, and SPEC-align CLIP. We used the CLIP aligned with DINOv2 on the ImageNet training set. We compare the clustering of these embeddings with ImageWoof and the text-overlaid dataset in Figure~\ref{fig:clip_dino_imagenette}. In Figure~\ref{fig:spec_align_imagenette}, we observe the top 8 clusters on the text-overlaid dataset. DINOv2 clusters are based on the images, while CLIP clusters are based on images and texts and in some cases fail to cluster based on the image. On the other hand, SPEC-align CLIP clusters based on images while focusing on the images with the same text, as expected. The top 8 clusters of ImageWoof in Figure~\ref{fig:spec_align_imagewoof} also show that CLIP clusters the dogs based on the gesture or their interaction with humans or the number of dogs, while DINOv2 clusters them only based on their breed. But SPEC-align CLIP clusters dogs based on their breeds while focusing on the gesture or the number of dogs or their interactions with humans.

\begin{figure*}
    \centering
    \includegraphics[width=0.995\linewidth]{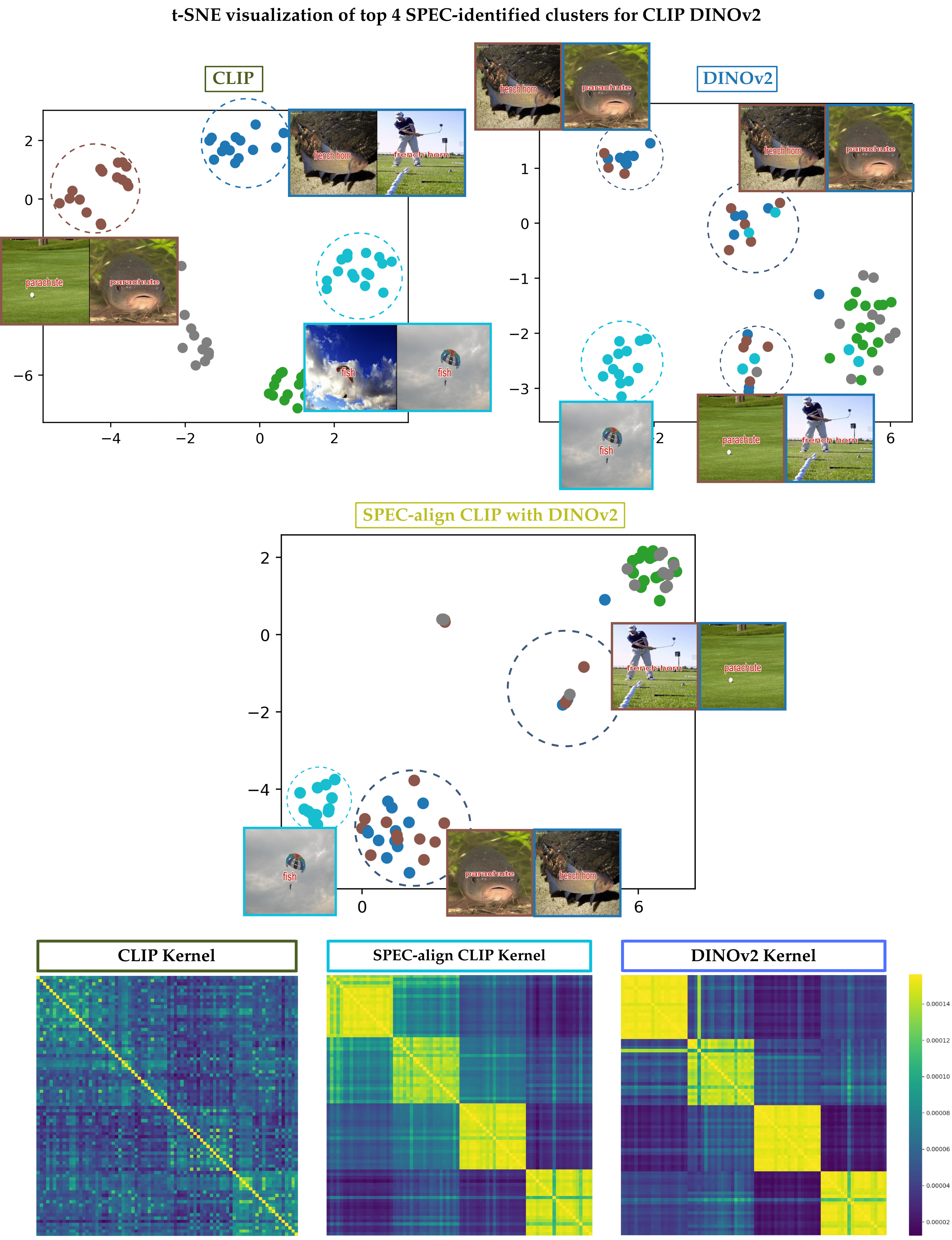}
    \caption{Comparison of Kernel matrices after using SPEC-align to align CLIP to DINOv2.}
    \label{fig:spec_aling_tsne_kernels}
\end{figure*}

\end{document}